\newcommand{\R}{\mathbbm{R}}
\newcommand{\Z}{\mathbbm{Z}}
\newcommand{\C}{\mathbbm{C}}
\newcommand{\N}{\mathbbm{N}}
\newcommand{\cN}{\mathcal{N}}
\newcommand{\cU}{\mathcal{U}}
\newcommand{\cZ}{\mathcal{Z}}
\newcommand{\cL}{\mathcal{L}}
\newcommand{\cO}{\mathcal{O}}
\newcommand{\cV}{\mathcal{V}}
\newcommand{\cT}{\mathfrak{T}}
\newcommand{\sA}{\mathsf{A}}
\newcommand{\sAp}{\mathsf{A}_{\mathsf{P}}}
\newcommand{\sC}{\mathsf{C}}
\newcommand{\sCp}{\mathsf{C}_{\mathsf{P}}}
\newcommand{\sQ}{\mathsf{Q}}
\newcommand{\sK}{\mathsf{K}}
\newcommand{\sT}{\mathsf{T}}
\newcommand{\sF}{\mathsf{F}}
\newcommand{\sE}{\mathsf{E}}
\newcommand{\sS}{\mathsf{S}}
\newcommand{\sG}{\mathsf{G}}
\newcommand{\su}{\mathsf{u}}
\newcommand{\sv}{\mathsf{v}}
\newcommand{\dK}{d_\textrm{\scalebox{.7}{$K$}}}
\newcommand{\sV}{\mathsf{V}}
\newcommand{\sW}{\mathsf{W}}
\newcommand{\cF}{\mathcal{F}}
\newcommand{\bbE}{\mathbb{E}}
\newcommand{\placeholder}{\mathord{\color{black!33}\bullet}}
\newcommand{\range}[2]{\llbracket #1, #2 \rrbracket}
\definecolor{darkred}{rgb}{.7,0,0}
\definecolor{grey}{rgb}{.7,.6,.5}
\definecolor{darkgreen}{rgb}{0.05, 0.5, 0.06}
\definecolor{darkred}{rgb}{0.5, 0.05, 0.06}
\newcommand\mlremove{\bgroup\markoverwith
{\textcolor{red}{\rule[.5ex]{2pt}{1pt}}}\ULon}
\newcommand\asremove{\bgroup\markoverwith
{\textcolor{cyan}{\rule[.5ex]{2pt}{1pt}}}\ULon}
\newcommand\agreeremove{\bgroup\markoverwith
{\textcolor{blue}{\rule[.5ex]{2pt}{1pt}}}\ULon}
\providecommand{\mathbbm}{\mathbb} % In case we don't load bbm
\newcommand{\E}{\mathbb{E}}
\newcommand{\unif}{\mathsf{unif}}
\newcommand\restr[2]{{% we make the whole thing an ordinary symbol
  \left.\kern-\nulldelimiterspace % automatically resize the bar with \right
  #1 % the function
  \littletaller % pretend it's a little taller at normal size
  \right|_{#2} % this is the delimiter
  }}
\newcommand{\littletaller}{\mathchoice{\vphantom{\big|}}{}{}{}}
\title{Continuum Attention for Neural Operators}
\author{\name Edoardo Calvello \email e.calvello@caltech.edu \\
       \addr 
       California Institute of Technology
       \AND
       \name Nikola B. Kovachki \email nkovachki@nvidia.com \\
       \addr NVIDIA
       \AND
       \name Matthew E. Levine \email matt@basis.ai \\
       \addr The Broad Institute of MIT and Harvard \\
       \addr Basis Research Institute
       \AND
       \name Andrew M. Stuart \email astuart@caltech.edu \\
       \addr
       California Institute of Technology}
\begin{document}

\editor{Lorenzo Rosasco}

\maketitle
%\ec{Edo.} \nk{Nik.} \ml{Matt.} \as{Andrew.}
%\nocite{*}

\begin{abstract}
    Transformers, and the attention mechanism in particular, have become ubiquitous in machine learning. Their success in modeling nonlocal, long-range correlations has led to their widespread adoption in natural language processing, computer vision, and time series problems. Neural operators, which map spaces of functions into spaces of functions, are necessarily both nonlinear and nonlocal if they are universal;
    it is thus natural to ask whether the attention mechanism can be used in the design of neural operators. Motivated by this, we study transformers in the function space setting. We formulate attention as a map between infinite dimensional function spaces and prove that the attention mechanism as implemented in practice is a Monte Carlo
    or finite difference approximation of this operator. The function space formulation allows for the design of transformer neural operators, a class of architectures designed to learn mappings between function spaces. In this paper, we state and prove the first universal approximation result for transformer neural operators, using only a slight modification of the architecture implemented in practice. The prohibitive cost of applying the attention operator to functions defined on multi-dimensional domains leads to the need for more efficient attention-based architectures. For this reason we also introduce a function space generalization of the patching strategy from computer vision, and introduce a class of associated neural operators. Numerical results,
    on an array of operator learning problems, demonstrate the promise of our approaches to function space formulations of attention and their use in neural operators.
\end{abstract}

\begin{keywords}
  Operator Learning, Attention, Transformers, Discretization Invariance, Partial Differential Equations.
\end{keywords}

\section{Introduction}
\label{sec:intro}

Attention was introduced in the context of neural machine translation in \citet{bahdanau2014neural} to effectively model correlations in sequential data. Many attention mechanisms used in conjunction with recurrent networks were subsequently developed, for example long short-term memory \citep{hochreiter1997long} and gated recurrent units \citep{chung2014empirical}, which were then employed for a wide range of applications, see for example the review \citet{chung2014empirical}. The seminal work \citet{vaswani2017attention} introduced the now ubiquitous transformer architecture, which relies solely on the attention mechanism and not on the recurrence of the network in order to effectively model nonlocal, long-range correlations. In its full generality, the attention function described in \citet{vaswani2017attention} defines an operation between sequences $\{u_i\}_{i=1}^N\subset  \R^{d_u},\{v_i\}_{i=1}^M\subset  \R^{d_v}$ of lengths $N,M\in\N$ respectively, represented as tensors $u\in \R^{N\times d_u}, v\in \R^{M\times d_v}$, where learnable weights $Q\in \R^{d_K\times d_u},K\in \R^{d_K\times d_v},V\in \R^{d_V\times d_v}$ parametrize 
\begin{equation}
\label{eq:attention_vaswani}
{\tt attention}(u,v):= {\tt softmax}(uQ^TKv^T)vV^T.
\end{equation}
In this context, we recall that the function ${\tt softmax}: \R^{N\times M}\to\R^{N\times M}$ is defined component-wise via its action on the input as
\begin{equation}
\label{eq:softmax}
\bigl({\tt softmax}(w)\bigr)_{ij} = \frac{\exp(w_{ij})}{\sum_{j=1}^M\exp(w_{ij})},
\end{equation}
for any $w\in\R^{N\times M}$. Therefore, in this form the attention function defines a mapping ${\tt attention}: \R^{N \times d_u}\times \R^{M \times d_v} \to \R^{N \times d_V}$.\footnote{We note that in most implementations, the input of the softmax function is rescaled by a factor of $1/\sqrt{d_K}$. In our context, continuum attention, a different scaling will be used.} Alternatively, we can view the output of the attention function as a sequence $\{{\tt attention}(u,v)_{i}\}_{i=1}^N\subset \R^{d_V}$.

Viewed more generally, attention defines a mapping from two sequences $\{u_i\}_{i=1}^N$ and $\{v_i\}_{i=1}^M$ into a third sequence $\{{\tt attention}(u,v)_{i}\}_{i=1}^N$; all three sequences take values in $\R^{r}$ for various values of $r$. In this paper, we generalize this in two ways: (a) we replace index $i$ by index $x\in D$, where $D$ is a bounded open subset in $\R^{d}$; (b) we retain a discrete index $i$ but allow the sequences to take values in Banach spaces of $\R^r$-valued functions defined over $D$, where $D$ is a bounded open subset in $\R^{d}$. We find the first generalization (a) most useful in settings where $d=1$,
that is for time series. However, when discretizing the index set $D\subset \R^d$ with $N\coloneqq n^d$ points, the quadratic scaling of the attention mechanism in $N$ can be prohibitive. This motivates generalization (b) where we retain the discrete index but allow sequences to be function valued, corresponding to patching of functions defined over $D\subset \R^d$, with $d \ge 2.$

Our motivation for these generalizations is operator learning and, in particular, design of discretization invariant neural operator architectures. Such architectures disentangle model parameters from any particular discretization of the input functional data; learning is focused on the intrinsic 
continuum problem and not any particular discretization of it. This allows models to obtain consistent errors when trained with different resolution data, as well as enabling training, testing, and inference at different resolutions. We consider the general setting where $D\subset \R^d$ is a bounded open set and $\cU=\cU(D;\R^{d_u})$ and $\cZ = \cZ(D;\R^{d_z})$ are separable Banach spaces of functions.
To train neural operators we consider the following standard data scenario. The set $\{u^{(j)}, z^{(j)} \}_{j=1}^J$ of input-output observation pairs is such that $\bigl(u^{(j)},z^{(j)}\bigr)\sim\nu$ is an i.i.d. sequence distributed according to the measure $\nu$, which is supported on $\cU\times \cZ $. We let $\mu$ and $\pi$ denote the marginals of $\nu$ on $\cU$ and $\cZ$ respectively, so that $u^{(j)}\sim \mu$ and $z^{(j)}\sim \pi$. In what follows, we consider $\sG^\dagger:\cU\to\cZ$ to be an arbitrary map so that 
\begin{equation}
\label{eq:data_model}
z^{(j)} = \sG^\dagger\bigl(u^{(j)}\bigr),
\end{equation}
noting that such maps will typically be both nonlocal (with respect to domain $D$) and nonlinear. We
assume for simplicity that the data generation mechanism is compatible with \eqref{eq:data_model} so that $\pi = \sG^\dagger_\sharp \mu$, i.e., the data generation mechanism is not corrupted by noise.  Neural operators construct an approximation of $\sG^\dagger$ picked from the parametric class
$$\sG:\cU\times \Theta \to \cZ;$$
equivalently we may write $\sG_\theta:\cU\to\cZ,$ for $\theta\in\Theta$. The set $\Theta$ is assumed to be a finite dimensional parameter space from which a $\theta^*\in\Theta$ is selected so that $\sG(\cdot,\theta^*) = \sG_{\theta^*}\approx \sG^\dagger$.  We then define a cost functional $c:\cZ\times\cZ\to \R$ and define $\theta^*$ by
\begin{equation}
\label{eq:data_assimilation_model}
\theta^*={\arg \min}_{\theta\in\Theta}\E_{u\sim\mu}\Bigl[ c\bigl(\sG(u,\theta), \sG^\dagger(u) \bigr)\Bigr].
\end{equation}
In practice, we will only have access to the values of each $u^{(j)}$ and $z^{(j)}$ at a set of discretization points of the domain $D$, which we denote as $\{x_i\}_{i=1}^N$. 
We seek discretization invariant approaches to operator learning
which allow for sharing of learned parameters $\theta^*$ between different discretizations.

The choice of an architecture defining the approximation family $\sG_\theta$ is sometimes
known as surrogate modeling. We use continuum attention concepts to make this choice.
The resulting transformer based neural operator methodology that we develop here can be used to learn solution operators to parametric PDEs, to solve PDE inverse problems, and may also be applied to solve a class of data assimilation problems. In the context of parametric PDEs we will explore how this algorithmic framework may be employed
in operator learning for Darcy flow and 2d Navier-Stokes problems. As a particular case of operator learning, we will also explore a class of data assimilation problems that involve recovering unobserved trajectories of possibly chaotic dynamical systems. Namely, we illustrate examples involving the Lorenz-63 dynamical system and a controlled ODE.

%the forcing function to the solution at some time $T$ of the 2-$d$ Navier-Stokes equations; in the context of Bayesian inverse problems, the transformer methodology we will describe may be used to recover the initial vorticity in the Navier-Stokes equations from partial, noisy observations of the vorticity at a later time; in the context of a data assimilation problem it may be applied to solve the smoothing problem for the hidden variable $z(t)$, given a trajectory of data $u(t)$ over a time interval $[0,T]$. %In the latter case solving the smoothing problem for the hidden variable $z(t)$ equates to solving for the distribution $z|u$; namely, identifying an approximate map $G$ allows to find a sample from the conditional distribution that is sought, so that
%\[
%G\bigl(u(t)\bigr) + \ec{\xi}\approx \sG^\dagger\bigl(u(t)\bigr)+ \ec{\xi}\sim z|u,
%\]
%\ec{where $\xi\sim \cN(0,\Sigma)$.}\ml{ [This conjures ideas that our $G$ and the true $\sG^\dagger$ are stochastic, which they are not in our formulation (could have a generative model...but we don't.)]}
%We explore applications of this continuous perspective in what follows.

\subsection{Literature Review}

The attention mechanism, as introduced in \citet{vaswani2017attention} and defined in \eqref{eq:attention_vaswani}, has shown widespread success for a variety of tasks.
In particular, given the suitability of attention and transformers for applications involving sequential data, there has been a large body of work extending the methodology from natural language processing, the setting in which it was originally introduced, to the more general context of time series. Namely, as surveyed in \citet{wen2023transformers}, transformers and variants thereof have been applied to time series forecasting, classification and anomaly detection tasks. This generalization has involved modifications and innovations relating to the positional encoding and the design of the attention function itself. %Different types of additive positional encodings beyond the absolute positional encoding from \citet{vaswani2017attention} had been explored for natural language processing applications, including the learnable positional encoding e.g. in \citet{gehring2017convolutional} and a relative positional encoding in \citet{shaw2018self}. % The flexibility of learnable positional encodings was successfully leveraged in time series applications, for example in \citet{lim2021temporal}. 
We refer the reader to \citet{wen2023transformers} for a detailed review of the large body of work and architectural variants of transformers that have been developed for time series applications.
\par
\textbf{Attention in Vision} The quadratic scaling in the input of the attention mechanism has made transformers a computationally prohibitive architecture for applications involving long sequences. Nonetheless, through techniques such as patching, the attention mechanism and transformers have also enjoyed success in computer vision, where high-resolution image data presents an increased computational cost. In fact, in vision transformers (ViT), introduced in \citet{dosovitskyi2021image}, the input image is subdivided into patches; attention is then applied across the shorter sequence of patches, thus decreasing the overall cost. 
\par
\textbf{Neural Operators} Given the success of the methodology in computer vision, there has been growing interest in extending the transformer methodology to the operator learning context. This operator setting concerns learning mappings between infinite dimensional spaces of functions. In this case, the domains on which the functions are defined (spatial and temporal) may inherently be a continuum. Neural operators \citep{kovachki2023neural}, generalizations of neural networks that learn operators mapping between infinite-dimensional spaces, have been developed for this task. As they parametrize operators acting on spaces of functions, the learned parameters of neural operators may be applied to any discretization of the input function; this property of neural operators is referred to as ``discretization invariance''. Discretization invariance is desirable both at a theoretical and practical level. Indeed, for a neural network to approximate an operator acting on functions defined on a continuum, its parameters cannot depend on the discretizations of the input functions that are used for learning the parameters themselves. On the other hand, discretization invariance allows to scalably train models on coarser discretizations of functions and to deploy or finetune trained models at different resolutions for downstream tasks. A recent example of the success of this approach is the GenCast medium-range weather forecasting model \citep{Price2025}, which because of discretization invariance properties is pretrained on low resolutions and finetuned at the desired predictive resolution.
\par
\textbf{Transformers for Operator Learning} In the following paragraphs, we discuss several approaches in the literature employing transformers for operator learning. We first discuss some approaches leading to neural operators, followed by a discussion of architectures that are not discretization invariant.
\par 
\par \uline{\textit{Neural Operators.}} An integral kernel interpretation of the attention mechanism appearing for example in \citet{tsai2019transformer, martins2020sparse,moreno2022kernel,wright2021transformers,cao2021choose,kovachki2023neural} inspired a range of attention-based architectures for operator learning. For example, the work of \citet{cao2021choose} leverages insight from the integral kernel formulation to design novel attention mechanisms that do not employ the softmax function. The author designs attention mechanisms using integral kernels which may be approximated by a Fourier-type projection, leading to quadratic complexity, and by a Galerkin projection, leading to linear complexity. While the resulting attention maps are discretization invariant, the full architecture proposed uses convolutional neural networks to downsample the input to a feature map of sufficiently small resolution for the method to be scalable. Hence the full architecture is not a neural operator. Furthermore, the lack of the softmax nonlinearity leads to a mechanism lacking the probabilistic description outlined in this paper, thus differing from the original definition of attention in \citet{vaswani2017attention}. In \cite{li2024transformer} the authors take the Fourier attention approach as in \cite{cao2021choose} to design a neural operator for large eddy simulation, while in \cite{luo2024hierarchical} the authors take the Galerkin attention approach from \cite{cao2021choose} to design a novel neural operator architecture. In \citet{li2023transformer} the authors present ``OFormer'' a neural operator with an encoder-decoder structure that employs cross-attention to obtain the latent embeddings and then stacked self-attention blocks; the model uses the attention from \citet{cao2021choose} and random Fourier features for query coordinate encodings while using a recurrent multi-layer perceptron for unrolling in time-dependent problems as opposed to masked attention in \citet{vaswani2017attention}. In the recent work \citet{rahman2024pretraining}, the authors propose a neural operator architecture that employs an attention mechanism that acts across channels of the function in the latent embedding, and do so in the context of pre-training a foundation model for PDE solution operators. We demonstrate how such an architecture can be derived from our formulation in Remark~\ref{remark:codomain_attention}. As in our paper, various attention-related concepts are
defined in a function-space setting, but our formulations are more general;
this issue is discussed in detail at the relevant point in the paper. We note that the architectures introduced in this paper rely solely on the attention mechanism defined in the continuum, hence they are neural operators; furthermore, because of the simple design, with a slight modification to our architectures we may prove the first universal approximation theorem for transformer operators.
\par
\uline{\textit{Non discretization invariant architectures.}} In \citet{guibas2022efficient} the authors propose a modification of the Fourier neural operator that involves mixing of tokens in Fourier space. This is achieved by first applying the FFT to the sequence of tokens and then applying a MLP to each token, before inverting back to the original space. This approach differs fundamentally from the original definition of attention in \citet{vaswani2017attention} and lacks the probabilistic description outlined in this paper. Furthermore, because of learnable positional embeddings and patching via convolution, the AFNO is not discretization invariant. Various other approaches have been developed. The work \citet{hao2023gnot} introduces ``GNOT'', an architecture employing a normalized attention layer that handles multiple inputs. This design, along with a learned gating mechanism for input coordinates, allows the model to be applied to irregular meshes and multi-scale problems; however, this architecture employs an attention mechanism that does not involve rescaling by the grid, hence not directly allowing for discretization invariance. In \citet{ovadia2023vito} the authors propose to use an architecture composed of a ViT in the latent space of a U-Net \citep{ronneberger2015unet}. The proposed model takes as input the input field evaluated on a grid with associated coordinate values and outputs an approximation of the solution to PDE inverse problems. While it is shown that the architecture achieves state-of-the-art accuracy, the parameters in the scheme are not invariant to the input function resolution. In \citet{wang2025continuous} the authors design ``Continuous ViT'', an architecture consisting of a ViT encoder and a cross-attention decoder. The method handles input functions defined on space and time and the method is shown to present evidence of discretization invariance, with test discretizations close to training discretizations. However, while the architecture is discretization invariant in time, it is not by design in space. This is because the tokenization procedure employed by standard ViTs is not discretization invariant; we elaborate on this point in Remark \ref{rm:patch_vit}.

\par
 
\textbf{Transformers for Continuum Data} The use of attention-based transformer models in the context of dynamical systems and data assimilation problems, where spatial and temporal domains of the systems are continuous, is at its infancy. For a comprehensive review on the application of machine learning to data assimilation we refer the reader to \citet{cheng2023machine}. In this context, the efficacy of transformers has been explored for forecasting in numerical weather prediction. In \citet{chattopadhyay2022towards}, a transformer module is integrated in the latent space of a U-Net \citep{ronneberger2015unet} architecture in order to leverage the permutation equivariance property of the transformer to improve physical consistency and forecast accuracy. The work in \citet{bi2023accurate} achieves state-of-the-art performance among end-to-end learning methods for numerical weather prediction by applying a ViT-like architecture and employing the shifted window methodology from \citet{liu2021swin}. These approaches, however, are not discretization invariant by design. %\ec{The wide interest in blending ML and data assimilation along with the capability of transformer architectures provides an additional motivation for developing a mathematical framework for attention defined on infinite-dimensional spaces. In \citet{bach2025learning} the authors leverage a formulation of attention as a measure to measure map to design an ensemble method for filtering of dynamical systems.
%}
\par

\textbf{Mathematics of Transformers} Mathematical foundations for the methodology are only now starting to emerge, however most analysis has been preformed in the finite-dimensional setting. In the recent work \citet{geshkovski2023emergence}, the authors study the attention dynamics by viewing tokens as particles in an interacting particle system. The formulation of the continuous-time limit of the dynamics, where the limit is taken in ``layer time'' allows to investigate the emergence of clusters among tokens. In \citet{Yun2020Are}, the authors prove that transformers are universal approximators of continuous permutation equivariant sequence-to-sequence functions with compact support. More broadly, developing a firm theoretical framework for the subject is necessary to understand the empirical performance of the methodology and to inform the design of novel attention-based architectures.
\par
\textbf{This Work} This paper bridges the gap between the recent theoretical developments for the attention mechanism and practical application in operator learning. In fact, starting from a description of the attention mechanism as a map on spaces of functions, we build discretization invariant transformer blocks that may be directly deployed for operator learning tasks. A numerical investigation shows that the ensuing transformer neural operators are competitive with state-of-the-art architectures for a range of operator learning tasks. The framework developed may be used more generally to design larger, more complex architectures for which the universal approximation theorem proved in this paper may potentially be generalized.

\subsection{Contributions and Outline}

Motivated by the success of transformers for sequential data, we set out to formulate the attention mechanism and transformer architectures as mappings between infinite dimensional spaces of functions. In this work we introduce a continuum perspective on the transformer methodology. The resulting theoretical framework enables the design of attention-based neural operator architectures, which find a natural application to operator learning problems. The strength of the approach of devising implementable discrete architectures from the continuum perspective stems from the discretization invariance property that the schemes inherit. Indeed, the resulting neural operator architectures are designed as mappings between infinite dimensional functions spaces that are invariant to the particular discretization, or resolution, at which the input functions are defined. Notably, this allows for zero-shot generalization to different function discretizations. With these insights in view, our main contributions can be summarized as follows:
\begin{enumerate}
    \item
    \label{cont:1}
    Formulation of attention as a mapping between function spaces.
    \item 
    \label{cont:2}
    Approximation theorem that quantifies the error between application of the attention operator to a continuous function and the result of applying its finite dimensional analogue to a discretization of the same function. The relevant results may be found in \Cref{thm:self_attention_limit,thm:cross_attention_limit}.
    \item 
    \label{cont:3}
    A formulation of the transformer architecture as a neural operator, hence as a mapping between function spaces; the resulting scheme is resolution invariant and allows for zero-shot generalization to irregular, non-uniform meshes.
    \item 
    \label{cont:4}
    A first universal approximation theorem for transformer neural operators. The relevant results may be found in \Cref{thm:ua1,thm:ua2}.
    \item 
    \label{cont:5}
    Formulation of patch-attention as a mapping between spaces of functions acting on patch indices. This continuum perspective leads to the design of efficient attention-based neural operator architectures.
    \item 
    \label{cont:6} 
    Numerical evidence of the competitiveness of transformer-based neural
    operators with state-of-the-art operator learning architectures. 
\end{enumerate}

After introducing, in Subsection \ref{subsec:Notation}, the notation that will be used throughout,  Section \ref{sec:attention} is focused on contributions \ref{cont:1} and \ref{cont:2}, formulating attention as acting on functions defined on a continuous domain.  In Section \ref{sec:patch_attention} we formulate the attention mechanism as acting on a sequence of ``patches'' of a function, addressing
contribution \ref{cont:5}.
In Section \ref{sec:transformers_operator} we make use of the operator frameworks developed in Sections \ref{sec:attention} and \ref{sec:patch_attention} to formulate transformer neural operator architectures acting on function space. Thus we address  \ref{cont:3}, whilst the last two subsections concern the algorithmic aspect of extending to patching and address contribution \ref{cont:5}.
In Section \ref{sec:universal_approx} we state and prove a first universal approximation theorem for a transformer-based neural operator, hence addressing contribution \ref{cont:4}. 
Finally, in Section \ref{sec:numerics} we explore applications of the transformer neural operator architectures to a variety of operator learning problems, contribution \ref{cont:6}.

%{\color{red} I am mindful of the fact that people accuse me of being too wordy and I find you to be a bit more wordy than me; so I pruned this! In general I like a very birds eye view at this point in the paper. I mopved stuff about subsections to the start of each section.}

\subsection{Notation}

\label{subsec:Notation}

Throughout we denote the positive integers and non-negative integers
respectively by $\N=\{1,2,\cdots\}$ and $\Z^+=\{0,1,2,\cdots\},$
and the notation $\R=(-\infty,\infty)$ and $\R^+=[0,\infty)$ for the
reals and the non-negative reals. We let $\range{1}{n} \subset\N$ denote the set $\{1,\ldots,n\}$. For a set $D$, we denote by $\Bar{D}$ the closure of the set, i.e., the union of the set itself and the set of all its limit points. We let
$\langle \cdot, \cdot \rangle, |\cdot|$ denote the
Euclidean inner-product and norm, noting that
$|v|^2=\langle v,v \rangle.$ We may also use $|\cdot|$ to denote the cardinality of a set, but the distinction will be clear from context. We write $\langle \cdot, \cdot \rangle_{\R^d}, |\cdot|_{\R^d}$ if we want to make explicit the dimension of the space on which the Euclidean inner product is computed. We will also denote by $|\cdot|_1$ the norm on the vector space $\ell^1$.

We denote by $C(D;\R^d)$ the infinite dimensional Banach space of continuous functions mapping the set $D$ to the $d$-dimensional vector space $\R^d$. The space is endowed with the supremum norm. Similarly, we denote by $L^2(D;\R^d)$ the infinite dimensional space of square integrable functions mapping $D$ to $\R^d$. The space is endowed with the $L^2$ inner product, which induces the norm on the space. We will sometimes use the shorthand notation $C(D)$ and $L^2(D)$ to denote continuous functions and square integrable functions defined on the domain $D$, respectively, when the image space is irrelevant. For integers $s\geq0$ we denote by $C^s$ the space of continuously differentiable functions up to order $s$; furthermore, for $p\in [1,\infty)$, we denote by $W^{s,p}$ the Sobolev space of functions possessing weak derivatives up to order $s$ of finite $L^p$ norm. We use the notation $H^s$ to denote $W^{s,2}$. Throughout, general vector spaces will be written using a calligraphic font $\cU$. We denote by $\cL(\cU,\cV)$ the infinite-dimensional space of linear operators mapping the vector space $\cU$ to the vector space $\cV$. Throughout, we use different fonts to highlight the difference between operators acting on finite dimensional spaces, such as $Q$, and operators acting on infinite dimensional spaces, such as $\sQ$. We use $\cF$ to denote the Fourier transform, while $\cF^{-1}$ will denote the inverse Fourier transform.
%{\color{red} Define norm and inner product on $L^2$ and norm on $C$; also do you need norm on space of linear operators?}

We use $\bbE$ to denote expectation under
the prevailing probability measure; if we wish to make clear
that measure $\pi$ is the prevailing probability measure then we
write $\bbE_\pi$. We use $\mathcal{N}(m,C)$ to denote a Gaussian with mean $m$ and covariance $C$; we also use the same notation in the infinite dimensional context do denote a Gaussian measure with covariance operator $C$; whether we are in the finite dimensional setting or the infinite dimensional one will be clear from context. We also use the notation $\unif(D)$ to denote a uniform distribution defined over the domain $D$. 

%%%%%%%%%%%%%%%%%%%%%%%%%%%%%%

\section{Continuum Attention}
\label{sec:attention}

In this section we formulate a continuum analogue of the attention mechanism described in \citet{vaswani2017attention}. We note that in the transformer architecture of \citet{vaswani2017attention}, a distinction is made between the self-attention and cross-attention functions. The former, a particular instance of \eqref{eq:attention_vaswani} where $u=v$, can be used to produce a representation of the input that captures intra-sequence correlations. On the other hand, cross-attention is used to output a representation of the input $u$ that models cross-correlations between both inputs $u$ and $v$. Indeed, in \citet{vaswani2017attention} cross-attention is employed in the decoder component of the transformer architecture so that the input to the decoder attends to the output of the encoder. For natural language applications in particular, the form of the cross-attention function has allowed for sequences of arbitrary length $N$ to be outputted from a transformer architecture while attending to sequences of different lengths, e.g. as outputs of length $M$ from a self-attention encoder block. 

For pedagogical purposes we subdivide the presentation between the self-attention operator and the cross-attention operator, which are treated in Subsection \ref{subsec:self-attention} and Subsection \ref{subsec:cross-attention} respectively. In Subsection \ref{subsubsec:self-attention_discrete} we define self-attention for discrete sequences indexed on finite bounded sets. Here, we outline the interpretation of self-attention as an expectation under a family of carefully defined discrete probability measures, each acting on the space of sequence indices. We show that under this formulation we recover the standard definition of self-attention from \citet{vaswani2017attention}. We proceed in Subsection \ref{subsubsec:self-attention_cont} by formulating self-attention as an operator mapping between spaces of functions defined on bounded uncountable sets. This interpretation is natural, as functions may be thought of as ``sequences'' indexed over an uncountable domain. As in the discrete case, we formulate the self-attention operator as an expectation under a carefully defined probability measure acting on the domain of the input. We obtain an approximation result given by Theorem \ref{thm:self_attention_limit} that shows that for continuous functions the self-attention mechanism as implemented in practice may be viewed as a Monte Carlo approximation of the self-attention operator described. We mimic these constructions in Subsections \ref{subsubsec:cross-attention_discrete} and \ref{subsubsec:cross-attention_cont} for the discrete and continuous formulations of cross-attention, respectively, and obtain an analogous approximation result given by Theorem \ref{thm:cross_attention_limit}.
 As in \eqref{eq:attention_vaswani}, throughout this section $Q\in \R^{d_K\times d_u},K\in \R^{d_K\times d_v},V\in \R^{d_V\times d_v}$ are learnable weights that parametrize the attention operators. When we consider self-attention we will have $d_v=d_u.$

\subsection{Self-Attention}
\label{subsec:self-attention}

In the following discussion we define the self-attention operator in the discrete setting and in the continuum setting, in Subsections \ref{subsubsec:self-attention_discrete} and \ref{subsubsec:self-attention_cont}, respectively. 

\subsubsection{Sequences over \texorpdfstring{$D^N \subset \Z$}{}}
\label{subsubsec:self-attention_discrete}

We begin by considering the finite, bounded set $D^N= \{1, \cdots, N\}.$ We let $u: D^N \to \R^{d_u}$ be a sequence and $j \in D^N$ an index so that $u(j)\in \R^{d_u}$. 
We will now focus on formulating the attention mechanism in a manner that allows generalization
to more general domains $D^N;$ at the end of this subsection we connect our formulation with
the more standard one used in the literature, which is specific to the current choice of
domain $D^N= \{1, \cdots, N\}.$ We define the following discrete probability measure.

\begin{definition}
\label{d:1}
We define a probability measure on $D^N$, parameterized by $(u,j)$, via the probability mass function $p(\cdot;u,j): D^N \to [0,1]$ defined by
\[p(k; u, j) = \frac{\mathrm{exp} \Big ( \big\langle Q u(j), K u(k) \big\rangle_{\R^{\dK}} \Big )}{\sum_{\ell \in D^N} \mathrm{exp} \Big ( \big\langle Q u(j), K u(\ell) \big\rangle_{\R^{\dK}} \Big )},\]
for any $k \in D^N$.
\end{definition}

We may now define the self-attention operator as an expectation over this measure $p$ of the linear transformation $V\in \R^{d_V\times d_u}$ applied to $u$.

\begin{definition}
\label{d:2}
We define the self-attention operator $\sA^N$ as a mapping from a $\R^{d_u}$-valued sequence over $D^N$, $u$, into a $\R^{d_V}$-valued sequence over $D^N$, $\sA^N(u)$, that takes the form
\[\sA^N(u)(j):=  \E_{k \sim p(k;u,j)} [Vu(k)],\]
for any $j\in D^N$.
\end{definition}

To connect these definitions with the standard formulation of transformers, 
note that sequence $u$ can be reformulated as the matrix $u \in \R^{N \times d_u}.$ Then
\[
\{uQ^TKu^T\}_{jk}=\big\langle Q u_j, K u_k \big\rangle_{\R^{\dK}};
\]
that is, the $\{j,k\}-$entry of $uQ^TKu^T \in \R^{N \times N}$ is the right-hand side of the preceding identity, where $u_\ell$ denotes the $\ell$'th row of $u\in\R^{N\times d_u}$.
It is then apparent that applying the {\tt softmax} function along rows of $uQ^TKu^T$ delivers the vector
of probabilities defined in Definition \ref{d:1}, indexed by $k;$ note that $j$ is a parameter in this vector
of probabilities, indicating the row of the matrix $uQ^TKu^T$ along which the {\tt softmax} operation is applied.
Finally the
attention operation from Definition \ref{d:2} can be rewritten to act between matrices as
\[ {\sA^N(u)} = {\tt attention}(u,u):= {\tt softmax}(uQ^TKu^T)uV^T,\]
so that ${\tt attention}(u,u)\in\R^{N \times d_V}$.

\begin{remark}[Sequences over $D^N \subset \Z^d$]
\label{remark:self-attention_discrete_multidim}

Once the attention mechanism is formulated as in Definitions \ref{d:1}, \ref{d:2}, it is straightforward 
to extend to (generalized) sequences indexed over bounded subsets of $\Z^{d};$ note, for example, that pixellated
images are naturally indexed over bounded subsets of $\Z^2.$
Let $D^N \subset \Z^{d}$ possess cardinality $|D^N|=N.$ Let $u: D^N \to \R^{d_u}$ be a sequence and $j \in D^N$ an index.
Then the definition of probability on $D^N$, indexed by $(u,j)$, and the resulting definition of
self-attention operator $\sA$, is exactly as in Definitions \ref{d:1}, \ref{d:2}. This shows the power
of non-standard notation we have employed here.
\end{remark}

\subsubsection{Sequences over \texorpdfstring{$D \subset \R^d$}{}}
\label{subsubsec:self-attention_cont}

We note that the preceding two formulations of the attention mechanism view it as a transformation defined
as an expectation applied to the input sequence. The expectation is with respect to a probability
measure supported on an index of the input sequence. Furthermore, the expectation itself depends on the
input sequence to which it is applied, and is parameterized by the input sequence index, resulting in a new 
output sequence of the same length as the input sequence. The formulation using an expectation results from
the softmax part of the definition of the attention mechanism; the query and key linear operations
define the expectation, and the value linear operation lifts the output sequence to take values in a 
(possibly) different Euclidean space than the input sequence. 
These components may be readily extended to work with (generalized) sequences 
defined over open subsets of $\R^d$. (These would often be referred to as functions, but we call them
generalized sequences to emphasize similarity with the discrete case.)

Let $D \subseteq \R^d$ be an open set. Let $u: D \to \R^{d_u}$ be a sequence (function) and $x \in D$ an index.

\begin{definition}
\label{d:3}
We define a probability measure on $D$, parameterized by $(u,x)$, via the probability density function
$p(\cdot;u,x): D \to \R^+$ defined by
\[p(y; u, x) = \frac{\mathrm{exp} \Big ( \big\langle Q u(x), K u(y) \big\rangle_{\R^{\dK}} \Big )}{\int_D \mathrm{exp} \Big ( \big\langle Q u(x), K u(s) \big\rangle_{\R^{\dK}} \Big ) \: \mathrm{d}s},\]
for any $y \in D$.
\end{definition}

\begin{definition}
\label{d:4}
We define the self-attention operator $\sA$ as a mapping from a $\R^{d_u}$-valued sequence (function) over $D$, $u$, into a $\R^{d_V}$-valued sequence (function) over $D$, $\sA(u)$, as follows:
\[\sA(u)(x) = \E_{y \sim p(y;u,x)} [V u(y)],\]
for any $x\in D$.
\end{definition}

A connection between the discrete and continuum formulations of self-attention is captured in the following theorem.

\begin{theorem}
\label{thm:self_attention_limit}
The self-attention operator $\sA$ may be viewed as a mapping \(\sA : L^{\infty} (D;\R^{d_u}) \to L^{\infty} (D;\R^{d_V})\)
and thus as a mapping
\(\sA : C (\Bar{D};\R^{d_u}) \to C (\Bar{D};\R^{d_V})\). Furthermore, for any compact set \(B \subset C(\Bar{D};\R^{d_u})\),
\[ \lim_{N \to \infty} \sup_{u \in B} \E \left \| \sA(u) - \frac{\sum_{j=1}^N \mathrm{exp} \big ( \langle Qu(\cdot), Ku(y_j)  \rangle \big ) V u(y_j)}{\sum_{\ell=1}^N \mathrm{exp} \big ( \langle Qu(\cdot), Ku(y_\ell)  \rangle \big )} \right \|_{C(\Bar{D};\R^{d_V})}  = 0,\]
with the expectation taken over i.i.d. sequences \(\{y_j\}_{j=1}^N \sim \unif(\Bar{D})\).
\end{theorem}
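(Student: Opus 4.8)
The plan is to establish the three assertions in turn, with the Monte Carlo limit being the only substantive one. For the mapping property I would fix $u\in L^\infty(D;\R^{d_u})$, set $R:=\|u\|_{L^\infty}$ and $c:=\|Q\|_{\mathrm{op}}\|K\|_{\mathrm{op}}R^2$, and note that since $D$ is bounded one has $e^{-c}\le \exp\!\big(\langle Qu(x),Ku(s)\rangle\big)\le e^{c}$ for a.e.\ $x,s\in D$; hence the density $p(\cdot;u,x)$ of Definition~\ref{d:3} is well defined and $|\sA(u)(x)|\le e^{2c}\|V\|_{\mathrm{op}}R$, so $\sA$ maps $L^\infty(D;\R^{d_u})$ into $L^\infty(D;\R^{d_V})$. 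For $u\in C(\bar{D};\R^{d_u})$ the same bounds hold with $R=\|u\|_{C(\bar{D})}$, and continuity of $x\mapsto\sA(u)(x)$ follows because $x\mapsto\exp\!\big(\langle Qu(x),Ku(s)\rangle\big)$ is continuous uniformly in $s$ (uniform continuity of $u$ on the compact set $\bar{D}$, together with $|e^{a}-e^{b}|\le e^{\max(|a|,|b|)}|a-b|$), so numerator and denominator of $\sA(u)(x)$ are continuous in $x$ and the denominator is bounded below by $|D|e^{-c}>0$.

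For the limit I would write the empirical object as $g_N(x):=S_N^{(1)}(x)/S_N^{(2)}(x)$ with $S_N^{(1)}(x):=\tfrac1N\sum_{j=1}^{N}\exp\!\big(\langle Qu(x),Ku(y_j)\rangle\big)Vu(y_j)$ and $S_N^{(2)}(x):=\tfrac1N\sum_{\ell=1}^{N}\exp\!\big(\langle Qu(x),Ku(y_\ell)\rangle\big)$ (the factors $\tfrac1N$ cancel, so $g_N$ is the quotient in the statement), and let $\bar I_i(x):=\E\,S_N^{(i)}(x)$ be the corresponding $\unif(\bar{D})$-expectations, so $\sA(u)(x)=\bar I_1(x)/\bar I_2(x)$. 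Compactness of $B$ in $C(\bar{D})$ gives $R:=\sup_{u\in B}\|u\|_{C(\bar{D})}<\infty$, hence $S_N^{(2)}(x),\bar I_2(x)\in[e^{-c},e^{c}]$ and $|\bar I_1(x)|\le e^{c}\|V\|_{\mathrm{op}}R$ uniformly over $u\in B$. The identity $\tfrac{a}{b}-\tfrac{A}{B}=\tfrac{a-A}{b}-\tfrac{A(b-B)}{bB}$ then gives
\[
\sup_{u\in B}\E\|g_N-\sA(u)\|_{C(\bar{D})}\le e^{c}\sup_{u\in B}\E\|S_N^{(1)}-\bar I_1\|_{C(\bar{D})}+e^{3c}\|V\|_{\mathrm{op}}R\,\sup_{u\in B}\E\|S_N^{(2)}-\bar I_2\|_{C(\bar{D})},
\]
so it reduces to proving the uniform law of large numbers $\sup_{u\in B}\E\|S_N^{(i)}-\bar I_i\|_{C(\bar{D})}\to0$ for $i=1,2$.

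That last reduction is where the compactness hypothesis must do the real work, and it is the step I expect to be the main obstacle. By the Arzel\`a--Ascoli characterization, $B$ is uniformly equicontinuous: there is a modulus $\omega$ with $\omega(0^{+})=0$ and $|u(x)-u(x')|\le\omega(|x-x'|)$ for all $u\in B$. Using $|e^{a}-e^{b}|\le e^{c}|a-b|$ on the relevant range, this yields a single modulus $\omega_1:=e^{c}\|Q\|_{\mathrm{op}}\|K\|_{\mathrm{op}}R\,\omega$ (and $\omega_2:=\|V\|_{\mathrm{op}}R\,\omega_1$) controlling the $x$-oscillation of $\exp\!\big(\langle Qu(x),Ku(y)\rangle\big)$, resp.\ of that quantity times $Vu(y)$, uniformly in $y\in\bar{D}$ and $u\in B$; hence $x\mapsto S_N^{(i)}(x)-\bar I_i(x)$ is equicontinuous with modulus $2\omega_i$ for every realization of $\{y_j\}$ and every $u\in B$. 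I would then fix $\delta>0$ and a $\delta$-net $\{x_m\}_{m=1}^{M(\delta)}$ of the compact set $\bar{D}$ (with $M(\delta)<\infty$ independent of $u$), so that
\[
\|S_N^{(i)}-\bar I_i\|_{C(\bar{D})}\le\max_{1\le m\le M(\delta)}\big|S_N^{(i)}(x_m)-\bar I_i(x_m)\big|+2\omega_i(\delta).
\]
For each fixed $x_m$, $S_N^{(i)}(x_m)$ is an average of $N$ i.i.d.\ random variables bounded in norm by $C_B:=e^{c}\max(1,\|V\|_{\mathrm{op}}R)$ with mean $\bar I_i(x_m)$, so by the usual $L^2$ estimate $\E\,\big|S_N^{(i)}(x_m)-\bar I_i(x_m)\big|\le C_B N^{-1/2}$. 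Taking expectations, summing over the net, and taking the supremum over $u\in B$ gives $\sup_{u\in B}\E\|S_N^{(i)}-\bar I_i\|_{C(\bar{D})}\le M(\delta)C_B N^{-1/2}+2\omega_i(\delta)$, a bound free of $u$; given $\varepsilon>0$ one chooses $\delta$ with $2\omega_i(\delta)<\varepsilon/2$ and then $N$ with $M(\delta)C_B N^{-1/2}<\varepsilon/2$. A small measurability remark closes the argument: for each realization of $\{y_j\}$ the function $g_N-\sA(u)$ is continuous on the separable space $\bar{D}$, so $\|g_N-\sA(u)\|_{C(\bar{D})}$ is a measurable function of $\{y_j\}$ and the expectations above are legitimate.
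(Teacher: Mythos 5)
Your proposal is correct, and it reaches the conclusion by a genuinely different route from the paper. Both arguments share the same skeleton — the mapping/continuity part is proved identically via the two-sided bound $e^{-c}\le \exp\langle Qu(x),Ku(y)\rangle\le e^{c}$, and both split the quotient into numerator and denominator Monte Carlo errors — but the way the sup-norm and the uniformity over $B$ are obtained differs. You prove a quantitative uniform law of large numbers directly in $C(\Bar{D})$: compactness of $B$ is used through the Arzel\`a--Ascoli \emph{characterization} (uniform boundedness and a common modulus of continuity $\omega$), which gives equicontinuity in $x$ of the integrands uniformly over $u\in B$ and $y\in\Bar{D}$; a finite $\delta$-net of the spatial domain $\Bar{D}$ plus a pointwise $O(N^{-1/2})$ bound at each net point then yields the explicit estimate $M(\delta)C_BN^{-1/2}+2\omega_i(\delta)$, which is independent of $u$ from the outset. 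The paper instead proceeds in three stages: an $L^2(D)$ Monte Carlo lemma gives, for each fixed $u$, $L^2$ convergence of the quotient; this is upgraded to $C(\Bar{D})$ convergence by an Arzel\`a--Ascoli subsequence-and-uniqueness-of-limits argument applied to the random quotients $\alpha_l^{(N)}$; and uniformity over $u\in B$ is then recovered by a separate finite $\epsilon$-net of the compact set $B$ together with equicontinuity of the maps $u\mapsto\alpha_l^{(N)}(\cdot;u)$. Your version buys a more direct and fully quantitative bound (explicit dependence on the covering number of $\Bar{D}$ and the modulus of $B$) and avoids both the subsequence extraction and the second net over $B$; the paper's version modularizes the probabilistic content into an $L^2$ lemma that it reuses verbatim for the cross-attention theorem, and treats the two compactness structures ($\Bar{D}$ and $B$) in separate, reusable steps. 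The only implicit assumption you make — identifying $\E_{y\sim\unif(\Bar{D})}$ with normalized integrals over $D$, i.e. $|\Bar{D}\setminus D|=0$ — is also made in the paper's Lemma, so it is not a gap relative to the paper's own standard.
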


\begin{proof}
    The proof of this result is developed in Appendix \ref{Appendix:self}.
\end{proof}

\begin{remark}
This result connects our continuum formulation of self-attention with the standard definition, using
Monte Carlo approximation of the relevant integrals; a similar result could be proved using finite
difference approximation on, for example, a uniform grid. We note that it is not possible to obtain a uniform convergence rate for all $u\in B$ in the norm of $C(\Bar{D};\mathbb{R}^{d_V})$ as the Riemann integral may exhibit arbitrarily slow convergence depending on the regularity of the integrand.
\end{remark}

\subsection{Cross-Attention}
\label{subsec:cross-attention}

In the following discussion we define the cross-attention operator in the discrete setting and in the continuum setting, in Subsections \ref{subsubsec:cross-attention_discrete} and \ref{subsubsec:cross-attention_cont}, respectively. 

%I wouldn't do this \ec{Throughout this subsection we redefine $Q \in \R^{h \times o}$ so that
%it acts on $\R^o$ instead of $\R^n.$}

\subsubsection{Sequences over \texorpdfstring{$D^N \subset \Z^d,E^M\subset\Z^e$}{} }
\label{subsubsec:cross-attention_discrete}

We begin by letting $D^N \subseteq \Z^d$ and $E^M \subseteq \Z^e$ be finite sets of points with cardinalities $N$ and $M$
respectively. Let $u: D^N \to \R^{d_u}$ be a sequence (function),
$v: E^M \to \R^{d_v}$ another sequence (function), and $j \in D^N$, $k \in \E^M$ indices so that $u(j)\in\R^{d_u}$ and $v(k)\in \R^{d_v}$. We define the following discrete probability measure.
\begin{definition}
\label{d:5}
We define a probability measure on $D^N$, parameterized by $(u,v,j)$, via the probability mass function
$q(\cdot;u,v,j): D^N \to [0,1]$ defined by
\[q(k; u,v, j) = \frac{\mathrm{exp} \Big ( \big\langle Q u(j), K v(k) \big\rangle_{\R^{\dK}} \Big )}{\sum_{\ell \in E^M} \mathrm{exp} \Big ( \big\langle Q u(j), K v(\ell)\big\rangle_{\R^{\dK}} \Big )},\]
for any $k \in E^M$.
\end{definition}
We may now define the cross-attention operator as an expectation over this measure $q$ of the linear transformation $V\in \R^{d_V\times d_v}$ applied to $v$.
\begin{definition}
\label{d:6}
We define the cross-attention operator $\sC^N$ as a mapping from a $\R^{d_u}$-valued sequence (function) over $D^N$, $u$, 
and a $\R^{d_v}$-valued sequence (function) over $E^M$, $v$, into a $\R^{d_V}$-valued sequence (function) over $D^N$, $\sC^N(u,v)$, as follows:
\[\sC^N(u,v)(j) = \E_{k \sim q(k;u,v,j)} [V v(k)],\]
for any $j\in D^N$.
\end{definition}

\subsubsection{Sequences over \texorpdfstring{$D \subset \R^d, E\subset \R^e$}{}}
\label{subsubsec:cross-attention_cont}

The framework described in Subsection \ref{subsubsec:cross-attention_discrete} may be readily extended to work with generalized sequences (functions) defined over open subsets of $\R^d$. Indeed, let $D \subseteq \R^d$ and $E \subseteq \R^e$ be open sets. Let $u: D \to \R^{d_u}$ be a sequence (function),
$v: E \to \R^{d_v}$ another sequence (function), and $x \in D$, $y \in E$ indices. We define the following probability measure over $D$ before providing the definition of the cross-attention operator.

\begin{definition}
\label{d:7}
We define a probability measure on $D$, parameterized by $(u,v,x)$, via the probability density function
$q(\cdot;u,v,x): D \to \R^+$ defined by
\[q(y; u,v, x) = \frac{\mathrm{exp} \Big ( \big\langle Q u(x), K v(y) \big\rangle_{\R^{\dK}} \Big )}{\int_{E} \mathrm{exp} \Big ( \big\langle Q u(x), K v(s) \big\rangle_{\R^{\dK}} \Big ) \: \mathrm{d}s},\]
for any $y \in {E}$.
\end{definition}

\begin{definition}
\label{d:8}
We define the cross-attention operator $\sC$ as a mapping from a $\R^{d_u}$-valued sequence (function) over $D$, $u$, 
and a $\R^{d_v}$-valued sequence (function) over $E$, $v$, into a $\R^{d_V}$-valued sequence (function) over $D$, $\sC(u,v)$, as follows:
\[\sC(u,v)(x) = \E_{y \sim q(y;u,v,x)} [V v(y)],\]
for any $x\in D$.
\end{definition}

A connection between the discrete and continuum formulations of cross-attention is captured in the following theorem.

\begin{theorem}
\label{thm:cross_attention_limit}
    The cross-attention operator $\sC$ may be viewed as a mapping $\sC:L^{\infty} (D;\R^{d_u})\times L^{\infty}(E;\R^{d_v}) \to L^{\infty} (D;\R^{d_V})$, and so as a mapping $\sC: C(\Bar{D};\R^{d_u})\times C(\Bar{E};\R^{d_v})\to C(\Bar{D};\R^{d_V})$. Furthermore, for any compact set $B\subset C(\Bar{D};\R^{d_u})\times C(\Bar{E};\R^{d_v})$

    \[ \lim_{N \to \infty} \sup_{(u,v) \in B} \E \left \| \sC(u,v) - \frac{\sum_{j=1}^N \mathrm{exp} \big ( \langle Qu(\cdot), Kv(y_j)  \rangle \big ) V v(y_j)}{\sum_{\ell=1}^N \mathrm{exp} \big ( \langle Qu(\cdot), Kv(y_\ell)  \rangle \big )} \right \|_{C(\Bar{D};\R^{d_V})}  = 0,\]
    with the expectation taken over i.i.d. sequences \(\{y_j\}_{j=1}^N \sim \unif(\Bar{E})\).

\end{theorem}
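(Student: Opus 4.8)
The plan is to mirror the structure that must have been used for Theorem~\ref{thm:self_attention_limit}, since cross-attention differs only in that the query function $u$ on $D$ and the key/value function $v$ on $E$ are decoupled. First I would establish the mapping property: given $(u,v)\in L^\infty(D;\R^{d_u})\times L^\infty(E;\R^{d_v})$, the integrand $s\mapsto \exp(\langle Qu(x),Kv(s)\rangle)$ is bounded above and below by positive constants on $E$ (using $|E|<\infty$, which follows from $E\subset\R^e$ bounded open — this should be an assumption, consistent with the earlier sections), so the normalizing integral is finite and bounded away from zero, and $q(\cdot;u,v,x)$ is a genuine probability density. Then $\sC(u,v)(x)=\int_E Vv(y)\,q(y;u,v,x)\,\mathrm{d}y$ is well-defined and bounded by $\|V\|\,\|v\|_{L^\infty}$, giving the $L^\infty$ statement. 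For the continuity statement, I would note that when $u\in C(\bar D)$ and $v\in C(\bar E)$, the map $x\mapsto \exp(\langle Qu(x),Kv(y)\rangle)$ is continuous uniformly in $y$ (since $u,v$ are bounded and $\exp$ is Lipschitz on bounded sets), hence both numerator and denominator of $\sC(u,v)(x)$ are continuous in $x$, and the denominator is bounded below, so the quotient is continuous; thus $\sC(u,v)\in C(\bar D;\R^{d_V})$.

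For the Monte Carlo convergence, fix the empirical approximation
\[
\sC_N(u,v)(x) \coloneqq \frac{\sum_{j=1}^N \exp\big(\langle Qu(x),Kv(y_j)\rangle\big)\,Vv(y_j)}{\sum_{\ell=1}^N \exp\big(\langle Qu(x),Kv(y_\ell)\rangle\big)},
\]
with $\{y_j\}\sim\unif(\bar E)$ i.i.d. Writing $f_x(y)\coloneqq \exp(\langle Qu(x),Kv(y)\rangle)$ and $g_x(y)\coloneqq f_x(y)Vv(y)$, both numerator and denominator are $|E|$ times standard Monte Carlo averages of $g_x$ and $f_x$ against $\unif(\bar E)$, whose expectations are $\int_E g_x$ and $\int_E f_x$. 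The key point is that the denominator average concentrates near $\frac{1}{|E|}\int_E f_x \ge c > 0$ uniformly over $x\in\bar D$ and $(u,v)\in B$, because $B$ compact in $C(\bar D)\times C(\bar E)$ gives uniform bounds $\|u\|_\infty\le R$, $\|v\|_\infty\le R$, hence $f_x(y)\in[e^{-CR^2},e^{CR^2}]$ with constants depending only on $\|Q\|,\|K\|,R$. So I would decompose
\[
\sC_N(u,v)(x)-\sC(u,v)(x)=\frac{\hat g_N(x)}{\hat f_N(x)}-\frac{\E\hat g_N(x)}{\E\hat f_N(x)},
\]
and bound this by $\frac{1}{\hat f_N(x)}|\hat g_N(x)-\E\hat g_N(x)| + \frac{\E\hat g_N(x)}{\hat f_N(x)\,\E\hat f_N(x)}|\hat f_N(x)-\E\hat f_N(x)|$, where $\hat f_N,\hat g_N$ denote the empirical averages. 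On the event that $\hat f_N(x)\ge c/2$ for all $x$ simultaneously, this is $O(\sup_x|\hat g_N(x)-\E\hat g_N(x)| + \sup_x|\hat f_N(x)-\E\hat f_N(x)|)$.

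The remaining work is to show that $\E\sup_{x\in\bar D}|\hat f_N(x)-\E\hat f_N(x)|\to 0$ and similarly for $\hat g_N$, uniformly over $(u,v)\in B$, and to control the small-probability bad event where $\hat f_N$ dips below $c/2$. For the uniform law of large numbers over $x$, I would use that $\{f_x : x\in\bar D, (u,v)\in B\}$ is a uniformly bounded, uniformly equicontinuous family of functions on $\bar E$ (equicontinuity in $y$ comes from $v$ ranging in a compact, hence equicontinuous, subset of $C(\bar E)$, composed with Lipschitz maps), which yields a uniform Glivenko–Cantelli / bracketing argument; alternatively a direct $\varepsilon$-net argument over $\bar D\times B$ combined with a pointwise SLLN and a union bound works. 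The bad event is handled by a simple concentration bound (Hoeffding, since $f_x$ is bounded) showing $\bbP(\exists x:\hat f_N(x)<c/2)\to 0$, and on that event the quotient is still deterministically bounded by $\|V\|R$, so its contribution to the expectation vanishes. I expect the main obstacle to be making the uniform-in-$(u,v)\in B$ and uniform-in-$x$ control fully rigorous — i.e., establishing the uniform Glivenko–Cantelli property for the function class indexed by the compact set $B$ and the domain $\bar D$ simultaneously; this is where compactness of $B$ (Arzelà–Ascoli giving equicontinuity) does the real work. Since this is structurally identical to the obstacle already resolved in the proof of Theorem~\ref{thm:self_attention_limit} (with $v=u$), I would in fact phrase the argument so as to reduce to, or directly reuse, that appendix's lemmas, noting only the bookkeeping change that the averaging and the uniform bounds now take place over $\bar E$ rather than $\bar D$ and that continuity in $x$ passes through $u$ alone.
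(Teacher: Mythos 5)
Your proposal is correct, and it reaches the conclusion by a genuinely different mechanism in the key uniformity step. The paper's proof works componentwise with $\alpha_l=b_l/a$ and $\alpha_l^{(N)}=b_l^{(N)}/a^{(N)}$: it first proves an $L^2$-in-$x$ Monte Carlo bound with rate $N^{-1/2}$ (Lemma~\ref{lemma:expectation_of_l2_cross} plus a quotient estimate using the uniform upper/lower bounds $I,J$ on the kernel), then upgrades to convergence in $C(\Bar{D})$ for fixed $(u,v)$ by showing the random functions $\alpha_l^{(N)}$ are uniformly bounded and a.s.\ uniformly equicontinuous in $x$ and invoking Arzel\`a--Ascoli together with uniqueness of limits, and finally obtains uniformity over the compact set $B$ by a finite $\epsilon$-net in $B$ combined with moduli of continuity for the maps $(u,v)\mapsto\alpha_l(\cdot;u,v)$ and $(u,v)\mapsto\alpha_l^{(N)}(\cdot;u,v)$. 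You instead control the numerator and denominator empirical averages directly by a uniform law of large numbers over the compact index set $\Bar{D}\times B$ (total boundedness of the induced class in $C(\Bar{E})$ via Arzel\`a--Ascoli, then an $\varepsilon$-net with Hoeffding/SLLN and a union bound), and conclude through the quotient decomposition with the pointwise lower bound on the denominator. Your route is more quantitative (it yields sup-norm rates up to covering numbers and avoids the qualitative subsequence/uniqueness argument), while the paper's route isolates a clean reusable $L^2$ Monte Carlo lemma and defers all uniformity to soft compactness arguments; both hinge on the same two facts, namely the uniform bounds $e^{-RM^2}\le f\le e^{RM^2}$ on $B$ and compactness of $B$. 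Two small remarks: your ``bad event'' $\{\hat f_N(x)<c/2\}$ is vacuous, since each term $f_x(y_j)\ge e^{-CR^2}$ gives a deterministic lower bound on the empirical denominator (the paper uses exactly this through its lower bound on $\|a^{(N)}\|$), so the Hoeffding step for that event can be dropped; and your implicit assumptions that $E$ is bounded and that sampling from $\unif(\Bar{E})$ represents the integral over $E$ are the same ones the paper makes (it takes $E$ bounded open in its lemmas), so they are consistent rather than additional.
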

\begin{proof}
    The proof of this result is developed in Appendix \ref{Appendix:cross}, which contains straightforward generalizations of the techniques in Appendix \ref{Appendix:self}.
\end{proof}

%%%%%%%%%%%%%%%%%%%%%%%%%%%%%%%%%%%%%%%%%%%%%%%%%%%%%%%%%%%%%%%%%%%%%%%%%%%%%%%%%%%%%%%%%%
\section{Continuum Patched Attention}
\label{sec:patch_attention}

Patches of a function are defined as the restriction of the function itself to elements of a partition of the domain. We may then define patch-attention as a mapping between spaces of functions defined on patch indices.  
Interest in applying the attention mechanism for computer vision led to the development of methodologies to overcome its prohibitive computational cost. Patching, as employed in vision transformers (ViT), involves subdividing the data space domain into $P\in \mathbb{N}$ ``patches'' and applying attention to a sequence of flattened patches. This step drastically reduces the complexity of the attention mechanism, which is quadratic in the input sequence length. An analogue of the patching strategy in vision transformers can be considered in the function space setting. Indeed, in image space, attention can be applied across subsets of pixels (patches); in function space, attention can be applied across the function defined on elements of a partition of the domain. Such a generalization to functions defined on the continuum allows for the development of architectures that are mesh-invariant. We describe the patched-attention methodology in the continuum framework developed thus far. 

Throughout this section, we let $D\subset\R^d$ be a bounded open set and let $D\coloneqq D_1 \cup \cdots \cup D_P$ be a uniform partition of the space $D$ so that $D_j\cong D'$ {are congruent} for all $j\in\range{1}{P}$ for some $D'\subset D$, where $P\in\mathbb{N}$ represents the number of patches. We note that we require a uniform partition, namely a partition of subsets of equivalent finite volume, in order to define operators that can be applied to functions defined on each of these subsets; however, for practical application we may have a different number of discretization points within each subset. %For conciseness, we only outline the definition of patched self-attention, which will be the object of the neural operator architectures we subsequently develop. The following can readily be generalized to patched cross-attention, as done in Section \ref{sec:attention}. 
We consider a function defined on the full domain $D$, denoted as $u\in \cU\bigl( D; \R^{d_u} \bigr)$. We define a mapping 
\begin{equation}
\widetilde{u}:\range{1}{P} \to \cU(D' ,\mathbb{R}^{d_u}),
\end{equation}
that defines the patched version of the function $u\in \cU\bigl( D; \R^{d_u} \bigr)$, so that
\begin{equation}
\widetilde{u}(p)= \restr{u}{D_p},
\end{equation}
for any $p\in\range{1}{P}$. We proceed to establish the definitions of the patched self-attention operator $\sAp$ in Subsection \ref{subsec:patch-self-attention} and for completeness the patched cross-attention operator $\sCp$ in Subsection \ref{subsec:patch-cross-attention}.

\subsection{Patched Self-Attention}
\label{subsec:patch-self-attention}

We begin with the definition of the patched self-attention operator $\sAp$. Indeed, we let the operators $\sQ,\sK,\sV$ be defined such that $\sQ:\cU(D' ,\mathbb{R}^{d_u})\to L^2(D' ,\mathbb{R}^{d_K}), \sK:\cU(D' ,\mathbb{R}^{d_u})\to L^2(D' ,\mathbb{R}^{d_K}), \sV: \cU(D' ,\mathbb{R}^{d_u})\to \cU(D' ,\mathbb{R}^{d_V}) $ and let $j \in \range{1}{P}$ be an index.

\begin{definition}
\label{def:patch_attention_p}
Define a probability measure on $\range{1}{P}$, parameterized by $(\widetilde{u},j)$, via the probability density function $p(\cdot;\widetilde{u},j): \range{1}{P} \to [0,1]$ defined by
\[p(k; \widetilde{u}, j) = \frac{\mathrm{exp} \Big ( \big\langle \sQ \widetilde{u}(j), \sK \widetilde{u}(k) \big\rangle_{L^2(D' ,\mathbb{R}^{d_K})} \Big )}{\sum_{\ell=1}^P \mathrm{exp} \Big ( \big\langle \sQ \widetilde{u}(j), \sK \widetilde{u}(\ell) \big\rangle_{L^2(D' ,\mathbb{R}^{d_K})} \Big )},\]
for any $k \in \range{1}{P}$.\footnote{We note that as $\sQ$ is linear, $\sQ u(j)$ is shorthand notation for $\sQ\bigl(u(j) \bigr)$, and similarly for $\sK$ and $\sV$.}
\end{definition}

\begin{definition}
\label{def:patch_attention}
The self-attention operator $\sAp$ maps the function taking a patch index to a $\R^{d_u}$-valued function over $D'$, $\widetilde{u}$, into a function taking a patch index to a $\R^{d_V}$-valued function over $D'$, $\sAp(\widetilde{u})$, as follows:
\[\sAp(\widetilde{u})(j) = \E_{k \sim p(k;\widetilde{u},j)} [\sV \widetilde{u}(k)],\]
for any $j\in\range{1}{P}$.
\end{definition}

\begin{remark}[Patched Attention in ViT]
\label{rm:patch_vit}
    An image may be viewed as a mapping $u: D \to \R^{d_u}$, where $D\subset \Z^2$, and where $d_u=1$ for grey-scale and $d_u=3$ for RGB-valued images.  In the context of vision transformers \citep{dosovitskyi2021image}, the input to the attention mechanism is a sequence of $P$ ``flattened'' patches of the input image; we next outline the details to this procedure. 
    
    Letting $N$ be the total number of discretization points in $D$ and $P$ the number of patches, in the discrete setting each function patch $\widetilde{u}(p):D'\to \R^{d_u}$ may be represented as an $\R^{N/P \times d_u}$ matrix. The flattening procedure of each patch in ViT involves reshaping this matrix into an $\R^{N/P \cdot d_u}$ vector. The full image is thus represented as a sequence of $P$ vectors of dimension $N/P \cdot d_u$. Each patch vector is then linearly lifted to an embedding space of dimension $d_{\textrm{model}}$. The embedded image, to which attention is applied, may hence be viewed as a matrix $\widetilde{u}\in\R^{P\times d_{\textrm{model}}}$. Therefore, ViT may be cast in the setting of the discrete interpretation of the self-attention operator outlined in Subsection \ref{subsubsec:self-attention_discrete}, where the domain on which attention is applied is $D^P$. A key observation to the ensuing discussion is that the application of a linear transformation to a ``flattened'' patch in ViT breaks the mesh-invariance of the architecture; however, this operation may be viewed as a nonlocal transformation on each patch. We leverage this insight in \Cref{sec:transformers_operator} to design discretization invariant transformer neural operators that employ patching. We further note that in practice, patching in ViT is often also achieved via strided convolutions, with stride equivalent to the kernel size; however, because the kernel size determines the number of parameters, such a procedure also breaks discretization invariance as it leads to architectures with parameters dependent on resolution of the input.

\end{remark}

\begin{remark}[Codomain Attention]
\label{remark:codomain_attention}
The work of \citet{rahman2024pretraining} employs an attention mechanism that acts across the channels of the function $u\in \cU\bigl(D;\R^{d_{\textrm{model}}} \bigr)$ in the latent space of a neural operator architecture. Indeed the ``codomain'' attention mechanism defined may be viewed as a particular variant of Definition \ref{def:patch_attention}, where $D'\cong D$, where $\sAp(\widetilde{u})(j)$ is defined for $j \in \range{1}{d_{\textrm{model}}}$ and where $\sQ, \sK, \sV$ are implemented as integral operators. In
particular the indexing is by channels and not by patch. For problems in computer vision, similar ideas have been explored with discrete versions of the $\sQ, \sK, \sV$ operators \citep{chen2017sca}.
\end{remark}

\subsection{Patched Cross-Attention}
\label{subsec:patch-cross-attention}

For completeness, we outline the definition of the patched cross-attention operator. We consider $E\subset \R^e$ to be a bounded open set and $E\coloneqq E_1\cup\ldots \cup E_O$ a uniform partition of $E$ so that $E_j\cong E' \cong D'$ for all $j\in\range{1}{O}$ for some $E'\subset E$ and $O\in \N$. We note that for the following construction we require $E'\cong D'$, but it may hold in general that $P\neq O$. As done for $\widetilde{u}$ we define an operator 
\begin{equation}
\widetilde{v}:\range{1}{O} \to \cU(E' ,\mathbb{R}^{d_v}),
\end{equation}
that defines the patched version of the function $v\in \cU\bigl( E; \R^{d_v} \bigr)$, so that
\begin{equation}
\widetilde{v}(o) = \restr{v}{E_o},
\end{equation}
for any $o\in\range{1}{O}$. Furthermore, we let the operators $\sQ,\sK,\sV$ be defined such that $\sQ:\cU(D' ,\mathbb{R}^{d_u})\to L^2(D' ,\mathbb{R}^{d_K}), \sK:\cU(E' ,\mathbb{R}^{d_v})\to L^2(E' ,\mathbb{R}^{d_K}), \sV: \cU(E' ,\mathbb{R}^{d_v})\to \cU(E' ,\mathbb{R}^{d_V}) $ and let $j \in \range{1}{P}$ be an index.

\begin{definition}
\label{d:9}
We define a probability measure on $D$, parameterized by $(\tilde{u},\tilde{v},j)$, via the probability density function $q(\cdot;\widetilde{u},\widetilde{v},j): \range{1}{P} \to [0,1]$ defined by
\[q(k; \widetilde{u},\widetilde{v}, j) = \frac{\mathrm{exp} \Big ( \big\langle \sQ \widetilde{u}(j), \sK \widetilde{v}(k) \big\rangle_{L^2(E' ,\mathbb{R}^{d_K})} \Big )}{\sum_{\ell=1}^O \mathrm{exp} \Big ( \big\langle \sQ \widetilde{u}(j), \sK \widetilde{v}(\ell) \big\rangle_{L^2(E' ,\mathbb{R}^{d_K})} \Big )},\]
for any $k \in \range{1}{{O}}$.
\end{definition}

\begin{definition}
\label{d:10}
The cross-attention operator $\sC$ maps the operator taking a patch index to a $\R^{d_u}$-valued function over $D'$, $\widetilde{u}$, 
and the operator taking a patch index to a $\R^{d_v}$-valued function over $E'$, $\widetilde{v}$, into an operator taking a patch index to a $\R^{d_V}$-valued function over $D'$, $\sCp(\widetilde{u},\widetilde{v})$, as follows:
\[\sCp(\widetilde{u},\widetilde{v})(j) = \E_{k \sim q(k;\widetilde{u},\widetilde{v},j)} [\sV \widetilde{v}(k)],\]
for any $j\in\range{1}{P}$.
\end{definition}

%%%%%%%%%%%%%%%%%%%%%%%%%%%%%%%%%%%%%%%%%%%%%%%%%%%%%%%%%%%%%%%%%%%%%%%%%%%%%%%%%%%%%%%%%%
%%%%%%%%%%%%%%%%%%%%%%%%%%%%%%
\section{Transformer Neural Operators}
\label{sec:transformers_operator}

In this section we turn our focus to describing how to devise transformer-based neural operators using the continuum attention operator $\sA$ as defined in the function space setting in Section \ref{sec:attention}, and the continuum patched-attention operator $\sAp$ acting on function space from Section \ref{sec:patch_attention}. 
We make use of the operator frameworks developed in Sections \ref{sec:attention} and \ref{sec:patch_attention} to formulate transformer neural operator architectures acting on function space. We start in Subsection \ref{subsec:setup}, setting-up the framework. In Subsection \ref{subsec:vanilla_transformer} we formulate an analogue of the transformer from \citet{vaswani2017attention} as an operator mapping an input function space $\cU\bigl(D;\R^{d_u}\bigr)$ to the solution function space $\cZ\bigl(D;\R^{d_z}\bigr)$. The resulting scheme, which we define as transformer neural operator (TNO) is invariant to the discretization of the input function and allows for zero-shot generalization to irregular, non-uniform meshes. However, this architecture exhibits quadratic complexity with respect to the number of discretization points of the input functions. Hence, motivated by the need to develop new efficient architectures to employ attention on longer sequences, or in our case higher resolution discretizations of functions, in Subsection \ref{subsec:patch_transformer} we describe a generalization of the ViT methodology to the function space setting, inspired by the work in \citet{dosovitskyi2021image}. In particular we generalize the procedure of lifting flattened patches to the embedding dimension, as done in ViT, to the function space setting. This is achieved by lifting patches of functions to an embedding dimension using a nonlocal linear operator, namely, an integral operator. Unlike ViT, the resulting neural operator architecture, which we call ViT neural operator (ViTNO), is discretization invariant and allows for zero-shot generalization to different resolutions. In Subsection \ref{subsec:fourier_patch_transformer} we describe a different patching-based transformer neural operator acting on function space, the Fourier attention neural operator (FANO). The architecture is discretization invariant and is based on $\sQ,\sK, \sV$, learnable parameters in the attention mechanism, being defined as integral operators acting themselves on function space. We demonstrate in \Cref{sec:numerics} that the approach involving parametrizing $\sQ,\sK, \sV$ as integral operators yields an architecture with greater parameter count for a similar computational complexity. We show that this architecture produces improved results in the context of a problem with smooth inputs.

\subsection{Set-Up}
\label{subsec:setup}
%\ml{Is there a reason to use $\cZ$ instead of $\cU$? On one hand, it is more obvious that it is the output space; on the other hand, it sometimes makes me wonder if the input/output spaces can differ by more than $\R^{d_u}, \R^{d_z}$ (I think not?).} 
%\as{I think its good to keep the distinction as they might represent different function spaces (different Sobolev regularity for example) even if $d_u=d_z.$}
In the subsections that follow we describe transformer neural operators of the form $\sG:\cU(D;\R^{d_u})\times \Theta \to \cZ(D;\R^{d_z})$, which serve as approximations to an operator $\sG^\dagger:\cU(D;\R^{d_u}) \to \cZ(D;\R^{d_z})$. The neural operators we devise have the general form
\begin{equation}
\label{eq:transformer_operator_compact}
\sG(u,\theta) \coloneqq \Bigl(\sT_{\textrm{out}}\circ \sE_L \circ \sT_{\textrm{in}}\Bigr) (u,\theta),
\end{equation}
for any $u\in \cU(D;\R^{d_u})$ and $\theta \in \Theta$. In \eqref{eq:transformer_operator_compact}, the operators $\sT_{\textrm{in}}:\cU(D;\R^{d_u})\times \Theta \to \cV$ and $\sT_{\textrm{out}}:\cV\times \Theta \to \cZ(D;\R^{d_z})$ are defined for some appropriate embedding function space $\cV$.  We will make explicit the definitions of the general operators $\sT_{\textrm{in}}$, $\sT_{\textrm{out}}$ and the embedding function space $\cV$ in the context of each architecture. 
Throughout this section, we drop explicit dependencies on $\theta$ for brevity unless we wish to stress a parametric dependence; for example, we may abuse notation and write $\sT_\textrm{in}(u) := \sT_\textrm{in}(u; \ \theta)$.
The operator $\sE_L:\cV\times \Theta\to\cV$ defines the neural operator analogue of the transformer encoder block from \citet{vaswani2017attention}. Its action on the input $v^{(0)}\coloneqq \sT_{\textrm{in}}(u)\in\cV$ is summarized by the iteration
\begin{subequations}
\label{eq:recurrence_enc_def}
\begin{align}
    v^{(l-1)} &\mapsfrom {\sW_1}v^{(l-1)} + \sA_{\textrm{MultiHead}}\big(v^{(l-1)}\big), \\
    v^{(l-1)} &\mapsfrom \sF_{\textrm{LayerNorm}}\big(v^{(l-1)}\big), \\
    v^{(l-1)} &\mapsfrom {\sW_2}v^{(l-1)} + \sF_{\textrm{NN}}\big( v^{(l-1)}\big),\\
    v^{(l)} &\mapsfrom \sF_{\textrm{LayerNorm}}\big(v^{(l-1)}\big),
\end{align}
\end{subequations}
%\begin{subequations}
%\label{eq:recurrence_enc_def}
%\begin{align}
%    u^{(l)}_{\textrm{self}} &= \widehat{u}^{(l-1)} + \sA_{\textrm{MultiHead}}\big(\widehat{u}^{(l-1)}\big), \\
%    \widehat{u}^{(l)}_{\textrm{self}} &= \sF_{\textrm{LayerNorm}}\big(u^{(l)}_{\textrm{self}}\big), \\
%    u^{(l)} &= \widehat{u}^{(l)}_{\textrm{self}} + \sF_{\textrm{NN}}\big( \widehat{u}^{(l)}_{\textrm{self}}\big),\\
%    \widehat{u}^{(l)} &= \sF_{\textrm{LayerNorm}}\big(u^{(l)}\big),
%\end{align}
%\end{subequations}
for $l=1, \dots, L$ layers, so that $\sE_L\bigl( v^{(0)} \bigr) = v^{(L)}\in  \cV$ \footnote{We note that we use $v$ to denote an arbitrary function in the space $\cV$. This is not to be confused with $v$ used for cross-attention in \Cref{sec:attention}. We also denote the output of the $\ell$'th encoder layer by $v^{(\ell)}$, not to be confused with the notation used for the data pairs.}. We note that for notational convenience, we have suppressed dependence of the operators on the layer $\ell$; indeed, each of the operators $\sA_{\textrm{MultiHead}}$, $\sW_1,\sW_2,$ $\sF_{\textrm{NN}}$ and $\sF_{\textrm{LayerNorm}}$, will have different learnable parametrizations for each encoder layer $\ell$. Before we delve into the specific neural operator architectures, we summarize the definitions of the elements of the transformer encoder described in \eqref{eq:recurrence_enc_def}, which are common to all the subsequent architectures. The multi-head self-attention operator $\sA_{\textrm{MultiHead}}: \cV \to \cV $ is defined as the composition of a linear transformation $W_{\textrm{MultiHead}}$ applied pointwise to its input and a concatenation of the outputs of $H\in\N$ self-attention operators so that
\begin{equation}
\label{eq:multihead1}
    \Bigl(\sA_{\textrm{MultiHead}}(v)\Bigr)(x) = W_{\textrm{MultiHead}} \Bigl(\sA_{\placeholder}(v;\theta_1)(x),\ldots, \sA_{\placeholder}(v;\theta_H)(x)  \Bigr),
\end{equation}
for any $v\in \cV$. 
Here the notation $\sA_{\placeholder}$ indicates the ambiguity in the definition of self-attention. We note that we will make explicit which definition of self-attention operator from the previous sections we employ for each architecture; we will also make clear the definitions of the parameters $\theta_h\in \Theta$ defining each attention operator. The operators $\sW_1,\sW_2:\cV\to\cV$ are pointwise linear operators.\footnote{We note that for all experiments in Section \ref{sec:numerics} we set $\sW_1,\sW_2$ to be the identity operators.} Next, the operator $\sF_{\textrm{LayerNorm}}:\cV \to \cV$ defines the layer normalization and the operator $\sF_\textrm{NN}:\cV \to \cV$ defines a feed-forward neural network layer. Both operators are applied pointwise to the input; further details are provided in the context of each architecture.

A key aspect of the ensuing discussion will be the distinction between local and nonlocal linear operators. We note that for an arbitrary function space $\cU\bigl(D; \R^r\bigr)$ where $D \subset \R^d$, the space of linear operators $\cL\Bigl(\cU\bigl(D; \R^r\bigr) ; \cU\bigl(D; \R^{r'}\bigr) \Bigr)$ includes both local and nonlocal linear operators. In the following, we will consider pointwise linear operators in $\cL\Bigl(\cU\bigl(D; \R^r\bigr) ; \cU\bigl(D; \R^{r'}\bigr) \Bigr)$ that admit a finite-dimensional representation $W\in \R^{r'\times r}$. On the other hand, we will also consider linear integral operators in $\cL\Bigl(\cU\bigl(D;\R^r) ; \cU\bigl(D; \R^{r'}\bigr) \Bigr)$, a class of nonlocal operators. Namely, we consider operators $\sW$ that are integral operators learned from data of the form $\sW:\Theta_{\sW}\to \cL\Bigl(\cU\bigl(D;\R^r) ; \cU\bigl(D; \R^{r'}\bigr) \Bigr)$.

\begin{definition}[Integral Operator $\sW$]
\label{def:int_op}
The integral operator $\sW:\Theta_{\sW}\to \cL\Bigl(\cU\bigl(D;\R^r) ; \cU\bigl(D; \R^{r'}\bigr) \Bigr)$ is defined as the mapping given by
\begin{equation}
\label{eq:conv_operator_final}
    \bigl(\sW(\theta){u} \bigr)(x) \coloneqq \int_{D} {\kappa_\theta}(x,y){u}(y) \mathrm{d}y, \qquad \forall u\in \cU\bigl(D;\R^r),\, \forall x\in D, 
\end{equation}
where ${\kappa}_\theta:D\to \R^{{r'}\times r}$ is a neural network parametrized by $\theta \in \Theta_{\sW}$.
\end{definition}
Assuming ${\kappa}_\theta(x,y) = {\kappa}_\theta(x-y)$, hence the kernel to be periodic, then \eqref{eq:conv_operator_final} may be viewed as a convolution operator, which makes it possible to use the Fast Fourier Transform (FFT) to compute \eqref{eq:conv_operator_final} and to parametrize $\sW$ in Fourier space. Following \citet{li2021fourier}, this insight leads to the following formulation of the operator given in Definition \ref{def:int_op}.

\begin{definition}[Fourier Integral Operator $\sW$]
\label{def:FourierIntegralOperator_final}
We define the Fourier integral operator as 
    \begin{equation}
        \bigl(\sW(\theta){u} \bigr)(x) = \cF^{-1}\Bigl(R_\sW(\theta) \cdot (\cF{u}) \Bigr)(x)\qquad \forall u\in \cU\bigl(D;\R^r),\,\forall x\in D.
    \end{equation}
    To link to the definition \eqref{eq:conv_operator_final} with translation-invariant kernel
we take $R_{\sW}(\theta)$ to be the Fourier transform of the periodic function ${\kappa_\theta}:D\to \R^{{r'}\times r}$ parametrized by $\theta \in \Theta_{\sW}$.
    \end{definition}
For uniform discretizations $n_1\times\cdots\times n_d=N$ of the space $D$, the Fourier transform can be replaced with the Fast Fourier Transform (FFT).
Indeed, for the Fourier transform we have %(ignoring, for brevity, the dependence on $\theta$ in the following notation)
$\cF u : \R^d \to \C^r$ and $R_\sW(\theta) : \R^d \to \C^{r' \times r}$.
On the other hand, for the FFT we have
$\cF u : \Z^d \to \C^r$ and $R_\sW(\theta) : \Z^d \to \C^{r' \times r}$. In practice, it is possible to select a finite-dimensional parametrization by choosing a finite set of wave numbers 
\[
\bigl|Z_{k_{\textrm{max}}} \bigr|\coloneqq\bigl|\bigl\{ 
k\in\Z^d: |k_j|\leq k_{\textrm{max},j},\, \text{for}\,j=1,\ldots,d
\bigr\} \bigr|.
\] We can therefore parametrize $R_{\sW}$ by a complex $\bigl((2k_{\text{max},1}+1)\times \ldots\times (2k_{\text{max},d}+1) \times r'\times r\bigr)$-tensor. We will use the notation $k_{\textrm{max}}=\bigl|Z_{k_{\textrm{max}}} \bigr|$ when discussing parameter scalings in the subsequent sections.

These integral operator definitions will be used to define architectures for the Vision Transformer Neural Operator in Subsection \ref{subsec:patch_transformer} and the Fourier Attention Neural Operator in Subsection \ref{subsec:fourier_patch_transformer}, but are not needed to define the basic Transformer Neural Operator in Subsection \ref{subsec:vanilla_transformer}. In the context of each architecture, we make explicit the parametrizations of the integral operators employed.

\subsection{Transformer Neural Operator}
\label{subsec:vanilla_transformer}

\begin{figure}[h!]
\centering
\includegraphics[width=\linewidth]{./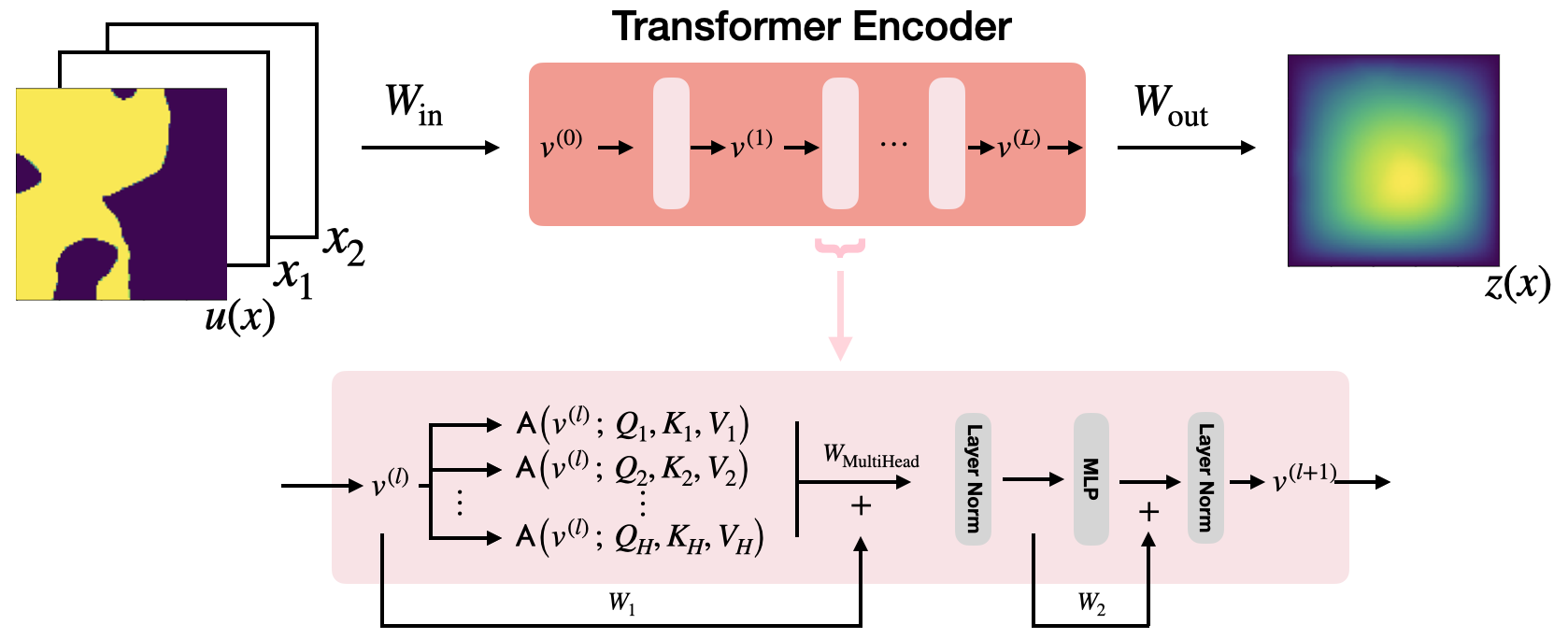}
\caption{Transformer Neural Operator.}
\label{fig:TNO}
\end{figure}

%%% BEGIN NEW %%%%%%
The transformer neural operator architecture follows the general structure of \Cref{eq:transformer_operator_compact,eq:recurrence_enc_def}; here, we make explicit the choices for $\sT_\textrm{in}, \sT_\textrm{out}, \sE_L$ that instantiate it. In \Cref{fig:TNO} we provide a schematic representation of the full transformer neural operator architecture.
We consider model inputs  $u \in \cU(D; \R^{d_u})$, model
embedding space 
$\cV := \cU(D; \R^{d_\textrm{model}})$,
and model output space
$\cZ(D; \R^{d_z})$,
with $D \subset \R^d$. The function $u\in \cU(D;\R^{d_u})$ is first concatenated with the identity defined on the domain $D$, so that the resulting function $u_{\textrm{in}}\in \cU(D;\R^{d_u+d})$ is defined by its action on $x\in D$ as
\begin{equation}
    \label{eq:concat}
    u_{\textrm{in}}(x) \coloneqq \bigl(u(x), x \bigr).
\end{equation}
In practice, this step defines a positional encoding that is appended to the input function. A linear transformation $W_\textrm{in}\in \R^{d_\textrm{model}\times (d_u+d)}$ is then applied pointwise to lift the function $u_{\textrm{in}}$ into the embedding space $\cV$. The input operator $\sT_\textrm{in} : \cU(D; \R^{d_u}) \to \cV$ is hence defined as
\begin{equation}
    \Bigl(\sT_\textrm{in}(u)\Bigr)(x) :=   W_\textrm{in}u_\textrm{in}(x)
\end{equation}
for $u_\textrm{in}\in \cU\bigl(D;\R^{d_u+d} \bigr)$ defined as in \eqref{eq:concat}, where $x \in D$, $u(x) \in \R^{d_u}$, and $W_\textrm{in} \in \R^{d_\textrm{model}\times (d_u+d)}$. Observe that $\sT_\textrm{in}$ acts as a local, pointwise linear operator on $u_\textrm{in}$.

Next, we define the details of $\sE_L: \cV \to \cV$ in \eqref{eq:recurrence_enc_def} by specifying a definition of $\sA_\textrm{MultiHead}$, $\sW_1,\sW_2$, $\sF_\textrm{LayerNorm}$, and $\sF_{\textrm{NN}}$ for the transformer neural operator setting. The operator $\sA_{\textrm{MultiHead}}: \cV \to \cV$ is the multi-head attention operator from \eqref{eq:multihead1} based on the continuum self-attention from Definition \ref{d:4}. Letting $H\in \N$ denote the number of attention heads, the multi-head attention operator is parametrized by the learnable linear transformations $Q_h\in \R^{d_K\times d_{\textrm{model}}},K_h\in \R^{d_K\times d_{\textrm{model}}},V_h\in \R^{d_K\times d_{\textrm{model}}}$, for $h=1,\ldots,H$, where for implementation purposes $d_K$ is chosen as $d_K\coloneqq d_{\textrm{model}}/H$.\footnote{We note that $d_{\textrm{model}}$ and $H$ should be chosen so that $d_{\textrm{model}}$ is a multiple of $H$.} 
The multi-head attention operator is thus defined by the application of a linear transformation $W_{\textrm{MultiHead}}\in \R^{d_{\textrm{model}}\times H\cdot d_K}$ to the concatenation of the outputs of $H\in\N$ self-attention operators. The operators $\sW_1,\sW_2:\cV\to\cV$ are pointwise linear operators and hence admit finite-dimensional representations $W_1,W_2\in \R^{d_{\textrm{model}}\times d_{\textrm{model}}}$, so that they define a map
\[
v(x)\mapsto W_1v(x),
\]
for each $x\in D$, and similarly for $W_2$. The operator $\sF_{\textrm{LayerNorm}}: \cV \to \cV$ is defined such that
\begin{equation}
\label{eq:LN}
\Bigl(\sF_{\textrm{LayerNorm}}({v};\gamma,\beta)(x)\Bigr)_k = \gamma_k\cdot\frac{\bigl(v(x)\bigr)_k - m\bigl(v(x)\bigr)}{\sqrt{\sigma^2\bigl(v(x)\bigr)+\epsilon}} +\beta_k,
\end{equation}
for $k=1,\dots,d_{\textrm{model}}$, any $v \in \cV$, and any $x \in D \subset \R^{d}$, where the notation $(\,\cdot\,)_k$ is used to denote the $k$'th entry of the vector. In equation \eqref{eq:LN}, $\epsilon\in\R^+$ is a fixed parameter, $\gamma_k,\beta_k\in\R$ are learnable parameters and $m,\sigma$ are defined as
\begin{equation}
\label{eq:LN_meanvar}
\begin{aligned}
    m(y) &\coloneqq \frac1{d_{\textrm{model}}}\sum_{k=1}^{d_{\textrm{model}}}y_k,\\
    \sigma^2(y) &\coloneqq \frac1{d_{\textrm{model}}}\sum_{k=1}^{d_{\textrm{model}}}\bigl(y_k-m(y) \bigr)^2 ,
\end{aligned}
\end{equation}
for any $y\in \R^{d_{\textrm{model}}}.$ 
%{\color{red} Just checking that $m(y)$ really does calculate the mean of the absolute values of the entries (which is how it is define) and not the mean itself?}
The operator $\sF_\textrm{NN}: \cV \to \cV$ is defined such that
\begin{equation}
\label{eq:NN}
    \sF_\textrm{NN}(v;W_3,W_4,b_1,b_2)(x) = W_3f\bigl(W_4v(x)+b_1\bigr) + b_2,
\end{equation}
for any $v \in \cV$ and $x\in D$, where $W_3,W_4\in\R^{d_\textrm{model}\times d_\textrm{model}}$ and $b_1,b_2 \in \R^{d_\textrm{model}}$ are learnable parameters and where $f$ is a nonlinear activation function. 

Finally, we define the output operator $\sT_\textrm{out} : \cV \to \cZ(D; \R^{d_z})$ as a local linear operator applied pointwise, so that 
\begin{equation}
    \Bigl(\sT_\textrm{out}({v})\Bigr)(x) = W_\textrm{out}v(x),
\end{equation}
for $x \in D$, $v \in \cV$ and $W_\textrm{out} \in \R^{d_z\times d_\textrm{model}}$. 

The composition of the operators $\sT_\textrm{in}, \sE_L, \sT_\textrm{out}$ completes the definition of the transformer neural operator. In Appendix \ref{subsubsec:vanilla_transformer_implement} we outline how this neural operator architecture is implemented in the finite-dimensional setting, where the input function $u\in \cU\bigl( D; \R^{d_u} \bigr)$ is defined on a finite set of discretization points $\{x_i\}_{i=1}^N\subset D$.

\subsection{Vision Transformer Neural Operator}
\label{subsec:patch_transformer}

We introduce an analogue of the vision transformer architecture as a neural operator using the patching in the continuum framework developed in Section \ref{sec:patch_attention}. We note that in the context of the ViT from \citet{dosovitskyi2021image}, performing patch ``flattening'' before a local lifting to the embedding dimension introduces a nonlocal transformation on the patches, as described in Remark \ref{rm:patch_vit}. This procedure as implemented in \citet{dosovitskyi2021image} violates the desirable (for neural operators) discretization invariance property. Hence, for a neural operator analogue, we substitute the lifting to embedding dimension with a nonlocal integral operator. The ViT neural operator (ViT NO) introduced is thus discretization invariant. We describe this methodology 
within the framework provided by 
making explicit choices for $\sT_\textrm{in}, \sT_\textrm{out}, \sE_L$ that instantiate
\Cref{eq:transformer_operator_compact,eq:recurrence_enc_def} appropriately. In \Cref{fig:vit} we provide a schematic for the architecture, with a graphical representation of the encoder layer in \Cref{fig:vit_enc}.
% in the operator framework outlined thus far.

\begin{figure}[h!]
\centering
\includegraphics[width=\linewidth]{./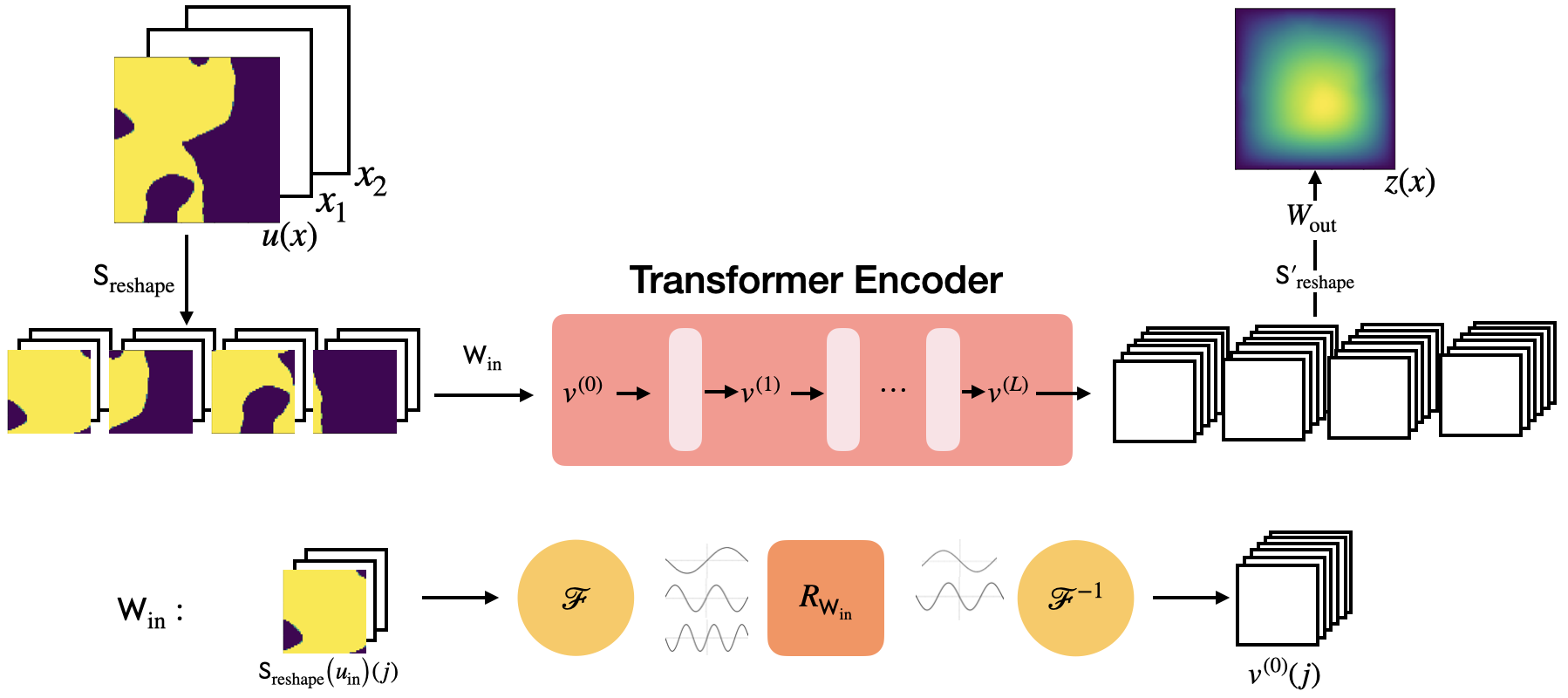}
\caption{Vision Transformer Neural Operator.}
\label{fig:vit}
\end{figure}

Let $D \subset \R^d$ be a bounded open set. Consider $u\in \cU(D;\R^{d_u})$ and let $D\coloneqq D_1 \cup \cdots \cup D_P$ be a uniform partition of the space $D$ so that $D_j\cong D'$ have equivalent finite volume for all $j\in\range{1}{P}$ for some $D'\subset D$, where $P\in\mathbb{N}$ represents the number of patches. 
% We consider the input to the model to be the function $u\in \cU(D;\R^{d_u})$, where $D\subset \R^d$. 
We consider model 
inputs 
$u \in \cU(D; \R^{d_u})$, model
embedding space 
% $\cV := \cU(D; \R^{d_\textrm{model}})$,
$\cV := \left\{v\mid v:\range{1}{P}\to  \cU\bigl(D';\R^{d_{\textrm{model}}}\bigr)\right\}$,
which is a space of functions acting on patch indices, and model output space
$\cZ(D; \R^{d_z})$,
with domain $D \subset \R^d$.

We begin by defining the input function $\sT_\textrm{in} : \cU(D; \R^{d_u}) \to \cV$.
The function $u\in \cU(D;\R^{d_u})$ is first concatenated with the identity defined on the domain $D$, so that the resulting function $u_{\textrm{in}}\in \cU(D;\R^{d_u+d})$ is defined as in \eqref{eq:concat}; recall that this defines a positional encoding. An operator $\sS_{\textrm{reshape}}$ is applied to the function ${u}_{\textrm{in}} \in \cU(D;\R^{d_u+d})$ that acts on the input so that the resulting function is of the form
\begin{equation}
    \label{eq:reshape}
    \sS_{\textrm{reshape}}({u}_{\textrm{in}}): \range{1}{P} \to \cU(D' ,\mathbb{R}^{d_u+d}).
\end{equation}
We apply a nonlocal linear operator $\sW_{\textrm{in}}\in \cL\Bigl(\cU(D';\R^{d_u+d})\,;\,\cU\bigl(D';\R^{d_{\textrm{model}}}\bigr)\Bigr)$ that acts on any $u:\range{1}{P}\to \cU(D';\R^{d_u+d})$ so that for any $j \in \range{1}{P}$
\begin{equation}
    \label{eq:lift_patch}
u(j) \mapsto \sW_{\textrm{in}}u(j).
%\widetilde{u}^{(0)}({j})\coloneqq \sW_{\textrm{in}}\widehat{u}({j}),
\end{equation}
The above nonlocal linear operator is chosen as an integral operator as in Definition \ref{def:FourierIntegralOperator_final} and the subsequent discussion.
% We choose the embedding function space as the resulting space of functions acting on patch indices of the form $\cV\equiv\left\{u\mid u:\range{1}{P}\to  \cU\bigl(D';\R^{d_{\textrm{model}}}\bigr)\right\}$.
The composition of the operators defined in \Cref{eq:concat,eq:reshape,eq:lift_patch} defines our choice of $\sT_\textrm{in}$. Namely, we define $\sT_\textrm{in}: \cU(D; \R^{d_u}) \to \cV$ via
\begin{equation}
\label{eq:compt}
    \Bigl(\sT_\textrm{in}(u)\Bigr)(j) \coloneqq  \sW_{\textrm{in}} \Bigl(\sS_{\textrm{reshape}}\bigl({u}_{\textrm{in}} \bigr)(j)\Bigr),
\end{equation}
where ${u}_{\textrm{in}} \in \cU(D;\R^{d_u+d})$ is defined as in \eqref{eq:concat}, $j\in\range{1}{P}$ and $\sW_{\textrm{in}}$ is defined as in \eqref{eq:lift_patch}.

\begin{figure}[h!]
\centering
\includegraphics[width=0.7\linewidth]{./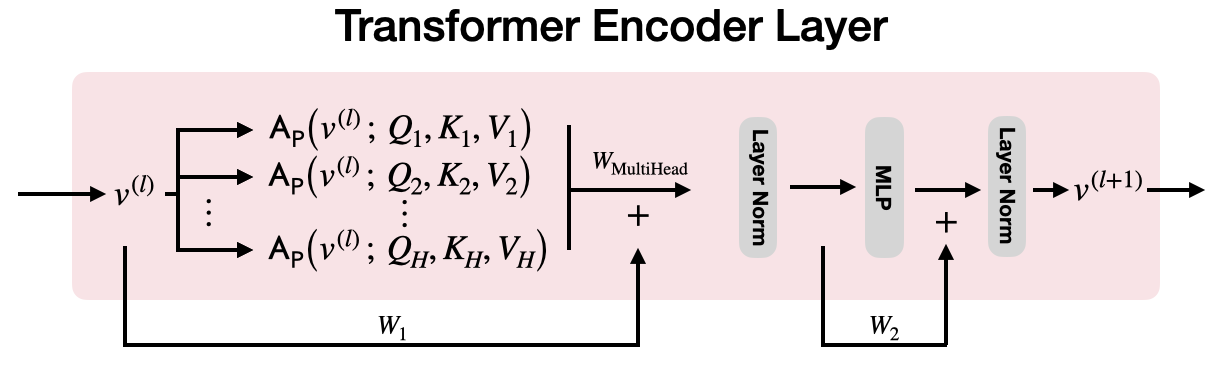}
\caption{Vision Transformer Neural Operator Encoder Layer.}
\label{fig:vit_enc}
\end{figure}

Next, we define the details of $\sE_L: \cV \to \cV$ in \Cref{eq:recurrence_enc_def} by specifying a definition of $\sA_\textrm{MultiHead}$, $\sF_\textrm{LayerNorm}$, $\sW_1,\sW_2$, and $\sF_{\textrm{NN}}$ in this patched setting.
In each layer the operator $\sA_{\textrm{MultiHead}}: \cV \to \cV $ is the multi-head attention operator from \eqref{eq:multihead1} based on the self-attention from Definition \ref{def:patch_attention}. Letting $H\in \N$ denote the number of attention heads, the multi-head attention operator is parametrized by learnable local (pointwise) linear operators $\sQ_h, \sK_h, \sV_h$ for $h=1,\ldots,H$. In this setting, these are chosen to be linear operators applied pointwise, they can be represented by finite dimensional linear transformations $Q_h\in \R^{d_K\times d_{\textrm{model}}},K_h\in \R^{d_K\times d_{\textrm{model}}},V_h\in \R^{d_K\times d_{\textrm{model}}}$, for $h=1,\ldots,H$. \footnote{We note that in the context of the Fourier attention neural operator, presented in the next Subsection \ref{subsec:fourier_patch_transformer}, we generalize this definition to $\sQ_h, \sK_h, \sV_h$ for $h=1,\ldots,H$ being nonlocal linear integral operators.} For implementation purposes $d_K$ is chosen as $d_K\coloneqq d_{\textrm{model}}/H$.
\footnote{We note that $d_{\textrm{model}}$ and $H$ should be chosen so that $d_{\textrm{model}}$ is a multiple of $H$.} 
The multi-head attention operator is thus defined by the application of a local linear transformation $W_{\textrm{MultiHead}}\in \R^{d_{\textrm{model}}\times H\cdot d_K}$ to the concatenation of the outputs of $H\in\N$ self-attention operators. The operators $\sW_1,\sW_2:\cV\to\cV$ are pointwise linear operators and hence admit finite-dimensional representations $W_1,W_2\in \R^{d_{\textrm{model}}\times d_{\textrm{model}}}$, so that they define a map
\begin{equation}
\label{eq:w1w2_ops}
v(x;p)\mapsto W_1v(x;p),
\end{equation}
for each $x\in D'$ and $p\in \range{1}{P}$, and similarly for $W_2$. The operator $\sF_{\textrm{LayerNorm}}: \cV \to \cV$ is defined such that
\begin{equation}
\label{eq:LN_patch}
\Bigl(\sF_{\textrm{LayerNorm}}(v;\gamma,\beta)(x;p)\Bigr)_k = \gamma_k\cdot\frac{\bigl(v(x;p)\bigr)_k - m\bigl(v(x;p)\bigr)}{\sqrt{\sigma^2\bigl(v(x;p)\bigr)+\epsilon}} +\beta_k,
\end{equation}
for $k=1,\dots,d_{\textrm{model}}$, any $p\in\range{1}{P}$ and any $x\in \R^{d}\subset D$, where we use the notation $(\,\cdot\,)_k$ to denote the $k$'th entry of the vector. In equation \eqref{eq:LN_patch}, $\epsilon\in\R^+$ is a fixed parameter, $\gamma_k,\beta_k\in\R$ are learnable parameters and $m,\sigma$ are defined as in \eqref{eq:LN_meanvar}. The operator $\sF_\textrm{NN}:\cV \to \cV$ is defined by
\begin{equation}
\label{eq:NN_patch_f}
    \sF_\textrm{NN}(v;W_3,W_4,b_1,b_2)(x;p) = W_3f\bigl(W_4v(x;p)+b_1\bigr) + b_2,
\end{equation}
for any $x\in D$ and any $p\in\range{1}{P}$, where $W_3,W_4\in\R^{d_\textrm{model}\times d_\textrm{model}}$ and $b_1,b_2 \in \R^{d_\textrm{model}}$ are learnable parameters and $f$ is a nonlinear activation function. 

Finally, we define the action of $\sT_\textrm{out} : \cV \to \cZ(D; \R^{d_z})$. This is given by first applying a reshaping operator $\sS'_{\textrm{reshape}}$ to the output of the encoder, where this map is defined as
\begin{equation}
    \label{eq:unshape}
{v}\mapsto\sS'_{\textrm{reshape}}\bigl({v}\bigr)\in \cU(D;\mathbb{R}^{d_{\textrm{model}}}).
\end{equation}
% The final step of the architecture involves
We note that the operator $\sS'_{\textrm{reshape}}$ may be viewed as an inverse of $\sS_{\textrm{reshape}}$. A pointwise linear transformation $W_{\textrm{out}}\in \R^{d_z \times d_{\textrm{model}}}$ defined by
\begin{equation}
\label{eq:Wout_nl}
    v(x)\mapsto W_{\textrm{out}}{v}(x) \in \cZ(D;\R^{d_z}),
\end{equation}
for any $x\in D$, is then applied. This procedure completes the definition of the operator $\sT_\textrm{out} : \cV \to \cZ(D; \R^{d_z})$, which is given by 
\begin{equation}
\label{eq:T_out_patch}
    \Bigl(\sT_\textrm{out}(v)\Bigr)(x) \coloneqq W_{\textrm{out}} \Bigl(\sS'_{\textrm{reshape}}\bigl(v \bigr)(x)\Bigr),
\end{equation}
for any $v\in \cV$ and $x\in D$.

\begin{remark}
\label{rm:smoothing}
    Discontinuities may arise when the patching architecture is used and may be visible in error plots and can also interfere with self-composition of maps learned as solution operators of time-dependent PDEs. Ameliorating such discontinuities can be achieved using a smoothing operator as a final layer of the network.
    To this end, assuming $\cZ\bigl(D;\R^{d_z}\bigr) \subset H^{s}\bigl(D;\R^{d_z}\bigr)$ for some $s\geq 0$, we define for $\epsilon>0$ and $\alpha>1$ the operator $(I- \epsilon\Delta)^{-\alpha} :\cZ\bigl(D;\R^{d_z}\bigr)\to H^{s+2\alpha}\bigl(D;\R^{d_z}\bigr)$, and define the last layer of the architecture as
\begin{equation}
\label{eq:Wout_smooth}
{z} \mapsto (I- \epsilon\Delta)^{-\alpha} (z),
\end{equation}
for $z \coloneqq \Bigl(\sT_{\rm{out}}\circ \sE_L \circ \sT_{\rm{in}}\Bigr) (u)$.
\end{remark}

The composition of the operators $\sT_\textrm{in}, \sE_L, \sT_\textrm{out}$ completes the definition of the ViT neural operator. In Appendix \ref{subsubsec:ViT_implementation} we outline how this neural operator architecture is implemented in the finite-dimensional setting, where the input function $u\in \cU\bigl( D; \R^{d_u} \bigr)$ is defined on a finite set of discretization points $\{x_i\}_{i=1}^N\subset D$.

\subsection{Fourier Attention Neural Operator}
\label{subsec:fourier_patch_transformer}

In this subsection, we introduce a different transformer neural operator based on patching, the Fourier attention neural operator (FANO). In this setting, to gain additional expressivity, we replace the local linear operators $\sQ,\sK,\sV$ in the attention mechanism with nonlocal integral operators. In \Cref{fig:fano} we provide a schematic for the Fourier attention neural operator architecture, with a graphical representation of the encoder layer in \Cref{fig:fano_enc}. We outline the details of the architecture in the following discussion.

We again let $D \subseteq \R^d$ be a bounded open set. Consider $u\in \cU(D;\R^{d_u})$ and let $D\coloneqq D_1 \cup \ldots \cup D_P$ be a uniform partition of the space $D$ so that $D_j\cong D'$ for all $j\in\range{1}{P}$ for some $D'\subset D$, where $P\in\mathbb{N}$ represents the number of patches. 
We consider model 
inputs 
$u \in \cU(D; \R^{d_u})$, model
embedding space 
% $\cV := \cU(D; \R^{d_\textrm{model}})$,
$\cV := \left\{v\mid v:\range{1}{P}\to  \cU\bigl(D';\R^{d_{\textrm{model}}}\bigr)\right\}$
which is a space of functions acting on patch indices
,
and model output space
$\cZ(D; \R^{d_z})$,
with domain $D \subset \R^d$.

\begin{figure}[h!]
\centering
\includegraphics[width=\linewidth]{./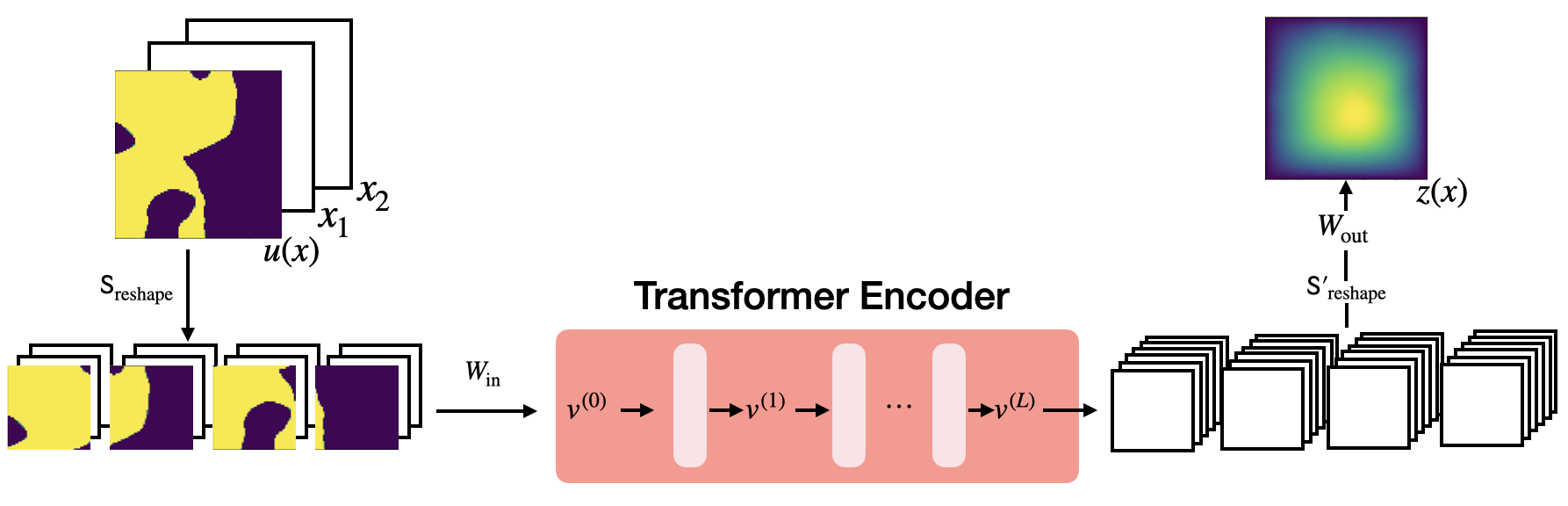}
\caption{Fourier Attention Neural Operator.}
\label{fig:fano}
\end{figure}

We define $\sT_\textrm{in}$ similarly to \eqref{eq:compt} in the previous section, via a composition of \Cref{eq:concat,eq:reshape,eq:lift_patch}; however here the linear lifting operator is chosen to be a pointwise linear transformation $W_{\textrm{in}}\in \R^{d_{\textrm{model}}\times (d_u+d)}$. Namely, we define $\sT_\textrm{in}: \cU(D; \R^{d_u}) \to \cV$ via
\begin{equation}
    \Bigl(\sT_\textrm{in}(u)\Bigr)\bigl(j\bigr)(x) \coloneqq  W_{\textrm{in}} \Bigl(\sS_{\textrm{reshape}}\bigl({u}_{\textrm{in}} \bigr)(j)\Bigr)(x),
\end{equation}
for any $j\in\range{1}{P}$ and any $x\in D$, where ${u}_{\textrm{in}} \in \cU(D;\R^{d_u+d})$ is defined as in \eqref{eq:concat}, $\sS_{\textrm{reshape}}$ as in $\eqref{eq:reshape}$ and the nonlocal operation
\eqref{eq:lift_patch} is replaced by a local pointwise one.

Next, we define the particulars of $\sE_L: \cV \to \cV$ in \Cref{eq:recurrence_enc_def}. The key distinction between the Fourier attention neural operator that we define here and the ViT neural operator from the previous subsection \ref{subsec:patch_transformer} lies in the nonlocality of operations performed within $\sA_\textrm{MultiHead}$. Here, in each layer the operator $\sA_{\textrm{MultiHead}}: \cV \to \cV $ is the multi-head attention operator from \eqref{eq:multihead1} based on the self-attention operator from Definition \ref{def:patch_attention}. Letting $H\in \N$ denote the number of attention heads, the multi-head attention operator is parametrized by learnable nonlocal linear integral operators $\sQ_h,\sK_h \in \cL\Bigl(\cU(D ',\mathbb{R}^{d_\textrm{model}}); L^2(D' ,\mathbb{R}^{d_K})\Bigr)$ and $ \sV_h \in \cL\Bigl(\cU(D ',\mathbb{R}^{d_\textrm{model}}); \cU(D' ,\mathbb{R}^{d_K})\Bigr)$, for $h=1,\ldots,H$, where for implementation purposes $d_K$ is chosen as $d_K\coloneqq d_{\textrm{model}}/H$. The specific formulation of these linear integral operators may be found in Definition \ref{def:FourierIntegralOperator_final} and the subsequent discussion. The multi-head attention operator is thus defined by the application of a pointwise linear transformation $W_{\textrm{MultiHead}}\in \R^{d_{\textrm{model}}\times H\cdot d_K}$ to a concatenation of the outputs of $H\in\N$ self-attention operators. The operators $\sW_1,\sW_2$ are defined as in \eqref{eq:w1w2_ops}. The layer normalization operator $\sF_{\textrm{LayerNorm}}:\cV\to \cV $ is defined as in \eqref{eq:LN_patch} so that it is applied to every point in every patch. Furthermore, the operator $\sF_{\textrm{NN}}:\cV\to \cV$ is defined as in \eqref{eq:NN_patch_f}. 

We define $\sT_\textrm{out}$ identically to \Cref{eq:T_out_patch}. We refer to Remark \ref{rm:smoothing} for an additional operation that can be applied to ameliorate the effect of patch discontinuities. Finally, the composition of the operators $\sT_\textrm{in}, \sE_L, \sT_\textrm{out}$ completes the definition of the Fourier attention neural operator. In Appendix \ref{subsubsec:patch_transformer_implement} we outline how this neural operator architecture is implemented in the finite-dimensional setting, where the input function $u\in \cU\bigl( D; \R^{d_u} \bigr)$ is defined on a finite set of discretization points $\{x_i\}_{i=1}^N\subset D$.

\begin{figure}[h!]
\centering
\includegraphics[width=0.7\linewidth]{./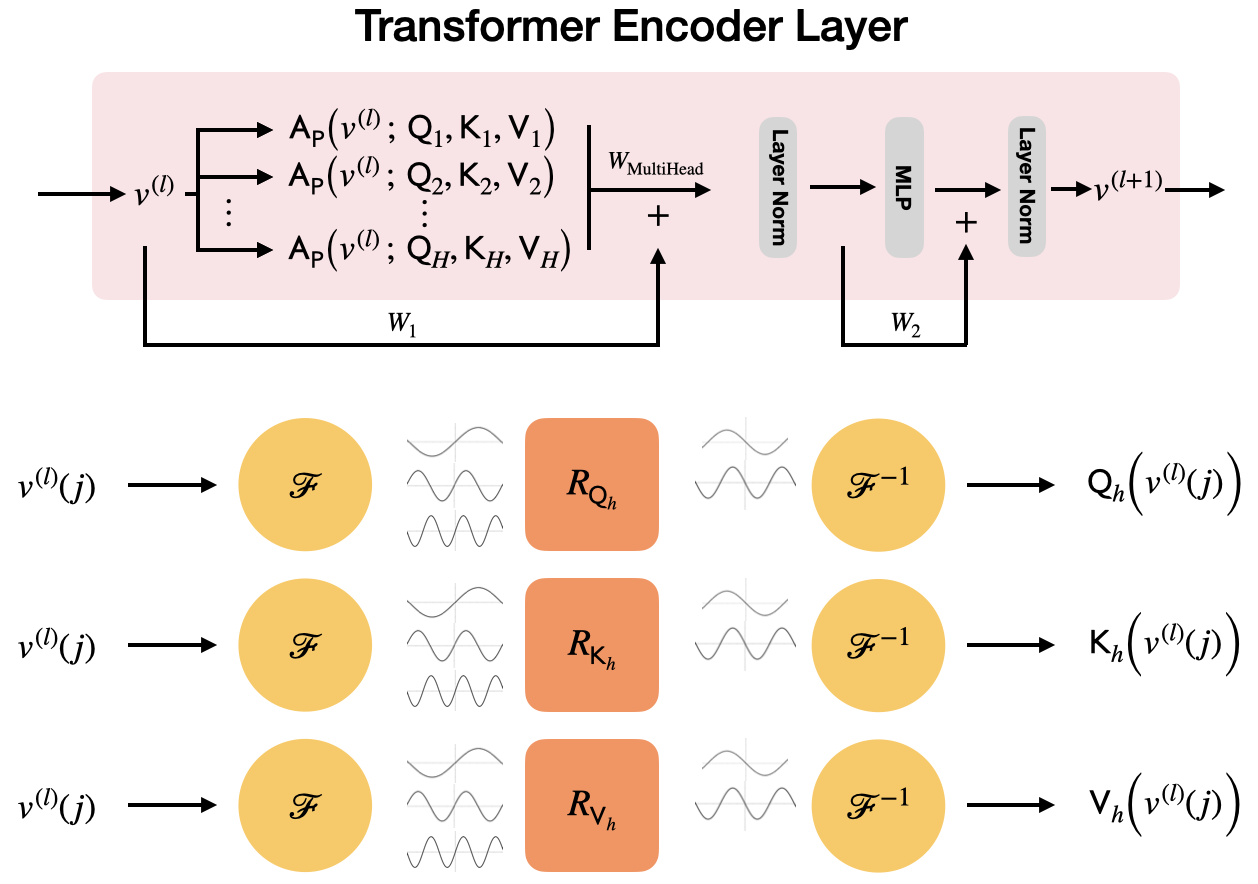}
\caption{Encoder Layer of the Fourier Attention Neural Operator.}
\label{fig:fano_enc}
\end{figure}

\section{Universal Approximation by Transformer Neural Operators}
\label{sec:universal_approx}

Universal approximation is a minimal requirement for neural operators. When the operator to be
approximated maps between spaces of functions defined over Euclidean domains, universal approximation
necessarily requires both nonlocality and nonlinearity. In the recent paper \citet{lanthaler2023nonlocal},
a minimal architecture possessing these two properties is exhibited. In this section, rather than
trying to prove universal approximation for the various discretization-invariant neural operators
introduced in the paper, we construct a simple canonical setting that exhibits the universality of the attention mechanism on function space, allowing direct exploitation of the ideas in \citet{lanthaler2023nonlocal}. We build on the notation used in Subsection \ref{subsec:setup}. In particular
$\sA$ denotes the self-attention operator from Definition \ref{d:4} and $f$ an activation function. Throughout this section, we consider activation functions $f \in C^\infty(\R)$ which are non-polynomial and Lipschitz continuous.

To be specific, we consider neural operators $\sG:\cU\bigl(D;\R^{d_u} \bigr)\times\Theta \to \cZ\bigl( D;\R^{d_z}\bigr)$ of the form
\begin{equation}
\label{eq:transformer_operator_approx_theory}
\sG(u;\theta) \coloneqq \Bigl(\sT_{\textrm{out}}\circ \sE \circ \sT_{\textrm{in}}\Bigr) (u;\theta),
\end{equation}
where $\sT_{\textrm{in}}: \cU\bigl(D;\R^{d_u} \bigr) \to \cV\bigl(D;\R^{d_\textrm{model}} \bigr)$ and $\sT_{\textrm{out}}: \cV\bigl(D;\R^{d_\textrm{model}}\bigr)\to\cZ\bigl(D;\R^{d_z} \bigr)$ are defined by neural networks of the form 
\begin{align}
    \Bigl(\sT_{\textrm{in}}(u)\Bigr)(x) &= R_2f\Bigl(R_1\bigl(u(x),x\bigr)+b_R \Bigr)+b_R',\\
    \Bigl(\sT_{\textrm{out}}(v)\Bigr)(x) &= P_2f\Bigl(P_1\bigl(v(x),x\bigr)+b_P \Bigr)+b_P',
\end{align}
where $\bigl(u(x),x\bigr)\in \R^{d_u+d}$ and $\bigl(v(x),x\bigr)\in \R^{d_{\textrm{model}}+d}$, where $R_1,R_2,P_1,P_2$ are learned linear transformations of appropriate dimensions and $b_R,b_R',b_P,b_P'$ are learned vectors. %{\color{red} I am following (1.1), (1.3) and (2.3) in the proof of Sam. As a result I made some changes above in particular using $(v(x),x)$. Also because we will be using two-layer finite dimensional neural network universality I added $b_R',b_P'$ because I think they may be needed for that theory; if they are not you can remove. Either way in yourproof below please add citation to universality of two layer (one hidden) neural network.}
We define the operator $\sE: \cV\bigl(D;\R^{d_\textrm{model}} \bigr) \to \cV\bigl(D;\R^{d_\textrm{model}} \bigr)$ as a variant of the layer defined by the iteration step in \Cref{eq:recurrence_enc_def}, given by the two-step map acting on its inputs $v\in\cV$ as
\begin{subequations}
\label{eq:recurrence_enc_func_ua}
\begin{align}
    {v}(x) &\mapsfrom W_1{v}(x) + \sA\big({v}; Q,K,V\big)(x), \\
    v(x) &\mapsfrom W_2{v}(x) + W_3f\bigl(W_4{v}(x)+b_1\bigr) + b_2,
\end{align}
\end{subequations}
for any $x\in D$. Note that the neural operator thus defined does not include layer normalizations;
it is thus a variant of the transformer neural operator from Subsection \ref{subsec:vanilla_transformer}. 
We may now apply the result of \citet{lanthaler2023nonlocal} to show two universal approximation theorems for the resulting transformer neural operator.

\begin{theorem}
\label{thm:ua1}
Let $D\subset \R^d$ be a bounded domain with Lipschitz boundary, and fix integers $s,s'\geq 0$. If $\sG^\dagger:C^s\bigl(\Bar{D};\R^r\bigr) \to C^{s'}\bigl(\Bar{D};\R^{r'}\bigr)$
is a continuous operator and $K\subset C^s\bigl(\Bar{D};\R^r\bigr)$ a compact set, then for any $\epsilon>0$, there exists a transformer neural operator 
$\sG(\cdot;\theta):K\subset C^s\bigl(\Bar{D};\R^r\bigr) \to C^{s'}\bigl(\Bar{D};\R^{r'}\bigr)$ so that
\begin{equation}
\label{eq:ua_condition}
\sup_{u\in K} \left\|\sG^\dagger(u) - \sG(u;\theta)   \right\|_{C^{s'}} \leq \epsilon.
\end{equation}
\end{theorem}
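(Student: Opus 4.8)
The plan is to reduce Theorem~\ref{thm:ua1} to the universal approximation result of \citet{lanthaler2023nonlocal} for the minimal nonlocal neural operator, by observing that the continuum self-attention operator $\sA$ of Definition~\ref{d:4} contains a channel-wise averaging operator as the degenerate case $Q=0$. Indeed, setting the query matrix to zero makes $\langle Qv(x),Kv(y)\rangle=0$ for all $x,y\in D$, so the density $p(\cdot;v,x)$ in Definition~\ref{d:3} collapses to the uniform density $1/|D|$ on $D$, whence
\[
\sA(v;0,K,V)(x)=\E_{y\sim\unif(D)}\bigl[Vv(y)\bigr]=V\left(\frac{1}{|D|}\int_D v(y)\,\mathrm{d}y\right),
\]
independently of $x$ and of $K$. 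This is exactly a learned linear map composed with the averaging operator, i.e. the canonical nonlocal linear building block of \citet{lanthaler2023nonlocal}; note that the reduction is insensitive to any scaling placed inside the softmax, since all exponents vanish.

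Second, I would exhibit the architecture \eqref{eq:transformer_operator_approx_theory}--\eqref{eq:recurrence_enc_func_ua} as a family containing the Lanthaler--Stuart skeleton $\mathcal{Q}\circ f\circ(\text{averaging layer})\circ\mathcal{R}$. The shallow pointwise network $\sT_\textrm{in}$, acting on $(u(x),x)$, plays the role of the lifting $\mathcal{R}$: by enlarging $d_\textrm{model}$ and choosing its weights appropriately one produces, among the channels, both a near-copy of $(u(x),x)$ and auxiliary feature fields (e.g. products of the entries of $u$ with a fixed finite trigonometric system in $x$) whose spatial averages recover a finite-dimensional encoding of $u$ that is sufficient on the compact set $K$. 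In substep~(a) of \eqref{eq:recurrence_enc_func_ua} I would take $W_1$ and $V$ block-structured so that the output at $x$ carries simultaneously the pointwise lifted features $v^{(0)}(x)$ and their channel-wise averages $\overline{v^{(0)}}$; this realizes the single nonlocal linear layer. Substep~(b) and $\sT_\textrm{out}$ are then two further shallow pointwise networks --- substep~(b) can be made the identity by taking $W_3=0$, $W_2=I$, so no expressivity is lost relative to a single pointwise projection --- and they realize the post-averaging nonlinearity and the projection $\mathcal{Q}$; the positional variable $x$ is available again to $\sT_\textrm{out}$ by definition, and can be threaded through $\sE$ in a dedicated channel up to the standard approximation of the identity by a shallow network.

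Third, I would invoke the universal approximation theorem of \citet{lanthaler2023nonlocal}: for the continuous operator $\sG^\dagger:C^s(\Bar{D};\R^r)\to C^{s'}(\Bar{D};\R^{r'})$, the compact set $K$, and $\epsilon>0$, there is a choice of widths, of the shallow pointwise networks, and of the averaging layer so that the resulting operator approximates $\sG^\dagger$ to within $\epsilon$ uniformly over $K$ in $C^{s'}$; the hypotheses on $f$ (non-polynomial, $C^\infty$, Lipschitz) are precisely those under which shallow networks are $C^{s'}$-dense on compacta, and the Lipschitz boundary of $D$ enters as in that reference. Transporting the matched parameters back through the wiring above yields a transformer neural operator $\sG(\cdot;\theta)$ satisfying \eqref{eq:ua_condition}. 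The codomain-regularity bookkeeping --- in particular when $s'>s$, where the approximant must be smoother than its input --- is handled exactly as in \citet{lanthaler2023nonlocal}: the dependence on $x$ enters only through the $C^\infty$ positional channel and through the constant-in-$x$ (hence $C^\infty$) averaged features, while pointwise use of $u(x)$ is confined to the regime $s'\le s$ in which $C^s(\Bar{D})\hookrightarrow C^{s'}(\Bar{D})$.

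The main obstacle, and essentially the only nontrivial point, is the second step: verifying that the rather rigid variant encoder \eqref{eq:recurrence_enc_func_ua} --- one attention/residual substep, one residual pointwise-MLP substep, and shallow $\sT_\textrm{in}$, $\sT_\textrm{out}$ --- is genuinely rich enough to contain the Lanthaler--Stuart construction with the nonlinearity on the correct side of the single nonlocal operation, and that the compactness of $K\subset C^s(\Bar{D};\R^r)$ passes through the pointwise lifting to a compact set of feature values on which the classical shallow-network density theorem can be applied. Once this matching is set up, the $Q=0$ degeneration, the residual pass-through weight choices, and the regularity bookkeeping are all routine.
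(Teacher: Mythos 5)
Your overall strategy is the paper's: zero the query (the paper also zeroes $K$ and takes $V=I$) so that the density of Definition \ref{d:3} collapses to the uniform density, the self-attention operator of Definition \ref{d:4} becomes the averaging operator $v \mapsto \frac{1}{|D|}\int_D Vv(y)\,\mathrm{d}y$, and the problem reduces to the universal approximation theorem of \citet{lanthaler2023nonlocal} (with \citet{Pinkus_1999} underlying the shallow-network step). The one genuine issue is in your second step. Theorem 2.1 of \citet{lanthaler2023nonlocal} concerns the minimal ``averaging'' architecture whose hidden layer applies the activation to the sum of local and averaged features, i.e. $v(\cdot)\mapsto f\bigl(W_1 v(\cdot)+b_1+\frac{1}{|D|}\int_D v\bigr)$, sandwiched between two shallow pointwise maps. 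By fixing $W_3=0$, $W_2=I$ you delete exactly this hidden nonlinearity, so the subfamily you exhibit --- a purely linear concatenation of pointwise and averaged features followed by the single shallow network $\sT_{\textrm{out}}$ --- is not the architecture covered by that theorem, and ``transporting the matched parameters back through the wiring'' is not a verbatim citation: you would need a separate argument (re-entering the cited proof, or re-approximating the composite of their projection network with their hidden activation by a single shallow network in the $C^{s'}$ norm, uniformly over the compact range of features) to conclude that your reduced family is still universal. The fix is to place the nonlinearity where the encoder already offers it: choose $W_2=0$, $W_3=W_4=I$, $b_2=0$ in \eqref{eq:recurrence_enc_func_ua}, so the layer is exactly $f\bigl(W_1 v(\cdot)+b_1+\frac{1}{|D|}\int_D v\bigr)$; with $\sT_{\textrm{in}}$ and $\sT_{\textrm{out}}$ playing the roles of the lifting and projection networks, the transformer class then literally contains the minimal architecture of \citet{lanthaler2023nonlocal}, and \eqref{eq:ua_condition} follows at once from their Theorem 2.1. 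This is precisely the paper's choice of weights.

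A smaller remark: the explicit construction of auxiliary feature fields inside $\sT_{\textrm{in}}$ (trigonometric systems, near-copies of $(u(x),x)$) and the $s'>s$ bookkeeping are unnecessary --- the cited theorem already constructs the lifting, the averaged finite-dimensional encoding of the compact set $K$, and the codomain regularity; once the architectural containment above is established, nothing further needs to be built by hand.
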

\begin{proof}
We begin by noting that for $Q,K=0$ and $V = I$, the self-attention mapping reduces to 
\begin{equation}
\sA\bigl(v;Q,K,V \bigr)(\cdot) = \mathbb{E}_{y\sim p(y:v,x)}[V v(\cdot)] = \frac{1}{|D|}\int v(x)~\mathrm{d} x.
\end{equation}
For weights $W_2=0$, $W_3=W_4=I$ and $b_2 = 0$, the transformer encoder neural operator layer \eqref{eq:recurrence_enc_func_ua} reduces to the mapping 
\begin{equation}
\label{eq:nonlocal_NO}
    v(\cdot) \mapsto f\left(W_1v(\cdot) + b_1 + \frac{1}{|D|}\int v(x)~\mathrm{d} x \right).
\end{equation}
The existence of $W_1,R_1,R_2,P_1,P_2,b_1,b_R,b_R',b_P,b_P'$ so that $\sG(\cdot\,;\theta)$ satisfies \eqref{eq:ua_condition} then follows from \citet[Theorem 2.1]{lanthaler2023nonlocal}, which also involves the application of the universality result for two-layer neural networks of \citet[Theorem 4.1]{Pinkus_1999}.
\end{proof}

The analysis in \citet{lanthaler2023nonlocal} allows the derivation of an analogous universal approximation theorem for functions belonging to Sobolev spaces. This generalization is the content of
the next theorem. The proof follows easily by applying the same argument as in the proof of Theorem \ref{thm:ua1} and the result of \citet[Theorem 2.2]{lanthaler2023nonlocal}.
\begin{theorem}
\label{thm:ua2}
Let $D\subset \R^d$ be a bounded domain with Lipschitz boundary and fix integers $s,s'\geq 0$, $p,p'\in[1,\infty)$. If $\sG^\dagger:W^{s,p}\bigl(D;\R^r\bigr) \to W^{s',p'}\bigl(D;\R^{r'}\bigr)$
is a continuous operator and $K\subset W^{s,p}\bigl(D;\R^r\bigr)$ a compact set of bounded functions so that $\sup_{u\in K}\|u\|_{L^\infty}<\infty$, then for any $\epsilon>0$, there exists a transformer neural operator 
$\sG(\cdot;\theta):K\subset W^{s,p}\bigl(D;\R^r\bigr) \to W^{s',p'}\bigl(D;\R^{r'}\bigr)$ so that
\begin{equation}
\sup_{u\in K} \left\|\sG^\dagger(u) - \sG(u;\theta)   \right\|_{W^{s',p'}} \leq \epsilon.
\end{equation}
\end{theorem}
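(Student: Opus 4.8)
The plan is to mirror the proof of Theorem~\ref{thm:ua1} essentially verbatim, the only change being that the Sobolev-space universality result \citet[Theorem 2.2]{lanthaler2023nonlocal} is invoked in place of \citet[Theorem 2.1]{lanthaler2023nonlocal}. First I would specialize the self-attention operator by setting $Q=K=0$ and $V=I$. By Definition~\ref{d:4}, with $Q=K=0$ the density $p(\cdot;v,x)$ collapses to the uniform density $1/|D|$ on $D$, independently of $v$ and $x$, so that $\sA(v;0,0,I)(x)=\frac{1}{|D|}\int_D v(y)\,\mathrm{d}y$ for every $x\in D$. This spatial average is well defined for $v\in W^{s,p}\bigl(D;\R^{d_\textrm{model}}\bigr)$ because $D$ is bounded with Lipschitz boundary, hence $W^{s,p}(D)\hookrightarrow L^1(D)$; and the resulting constant function trivially lies in $W^{s',p'}$.

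Next I would collapse the encoder layer \eqref{eq:recurrence_enc_func_ua} to the minimal nonlocal form by choosing $W_2=0$, $W_3=W_4=I$, $b_2=0$, which turns $\sE$ into the single map
\[
v(\cdot)\;\mapsto\; f\!\left(W_1 v(\cdot)+b_1+\frac{1}{|D|}\int_D v(y)\,\mathrm{d}y\right),
\]
i.e. exactly the architecture $v\mapsto f(Wv+b+\bar v)$ — composed with affine pre- and post-maps — that is analyzed in \citet{lanthaler2023nonlocal}. The layers $\sT_{\textrm{in}},\sT_{\textrm{out}}$ already have the two-layer neural-network form required there, and the positional concatenation $(u(x),x)$ plays the same role as in that reference; since $x\mapsto x$ is smooth and bounded on $\Bar D$, appending it changes neither Sobolev regularity nor the compactness of the image of $K$.

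With the reduction in place the conclusion is immediate from \citet[Theorem 2.2]{lanthaler2023nonlocal}: given the continuous $\sG^\dagger:W^{s,p}(D;\R^r)\to W^{s',p'}(D;\R^{r'})$, the compact, uniformly $L^\infty$-bounded set $K$, and $\epsilon>0$, that theorem supplies $R_1,R_2,P_1,P_2,W_1$ and the bias vectors so that the resulting transformer neural operator satisfies the $\epsilon$-bound in the $W^{s',p'}$ norm. The hypothesis $\sup_{u\in K}\|u\|_{L^\infty}<\infty$ is exactly what \citet{lanthaler2023nonlocal} needs in order to control the composition of the smooth Lipschitz nonlinearity $f$ with Sobolev functions (via Moser/Gagliardo--Nirenberg type estimates) and to guarantee that the hidden and output layers stay in $W^{s',p'}$; this is why it appears in the statement.

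The one piece of genuine bookkeeping — and the place I would be most careful — is checking that the hypotheses of \citet[Theorem 2.2]{lanthaler2023nonlocal} really hold \emph{after} the reductions $Q=K=0$, $W_2=0$, $W_3=W_4=I$, $b_2=0$: namely (i) that the reduced $\sA$ is genuinely the averaging operator on all of $W^{s,p}$, not merely on continuous functions; (ii) that $\sT_{\textrm{out}}\circ\sE\circ\sT_{\textrm{in}}$ has precisely the functional form for which the theorem is stated; and (iii) that $\sT_{\textrm{in}}(K)$ is again compact and uniformly $L^\infty$-bounded in $\cV$, so that the $f$-composition estimates apply downstream. Each of these follows routinely from the Sobolev embeddings available on bounded Lipschitz domains, but they constitute the substantive content; once verified, nothing further is required beyond quoting \citet[Theorem 2.2]{lanthaler2023nonlocal} and, internally to it, the two-layer network universality of \citet[Theorem 4.1]{Pinkus_1999}.
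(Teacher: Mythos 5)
Your proposal is correct and follows essentially the same route as the paper, which likewise reduces the attention block to spatial averaging via $Q=K=0$, $V=I$ and the weight choices $W_2=0$, $W_3=W_4=I$, $b_2=0$, and then invokes \citet[Theorem 2.2]{lanthaler2023nonlocal} in place of Theorem 2.1 exactly as in the proof of Theorem~\ref{thm:ua1}. The additional bookkeeping points you flag are sound and go slightly beyond the paper's one-line proof, but introduce no divergence in method.
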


%%%%%%%%%%%%%%%%%%%%%%%%%%%%%%%%%%%%%%%%%%%%%%%%%
%%%%%%%%%%%%%%%%%%%%%%%%%%%%%%%%%%%%%%%%%%%%%%%%%

\section{Numerical Experiments}
\label{sec:numerics}

In this section we illustrate, through numerical experiments, the capability of the transformer neural operator architectures described in Section \ref{sec:transformers_operator}. Throughout, we consider the supervised learning problem described by the data model in \eqref{eq:data_model} and take the viewpoint of surrogate modeling to construct approximations of the operators $\sG^\dagger$. In Subsection \ref{subsec:1d_experiments} we consider problems given by dynamical systems and ordinary differential equations, namely operators acting on functions defined on a one-dimensional time domain. In this context, we explore the use of the transformer neural operator for operator learning problems given by the Lorenz `63 dynamical system and a controlled ODE. On the other hand, in Subsection \ref{subsec:2d_experiments} we consider problems given by partial differential equations equations, namely operators acting on functions defined on a two-dimensional spatial domain. In this context we explore the use of the transformer neural operator and of the more efficient ViT neural operator and Fourier attention neural operator architectures for operator learning problems given by the Darcy flow and Navier-Stokes equations. The code repository for the experiments in this paper may be found at \href{https://github.com/EdoardoCalvello/TransformerNeuralOperators}{https://github.com/EdoardoCalvello/TransformerNeuralOperators}.

\begin{remark}[Relative Loss]
For each neural operator architecture we optimize for the parameters $\theta \in\Theta$ according to the relative loss, i.e.
\begin{equation}
    \inf_{\theta\in\Theta}\, \frac1J\sum_{j=1}^J
\left(\frac{\|\sG\bigl(u^{(j)};\theta\bigr)-\sG^\dagger\bigl(u^{(j)}\bigr)\|_{\cZ}}{\|\sG^\dagger\bigl(u^{(j)}\bigr)\|_{\cZ}} \right),
\end{equation} 
for data points $\{u^{(j)}\}_{j=1}^J$.
\end{remark}

\subsection{1D Time Domain}
\label{subsec:1d_experiments}
In this subsection we consider problems where attention is applied to functions in the one-dimensional time domain. The first setting we consider is that of the Lorenz `63 dynamical system \citep{lorenz1963deterministic}, a simplified model for atmospheric convection. The dynamics are given by 
\begin{equation}\label{eq:l63}
\begin{alignedat}{2}
   {dx} &= \sigma (y - x){dt}, &\quad& \\
    {dy} &= \bigl(x (\rho - z) - y\bigr){dt}, && \\
    {dz} &= \bigl(xy - \beta z \bigr){dt},&&
\end{alignedat}
\tag{Lorenz `63}
\end{equation}
with parameters set to $\sigma=10$, $\rho=28$ and $\beta=8/3$. We consider learning the operator $\sG^\dagger:C\bigl([0,T];\R^{d_x} \bigr)\times \R^{d_y}\times \R^{d_z}\to C\bigl([0,T];\R^{d_y} \bigr)\times C\bigl([0,T];\R^{d_z} \bigr)$ given by 
\begin{equation}
\label{eq:xtoyz}
\sG^\dagger:\{x(t),y(0),z(0)\}_{t\in{[0,T]}} \mapsto \{y(t),z(t)\}_{t\in{[0,T]}}.
\end{equation}
%so that if $(u,z_0)\sim \tau$, then $\pi = \sG^\dagger_\sharp\tau$. 
In this case, $\sG^\dagger$ is known to exist and given by the solution operator for the system of linear ODEs arising from the second and third components of \eqref{eq:l63}, given known first component. We note that including the initial condition of the unobserved trajectories in the model introduces a discrepancy in dimension within the input function to the model. Namely, denoting by $u$ the input to the model, $u(0)\in \R^{3}$ while $u(t)\in \R$ for $t\neq 0$. We overcome this issue by learning separate embedding linear transformations $W_{\textrm{in}_0}$ and $W_{\textrm{in}}$, for the initial condition and the rest of the trajectory, respectively. In Figure~\ref{fig:xtoyz} we display the results obtained by applying a transformer neural operator model on a test data set of $1000$ samples with the same discretization ($\Delta t=0.01$ and $T=2$) as the training set of $8000$ samples on the operator learning problem described by \eqref{eq:xtoyz}. The operator is successfully learned, and displays median relative $L^2$ error of less than $1\%$, with worst case rising to just below $10\%.$

\begin{figure}[h!]
\centering
\includegraphics[width=\linewidth]{./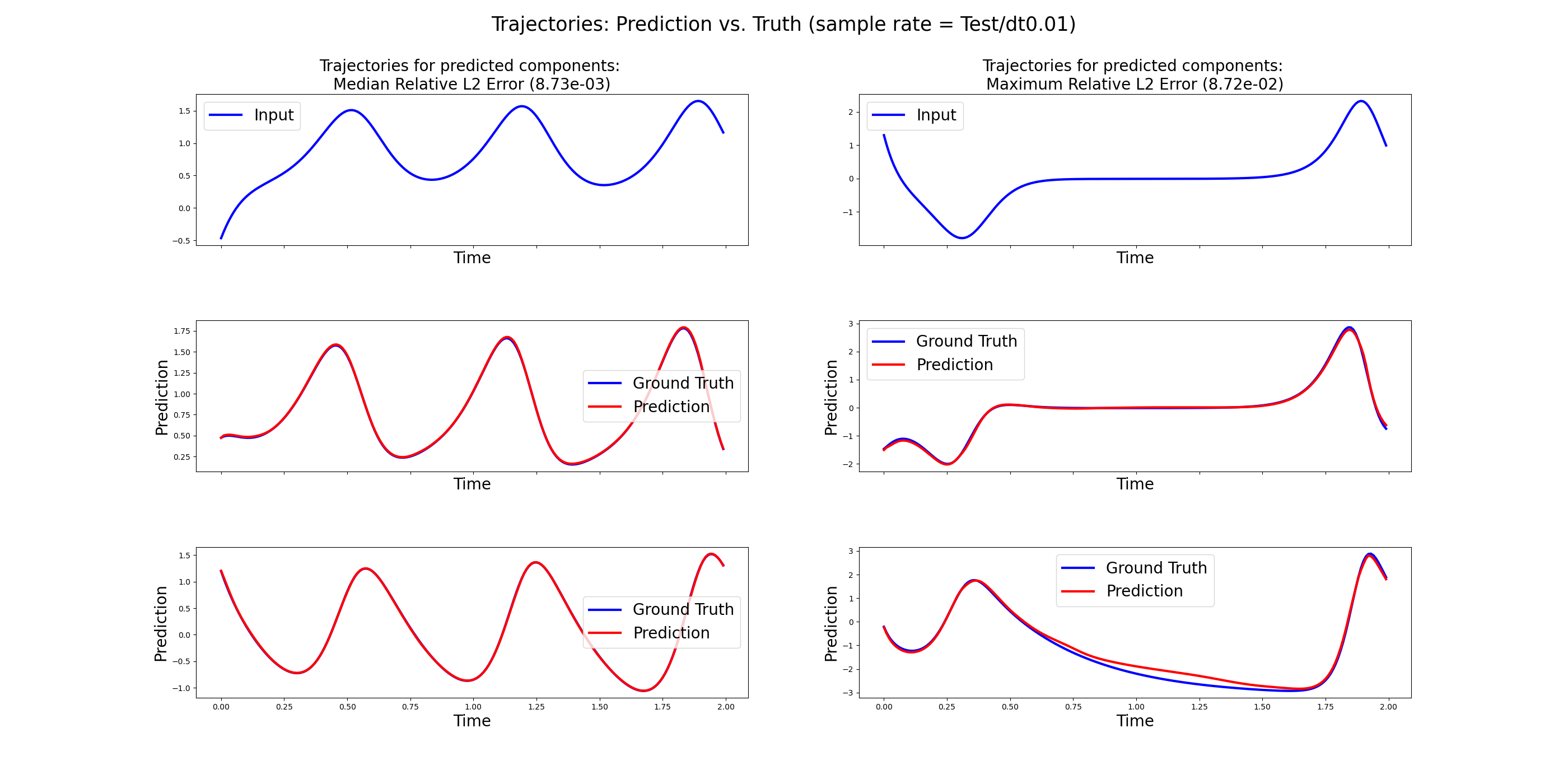}
\caption{The panel displays the performance
% in relative $L^2$ errors 
of the transformer neural operator when applied to the Lorenz `63 operator learning problem of recovering both unobserved $y$ and $z$ trajectories. In each column, the top plot shows the input $x$ trajectory data while the second and third row display the predicted against true $y$ and $z$ trajectories, respectively; the left column concerns the testing example achieving a median relative $L^2$ error, and the right columns shows the test example achieving the worst error.}
\label{fig:xtoyz}
\end{figure}

We also experiment with a setting for which $\sG^\dagger$ is not well-defined: we study the
recovery of unobserved trajectory $\{y(t)\}_{t\in{[0,T]}}$ from the observed trajectory 
$\{x(t)\}_{t\in{[0,T]}}$ so that
$\sG^\dagger:C\bigl([0,T];\R^{d_x} \bigr)\to C\bigl([0,T];\R^{d_y} \bigr)$  and
\[
\sG^\dagger: \{x(t)\}_{t\in{[0,T]}}\mapsto \{y(t)\}_{t\in{[0,T]}}.
\]
This operator would only be well-defined if $(y(0),z(0))$ were included as inputs. We
nonetheless expect accurate recovery of the hidden trajectory by the property of synchronization \citep{pecora1990synchronization,hayden2011discrete}.

We investigate an additional operator learning problem given by the setting of the controlled ODE
\begin{equation}
\label{eq:cde}
\begin{alignedat}{2}
    {dz} &= \sin(z) {du}, &\quad& z(0) = z_0, \\
    u(t) &= \sum_{j=1}^J\xi_j\sin(\pi \eta_j t),&\quad& \xi_j\sim\cN(0,1)\,\text{i.i.d. }, \eta_j\sim \unif([0,J])\,\text{i.i.d. }.  &&
\end{alignedat}
\tag{CDE}
\end{equation}
In this case we aim to approximate the operator $\sG^\dagger: C\bigl([0,T];\R^{d_u} \bigr) \to C\bigl([0,T];\R^{d_z} \bigr)$ defined by 
\[
\sG:\{u(t);\xi,\eta\}_{t\in{[0,T]}}\mapsto \{z(t);\xi,\eta\}_{t\in{[0,T]}}.
\]
For this problem we make the choice $J=10$.

The experimental settings we consider for the continuous time operator learning
problem may be viewed through the lens of data assimilation as smoothing problems \citep{Sanz-Alonso_Stuart_Taeb_2023}. Indeed, the mappings to be learned represent the recovery of an unobserved trajectory $\{z(t)\}_{t\in{[0,T]}}$ conditioned on the observation of a whole trajectory $\{u(t)\}_{t\in{[0,T]}}$.

\begin{figure}[h!]
\centering
\includegraphics[width=0.8\linewidth]{./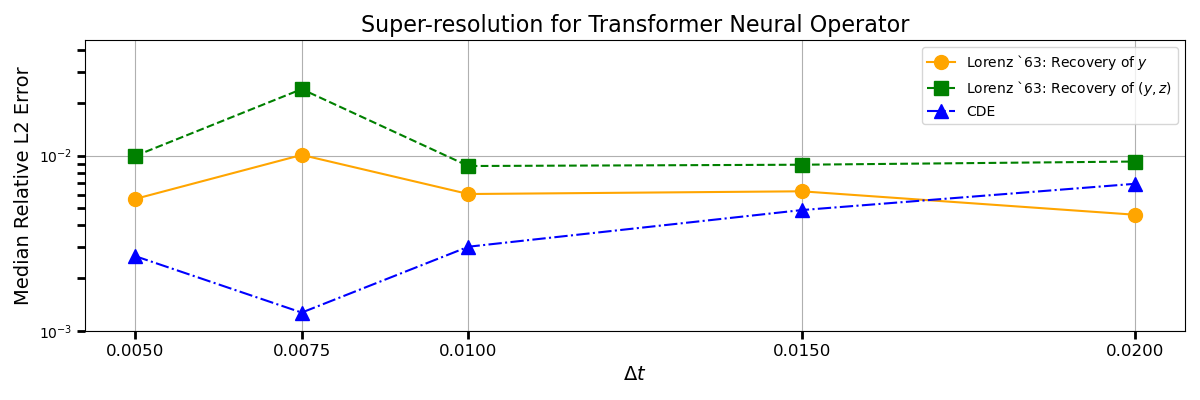}
\caption{The panel displays the performance in relative $L^2$ errors (in log-scale) of the transformer neural operator when applied to the Lorenz `63 and controlled ODE operator learning test sets of different resolutions at inference time (without retraining). All models were parametrized by $d_{\textrm{model}}=128$ and $6$ encoder layers, and were trained with $8000$ trajectories with $T=2$, sampled according to $\Delta t = 0.01$.}
\label{fig:superresolution_1d}
\end{figure}

In Figure \ref{fig:superresolution_1d} we demonstrate the mesh-invariance property of the transformer neural operator in the context of the operator learning problems arising from equations \eqref{eq:l63} and \eqref{eq:cde}. 
A key motivation for considering the continuum formulation of attention and the related transformer neural operator is the property of zero-shot generalization to different non-uniform meshes. To demonstrate
this, in Figure \ref{fig:irregular} we consider the operator learning problem of mapping the observed $x$ trajectory of \eqref{eq:l63} to the unobserved $y$ trajectory. We display the median relative $L^2$ error and worst error samples from a test set consisting of trajectories $\{x(t)\}_{t\in\cT_\textrm{test}}$, where the test time index set $\cT_\textrm{test}$ is defined as
\begin{equation}
\label{eq:irregular_set}
\cT_\textrm{test}\coloneqq \{n\Delta t \}_{n=0}^{N/2}\cup\{2n\Delta t\}_{n=N/2+1}^{3N/4},
\end{equation}
for $\Delta t\coloneqq T/N$, where the model deployed is trained on trajectories indexed at a set $\cT_\textrm{train}$ defined by 
\begin{equation}
\label{eq:regular_set}
\cT_\textrm{train} \coloneqq \{n\Delta t \}_{n=0}^{N}.
\end{equation}
In other words, the second half of the testing domain is up-sampled by a factor of 2 compared to the training discretization. As seen qualitatively in Figure \ref{fig:irregular} testing on the non-uniform discretization 
based on a model trained on the uniform grid is successful. The use of different grids in test and train data does
lead to a larger error than the one incurred when they match (Figure \ref{fig:superresolution_1d}),  but the errors remain small. It is our continuum formulation of attention from \Cref{subsubsec:vanilla_transformer_implement} that enables this direct generalization; indeed, as displayed in \Cref{fig:irregular_t}, the transformer from \cite{vaswani2017attention} when applied to the same setting yields larger errors than the transformer neural operator, due to the TNO's reweighting based on the discretization of the grid.
 
\begin{figure}[h!]
\centering
\includegraphics[width=0.8\linewidth]{./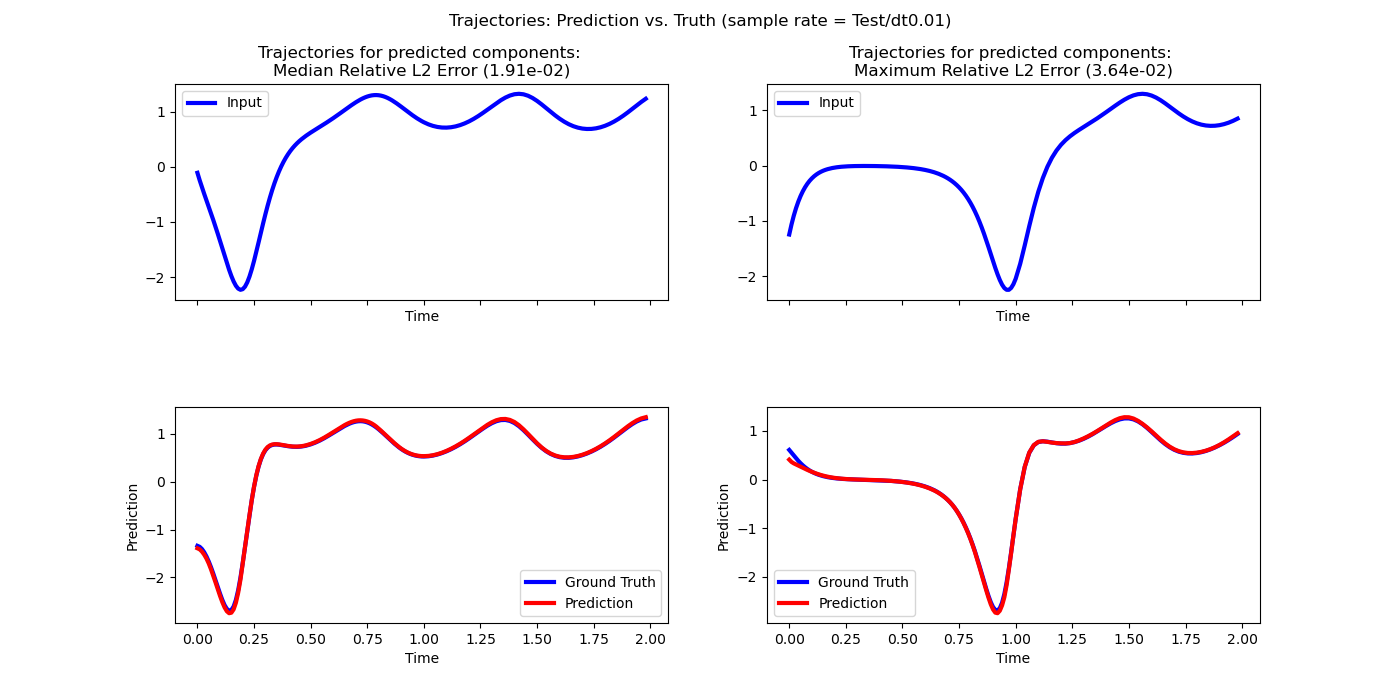}
\caption{The panel displays the performance in relative $L^2$ errors of the transformer neural operator when applied to the Lorenz `63 operator learning problem of recovering the unobserved $y$ trajectory. The results displayed concern a model trained on trajectories indexed on the index set \eqref{eq:regular_set} deployed on a test of trajectories indexed on \eqref{eq:irregular_set}. In each column, the top plot shows the input data and the bottom compares the prediction and ground truth; the left column shows the testing example achieving a median error, and the right columns shows the test example achieving the worst error.}
\label{fig:irregular}
\end{figure}

\begin{figure}[h!]
\centering
\includegraphics[width=0.8\linewidth]{./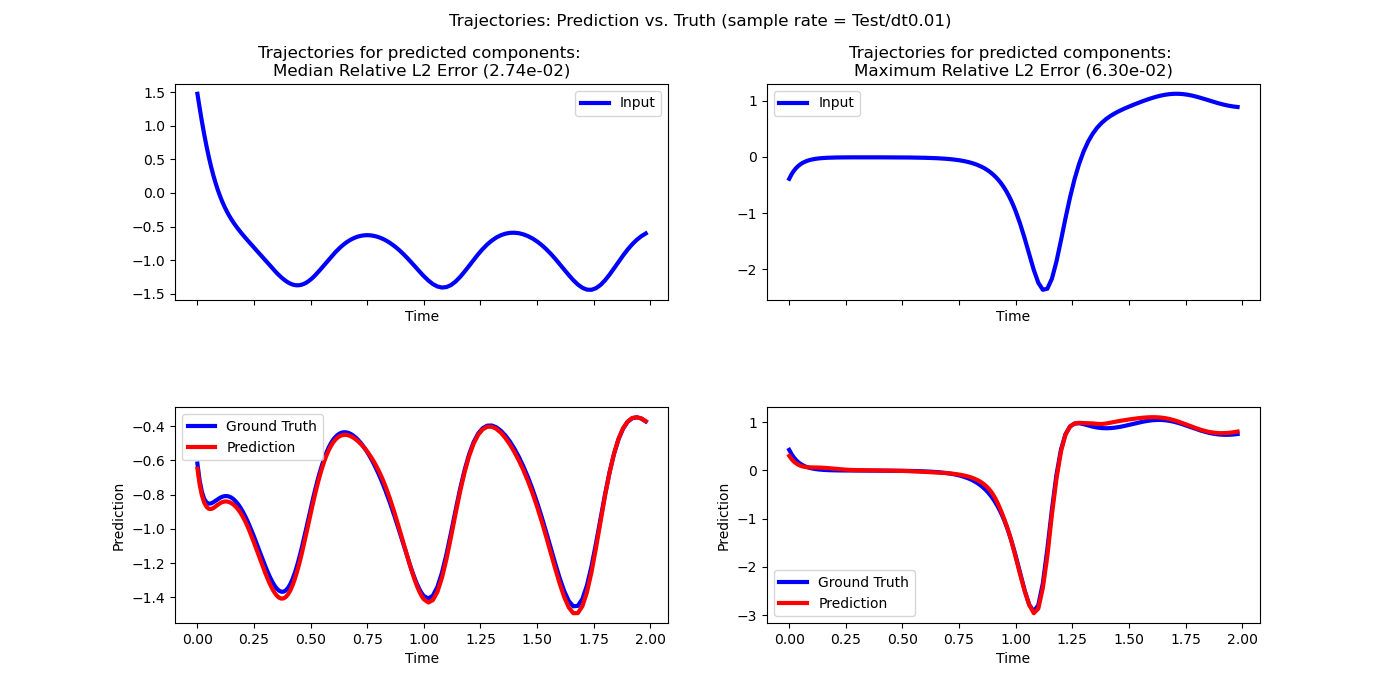}
\caption{The panel displays the performance in relative $L^2$ errors of the transformer from \cite{vaswani2017attention} when applied to the Lorenz `63 operator learning problem of recovering the unobserved $y$ trajectory. The results displayed concern a model trained on trajectories indexed on the index set \eqref{eq:regular_set} deployed on a test of trajectories indexed on \eqref{eq:irregular_set}. In each column, the top plot shows the input data and the bottom compares the prediction and ground truth; the left column shows the testing example achieving a median error, and the right columns shows the test example achieving the worst error.}
\label{fig:irregular_t}
\end{figure}

\subsection{2D Spatial Domain}
\label{subsec:2d_experiments}

%{\color{red} I think that your (excellent) writing can be imporved by paying more attention to the order in which you say things. Think hierarchically and logically when putting text together.}

In this subsection we study two different 2d operator learning problems -- flow in a porous medium, 
governed by the Darcy model, and incompressible viscous fluid mechanics governed by the Navier--Stokes
equation, in particular, a Kolmogorov flow. We apply the three architectures developed in
Sections \ref{sec:attention} and \ref{sec:patch_attention}, concentrating on the patched architectures. 
In the first Subsection \ref{subsec:2dr}
we report the results obtained by applying the neural operators described to an array of operator learning problems, where the domain of the functions is two-dimensional i.e. $D\subset \R^2$. 
In Subsection \ref{subsubsec:darcy} we describe the settings of the two Darcy flow operator learning problems considered, along with the parametrization and training details for the various architectures. Similarly in Subsection \ref{subsubsec:NS}, we describe the Kolmogorov flow operator learning problem and relevant implementation details. Subsections \ref{subsubsec:darcy} and \ref{subsubsec:NS} also contain further
numerical results illustrating the behavior of the proposed transformer neural operators,
and discretization invariance in particular.

\subsubsection{Results}
\label{subsec:2dr}

The results presented concern the two variants of Darcy flow, with lognormal and piecewise constant
inputs, and the Kolmogorov flow. The operator learning architectures we deploy include the three
transformer-based methods introduced in this paper, and the implementations of 
the Fourier neural operator from the ``NeuralOperator'' library \citep{li2021fourier,kovachki2023neural}, the Galerkin transformer from the ``galerkin-transformer'' library \citep{cao2021choose}, and the AFNO \citep{guibas2022efficient} from the ``physics-nemo'' library \citep{physicsnemo2023}. We note that in order to obtain a numerical comparison of the attention mechanisms themselves, in the context of the Galerkin transformer architecture we only use the Galerkin transformer encoder, with lifting and projections achieved by linear layers as in TNO. This allows for a more direct comparison between the Fourier attention from \citet{cao2021choose} and our continuum attention, without potential expressivity gain from additional architectural components. In the following discussion, we do not include experiments performed using  the Codomain-attention neural operator \citep{rahman2024pretraining} in the high-resolution settings: the memory usage with the implementation from the ``NeuralOperator'' library did not allow for comparably sized models, in terms of parameters; the performance of the models we trained was hence incomparably inferior to the others.

Our first results compares the transformer neural operator from Subsection \ref{subsec:vanilla_transformer}
with FNO, AFNO and the Galerkin transformer (employing Fourier attention) on the lognormal Darcy Flow problem in a low $64\times 64$ resolution setting. Table \ref{tab:2d_median_errors_low} provides a brief summary of the test relative $L^2$ errors obtained. 
It is notable that the transformer architecture obtains the best performance with an order of magnitude fewer parameters. However, since higher resolution
renders the architecture from Subsection \ref{subsec:vanilla_transformer} impractical in dimensions
$d \ge 2$ the remainder of our experiments use the patched ViT and Fourier attention neural operators from 
Subsections \ref{subsec:patch_transformer} and  \ref{subsec:fourier_patch_transformer} respectively.

\begin{table}[htbp]
    \centering
    \begin{tabular}{|l|c|c|}
        \hline
        \cellcolor{gray!30}\textbf{Architecture} & \cellcolor{gray!30}\textbf{Number of Parameters}&\cellcolor{gray!30}\textbf{Darcy Flow}  \\
        \rowcolor{gray!30} \textbf{} &  &
        \textbf{Lognormal Input} \\
        \hhline{|===|}
        \textbf{Transformer NO} & $5.98 \cdot 10^5$ & $1.19\cdot 10^{-2}$\\
        \hline
        \textbf{FNO}& $5.70\cdot 10^6$ &$1.96\cdot 10^{-2}$\\
        \hline
        \textbf{AFNO}& $1.10\cdot 10^6$ &$2.57\cdot 10^{-2}$\\
        \hline
        \textbf{Galerkin Transformer}& $6.04\cdot 10^5$ &$7.30\cdot 10^{-2}$\\
        \hline
    \end{tabular}
    \caption{Median relative $L^2$ error for each architecture applied to a low resolution experimental setting.}
    \label{tab:2d_median_errors_low}
\end{table}

We now consider training two patched architectures on the Darcy Flow problem with piecewise
constant inputs and the Kolmogorov Flow problem; in both cases 
we use a $416\times 416$ resolution setting.  We compare cost versus accuracy for the different neural operators implemented, defining cost in terms of both number of parameters and number of FLOPS. In Table \ref{tab:evaluation_cost} we present the FLOPS for each method; details of the derivation are provided in Appendix \ref{Appendix:complexity}. We note that the parameter scaling for the transformer neural operator is given by $\cO\bigl(d_{\textrm{model}}^2 \bigr)$, while the scaling for the FNO and the Fourier attention neural operator is $\cO\bigl(d_{\textrm{model}}^2 \cdot k_{\textrm{max}}\bigr)$, where $k_{\textrm{max}}$ is the total number of Fourier modes used. The parameter scaling for the ViT neural operator is given by $\cO\bigl(d_{\textrm{model}} \cdot k_{\textrm{max}}+d_{\textrm{model}}^2\bigr)$. We note that the parameter scaling for the AFNO architecture is given by $\cO\bigl(Nd_{\textrm{model}}+d_{\textrm{model}}^2 \bigr)$, where the $Nd_{\textrm{model}}$ appears because of the learnable positional encoding. Further details are provided in Appendix \ref{Appendix:complexity}.

\begin{table}[htbp]
    \centering
    \begin{tabular}{|l|c|}
        \hline
        \cellcolor{gray!30}\textbf{Architecture} & \cellcolor{gray!30}\textbf{Scaling} \\
        \hhline{|==|}
        \textbf{FNO} &  $\cO\bigl(d_{\textrm{model}}N\log(\sqrt{N})+Nd^2_{\textrm{model}} \bigr)$\\
        \hline
        \textbf{AFNO} &  $\cO\bigl(d_{\textrm{model}}N\log(\sqrt{N})+Nd^2_{\textrm{model}}/b \bigr)$\\
        \hline
        \textbf{Transformer NO}& $\cO\bigl(d_{\textrm{model}}N^2+Nd^2_{\textrm{model}}\bigr)$\\
        \hline
        \textbf{ViT NO}&
        $
    \cO\Bigl(d_{\textrm{model}}N\bigl(P+\log(\sqrt{N/P})\bigr)+N\log(\sqrt{N}) +Nd^2_{\textrm{model}}\Bigr)
        $\\
        \hline
        \textbf{FANO}& $
        
        \cO\Bigl(d_{\textrm{model}}N\bigl(P+\log(\sqrt{N/P})\bigr)
        +N\log(N) +Nd^2_{\textrm{model}}\Bigr)
        $\\
        \hline
    \end{tabular}
    \caption{Evaluation cost (measured in FLOPS) for the three proposed transformer neural operators compared to the Fourier neural operator \citep{li2021fourier} and adaptive Fourier neural operator \citep{guibas2022efficient}. We include the scaling of the AFNO as implemented for our use case, i.e. not employing patching and not employing mode truncation; we further note that in the context of this architecture $b$ is a chosen integer hyperparameter. We give further details on the derivation of these scalings in Appendix \ref{Appendix:complexity}.}
    \label{tab:evaluation_cost}
\end{table}

Figures \ref{fig:parameter_accuracy} and \ref{fig:evaluation_cost_accuracy} display the cost-accuracy trade-off. Table \ref{tab:2d_median_errors_high} provides the test relative $L^2$ errors values, depicted in the cost-accuracy analysis figures, obtained with the models with the highest parameter count. 
Figure \ref{fig:parameter_accuracy} demonstrates the parameter-efficiency of the ViT neural operator 
in comparison with the FNO, AFNO and Fourier attention neural operator. Once cost is defined
in terms of FLOPS it is apparent that both patched transformer architectures outperform the FNO;
furthermore in the Kolmogorov flow problem, the FNO exhibits accuracy-saturation, caused by the finite data set, whereas the patched
architectures are less affected, in the ranges we test here. This suggests that the patched transformer
architectures use the information content in the data more efficiently. The AFNO is comparable in terms of parameter count to FANO but more costly than ViTNO. On the other hand, even with no patching, the AFNO is cheaper in terms of evaluation complexity. Both ViTNO and FANO outperform the AFNO in terms of parameter-accuracy tradeoff in the context of the Darcy flow problem. For the same problem ViTNO and FANO bring an improvement in accuracy, albeit at a higher computational complexity. In the context of the Kolmogorov flow problem, ViTNO and FANO are competitive with AFNO in parameter-accuracy tradeoff. However for this problem, the AFNO can achieve higher accuracy with a lower complexity. It is important to note that because of the learnable positional embedding, and potential strided convolutions (if using patching), the AFNO is not a neural operator. This means that applying this architecture to different resolutions would require retraining, thus significantly increasing computational complexity. This is a fundamental advantage of the other architectures as their parameters may be deployed across different resolutions for training and inference. We recall the example of GenCast \citep{Price2025}, where a medium-range weather forecasting model is trained using a neural operator at low resolutions, and then finetuned at higher resolutions, thus drastically reducing the cost for such a large scale application.

\begin{figure}[h!]
\centering
\includegraphics[width=\linewidth]{./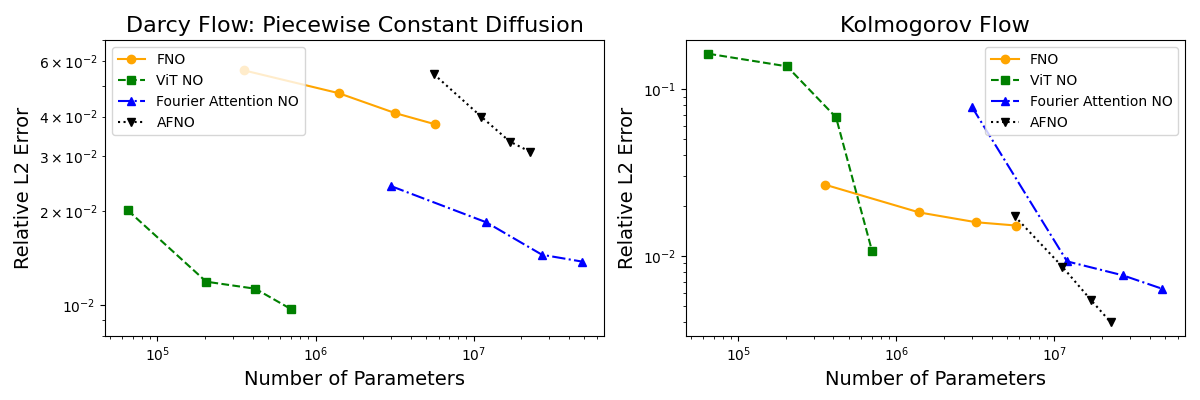}
\caption{The two panels display the test errors (in log-scale) against number of parameters for the architectures with the channel widths of 32, 64, 96 and 128.}
\label{fig:parameter_accuracy}
\end{figure}

\begin{figure}[h!]
\centering
\includegraphics[width=\linewidth]{./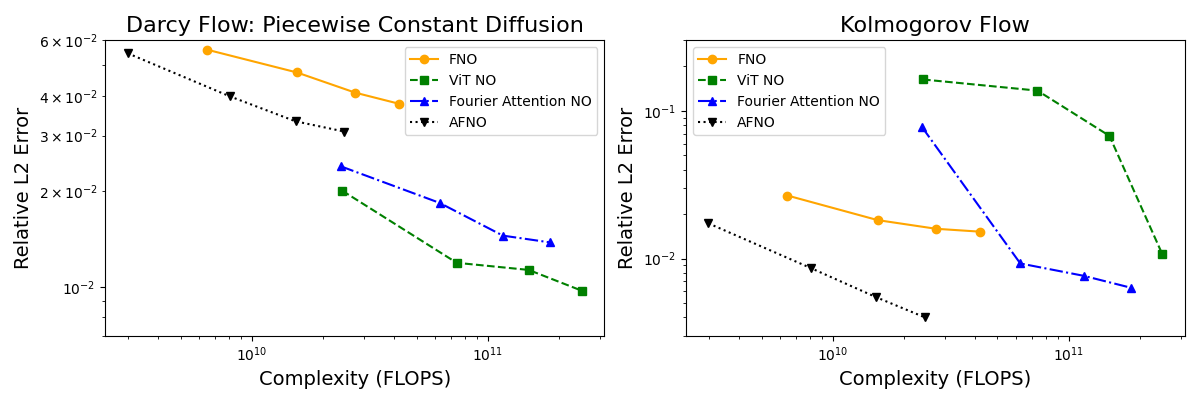}
\caption{The two panels display the test errors (in log-scale) against evaluation cost in FLOPS for the architectures with the channel widths of 32, 64, 96 and 128.}
\label{fig:evaluation_cost_accuracy}
\end{figure}

\begin{table}[htbp]
    \centering
    \begin{tabular}{|l|c|c|c|}
        \hline
        \cellcolor{gray!30}\textbf{Architecture} & \cellcolor{gray!30}\textbf{Darcy Flow} & \cellcolor{gray!30}\textbf{Darcy Flow} & \cellcolor{gray!30}\textbf{Kolmogorov Flow} \\
        \rowcolor{gray!30} \textbf{} & \textbf{Lognormal Input} & \textbf{Piecewise Constant Input} & \\
        \hhline{|====|}
        \textbf{ViT NO} & $7.68\cdot 10^{-3}$ &$9.72\cdot 10^{-3}$& $1.07\cdot 10^{-2}$\\
        \hline
        \textbf{FANO} & $8.39\cdot 10^{-3}$ &$1.38\cdot 10^{-2}$ & $6.34\cdot 10^{-3}$\\
        \hline
        \textbf{FNO} &$2.38\cdot 10^{-2}$ & $3.78\cdot 10^{-2}$ &$1.52\cdot 10^{-2}$ \\
        \hline
        \textbf{AFNO} &$2.66\cdot 10^{-2}$ & $3.09\cdot 10^{-2}$ &$3.99\cdot 10^{-3}$ \\
        \hline
    \end{tabular}
    \caption{Median relative $L^2$ error for each architecture applied to the different high resolution experimental settings.}
    \label{tab:2d_median_errors_high}
\end{table}

\subsubsection{Darcy Flow}
\label{subsubsec:darcy}
We consider the linear, second-order elliptic PDE defined on the unit square
\begin{equation}
\label{eq:Darcy}
\begin{alignedat}{2}
    -\nabla\cdot\bigl(a(x)\nabla u(x) \bigr) &= f(x), &\quad& x\in (0,1)^2,\\
    u(x) &= 0, &\quad& x\in\partial(0,1)^2,
\end{alignedat}
\tag{Darcy Flow}
\end{equation}
which is the steady-state of the 2d Darcy flow equation. The equation is equipped with Dirichlet boundary conditions and is well-posed when the diffusion coefficient $a\in L^\infty \bigl((0,1)^2;\R_+\bigr)$ and the forcing function is $f\in L^2 \bigl( (0,1)^2;\R \bigr)$. In this context, we will consider learning the nonlinear operator $\sG^\dagger: L^\infty \bigl((0,1)^2;\R_+\bigr)\to H^1_0 \bigl((0,1)^2;\R\bigr)$ defined as
\[
\sG^\dagger: a \mapsto u.
\]
We consider two different inputs. In both experimental settings the forcing function is chosen as $f\equiv 1$. 

In the first the data points $\{a^{(j)}\}_{j=1}^J$ are sampled from the probability measure 
\[
\mu_1\coloneqq ({T_1})_{\sharp}\cN\bigl(0,C \bigr),
\]
where $\cN\bigl(0,C \bigr)$ is a Gaussian measure with covariance operator $C$ defined as
\[
C = 12^2\bigl(-\Delta +6^2I \bigr)^{-2},
\]
where the Laplacian is equipped with zero Neumann boundary conditions and viewed as acting  between
spaces of spatially mean-zero functions, and $T_1(\cdot)=\exp(\cdot)$ is the exponential function. We hence refer to the data inputs $a^{(j)}$ as being \textit{lognormal}. In the second  
setting $\{a^{(j)}\}_{j=1}^J$ are sampled from the probability measure
\[
\mu_2\coloneqq ({T_2})_{\sharp}\cN\bigl(0,C \bigr),
\]
where the covariance operator is defined as before and
\[
%T_1(x) = \exp(x), \qquad 
T_2(x) = \begin{cases}
    12, \quad x\geq 0, \\
    3, \quad x<0.
\end{cases}
\]
We hence refer to the data inputs $a^{(j)}$ as being \textit{piecewise constant}.

\begin{figure}[h!]
\centering
\includegraphics[width=\linewidth]{./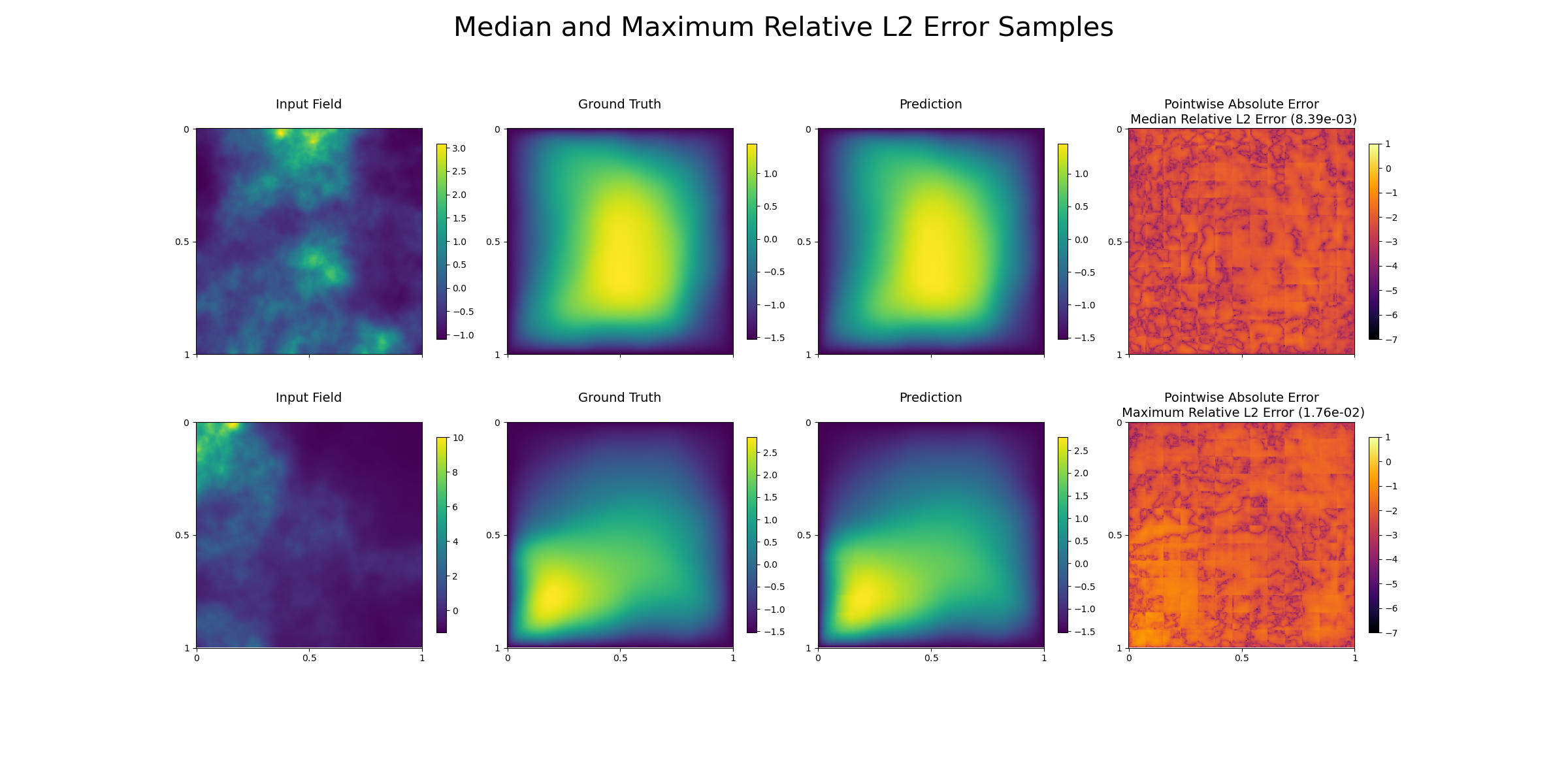}
\caption{The panel displays the result of the application of the Fourier attention neural operator on the Darcy flow experiment with lognormal diffusion for the median and maximum relative $L^2$ error samples. The first row shows the sample from the test set of the same resolution as the training set yielding the median relative $L^2$ error. On the other hand, the second row displays the sample yielding the maximum relative $L^2$ error. The first column of the panel displays the input diffusion coefficient $a(x)$ of the relevant sample. The second column shows the true solution $u(x;a)$, while the third column displays the the predicted solution. The last column displays the pointwise absolute error (in log-scale) between the prediction and truth.}
\label{fig:FANO_lognormal_samples}
\end{figure}

In the context of the lognormal experimental setting, we investigate a scenario with low resolution training samples, for which pointwise attention and hence the transformer neural operator
from Subsection \ref{subsec:vanilla_transformer} is suitable, and a high resolution scenario where patching becomes necessary for application of the patched-based architectures of 
Subsections \ref{subsec:patch_transformer} and \ref{subsec:fourier_patch_transformer}. 

For the low resolution setting, we train the FNO, AFNO, Galerkin transformer and TNO architectures using $3600$ independent samples of resolution $64\times 64$, we use $200$ samples for validation and $200$ samples for testing. The Fourier neural operator is parametrized by $12$ Fourier modes in each dimension and channel width of $128$; the model is trained using a batch size of $8$ and learning rate of $10^{-4}$. On the other hand, the transformer neural operator is parametrized by $128$ channels in the encoder and is trained using a batch size of $2$ and learning rate of $10^{-3}$. The AFNO is instantiated with $4$ layers of $128$ hidden channels, and does not employ patching; it is trained using a learning rate of $10^{-3}$ and batch size of $1$. The Galerkin transformer uses the Fourier type attention from \cite{cao2021choose}, with quadratic complexity, making it prohibitive for higher resolutions. It uses $4$ layers of $128$ hidden channels and is trained using a learning rate of $10^{-3}$ and batch size of $2$. On this problem we train all neural operators using the relative $H^1$  loss.

For the high resolution setting, we train the AFNO, FNO, ViT neural operator and Fourier attention neural operator architectures using $3600$ independent samples of resolution $416\times 416$; we use $200$ samples for validation and $200$ samples for testing. The Fourier neural operator is again parametrized by $12$ Fourier modes in each dimension and channel width of $128$; the model is trained using a batch size of $8$ and learning rate of $10^{-4}$. The AFNO is instantiated with $4$ layers of $128$ hidden channels, and does not employ patching; it is trained using a learning rate of $10^{-3}$ and batch size of $1$. Both the ViT neural operator and Fourier attention neural operator are parametrized by $128$ channels in the transformer encoder. The integral operator in the ViT NO is parametrized by $12$ Fourier modes in each dimension while the integral operators appearing in the Fourier attention neural operator are parametrized by $9$ Fourier modes in each dimension. Both architectures employ a final spectral convolution layer (see Remark \ref{rm:smoothing}) that is parametrized by $64$ Fourier modes in each dimension. Both of these neural operators are trained using a batch size of $1$ and learning rate of $10^{-3}$. On this problem we train all the neural operators with the relative $H^1$  loss. Figure \ref{fig:FANO_lognormal_samples} displays the high resolution test results obtained by applying the Fourier attention neural operator. 
%{\color{red} The Figure 10 refers to relative $L^2$; just checking that we indeed train in $H^1$ but display in $L^2$?}

\begin{figure}[h!]
\centering
\includegraphics[width=\linewidth]{./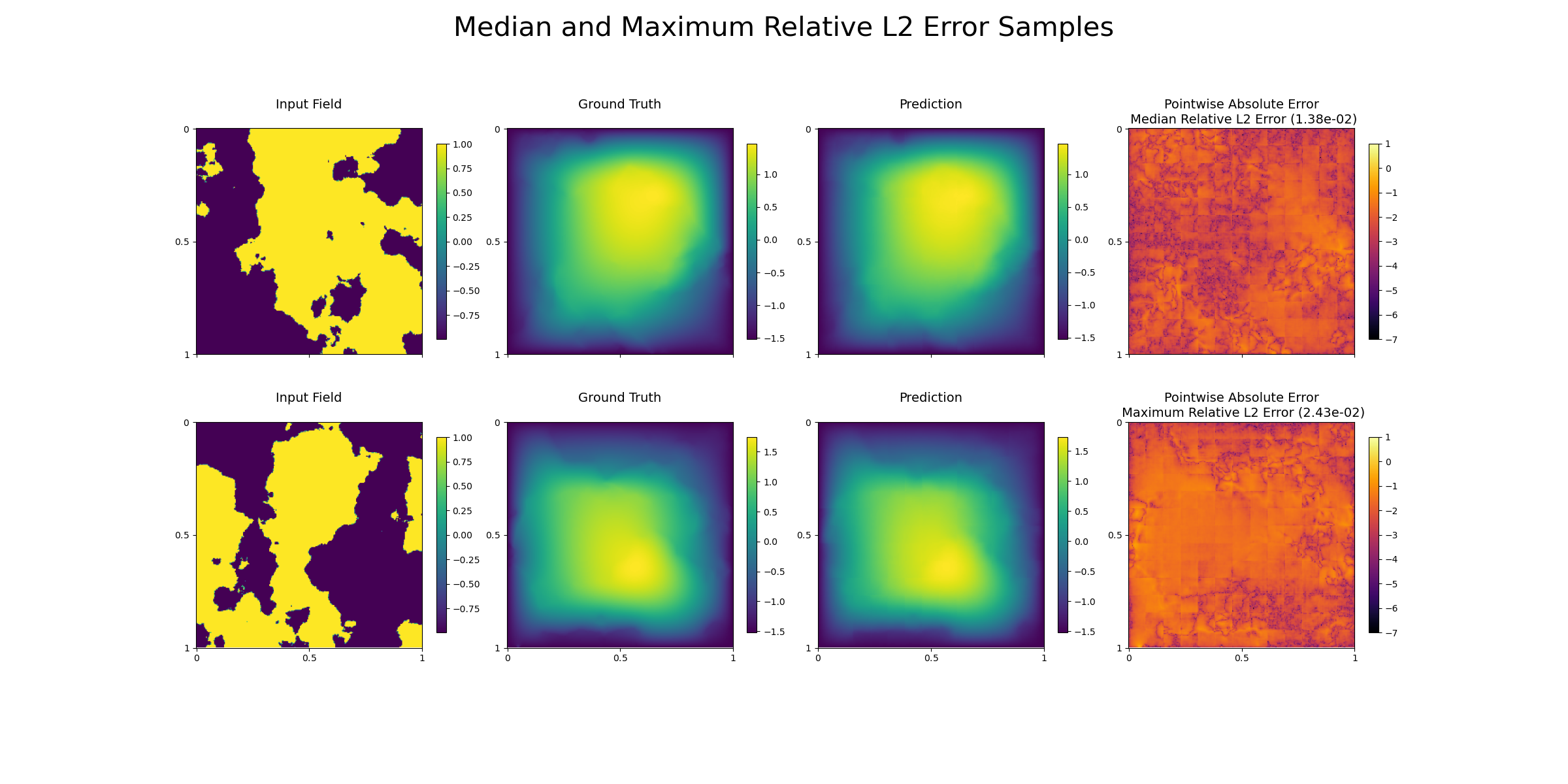}
\caption{The panel displays the result of the application of the Fourier attention neural operator on the Darcy flow experiment with piecewise constant diffusion for the median and maximum relative $L^2$ error samples. The first row displays the sample from the test set of the same resolution as the training set yielding the median relative $L^2$ error. On the other hand, the second row displays the sample yielding the maximum relative $L^2$ error. The first column of the panel displays the input diffusion coefficient $a(x)$ of the relevant sample. The second column shows the true solution $u(x;a)$, while the third column displays the the predicted solution. The last column displays the pointwise absolute error (in log-scale) between the prediction and truth.}
\label{fig:FANO_piecewise_samples}
\end{figure}

\begin{figure}[h!]
\centering
\includegraphics[width=\linewidth]{./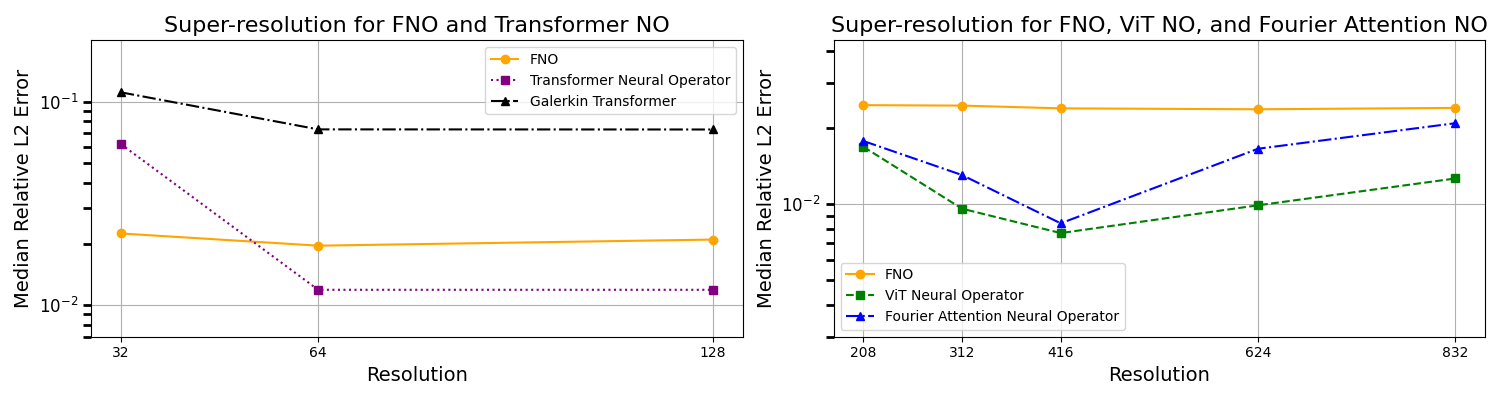}
\caption{The panel displays the performance in relative $L^2$ errors (in log-scale) of the various architectures when applied to Darcy flow with lognormal diffusion test sets of different resolutions at inference time (without retraining). The left panel concerns FNO, the transformer neural operator and the Galerkin transformer when trained on data of resolution $64\times 64$. On the other hand, the right panel concerns the FNO, ViT neural operator and Fourier attention neural operator architectures trained on data of resolution $416\times 416$.}
\label{fig:superresolution_lognormal}
\end{figure}

In the context of the piecewise constant experimental setting, we train the AFNO, FNO, ViT neural operator and Fourier attention neural operator architectures using $3600$ independent samples of resolution $416\times 416$, we use $200$ samples for validation and $200$ samples for testing. The Fourier neural operator is again parametrized by $12$ Fourier modes in each dimension and channel width of $128$; the model is trained using a batch size of $8$ and learning rate of $10^{-4}$. The AFNO is instantiated with $4$ layers of $128$ hidden channels, and does not employ patching; it is trained using a learning rate of $10^{-3}$ and batch size of $1$. Both the ViT neural operator and Fourier attention neural operator are parametrized by $128$ channels in the transformer encoder. The integral operator in the ViT NO is parametrized by $12$ Fourier modes in each dimension while the integral operators appearing in the Fourier attention neural operator are parametrized by $9$ Fourier modes in each dimension. Both architectures employ a final spectral convolution layer (see Remark \ref{rm:smoothing}) that is parametrized by $64$ Fourier modes in each dimension. Both of these neural operators are trained using a batch size of $1$ and learning rate of $10^{-3}$. On this problem we train all the neural operators with the relative $H^1$ loss. Figure \ref{fig:FANO_piecewise_samples} displays the high resolution test results obtained by applying the Fourier attention neural operator. 

\begin{figure}[h!]
\centering
\includegraphics[width=0.8\linewidth]{./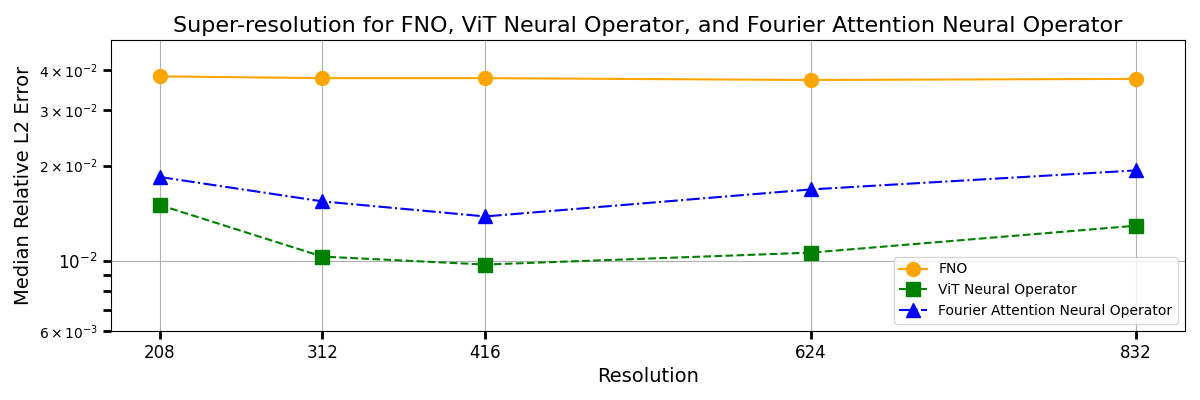}
\caption{The panel displays the performance in relative $L^2$ errors (in log-scale) of the FNO, ViT neural operator and Fourier attention neural operator when applied to Darcy flow with piecewise constant diffusion test sets of different resolutions at inference time (without retraining). All architectures are trained on data that is of resolution $416\times 416$.}
\label{fig:superresolution_piecewise}
\end{figure}

In Figures \ref{fig:superresolution_lognormal} and \ref{fig:superresolution_piecewise} we display the results of applying the neural operator architectures for the lognormal and piecewise input settings, respectively, to test samples of different resolutions than the one used for training. The results demonstrate the zero-shot generalization to different resolutions capability of the transformer neural operator architectures, which does not require retraining of the models. In the lognormal setting the
FNO exhibits invariance to discretization that is more stable to changing resolution
than are the ViT neural operator and the Fourier attention neural operator. 

\subsubsection{Kolmogorov Flow}
\label{subsubsec:NS}
We consider the two-dimensional Navier-Stokes equation for a viscous, incompressible fluid,
\begin{equation}
\label{eq:KF}
\begin{alignedat}{2}
    \frac{\partial u}{\partial t} + u\cdot\nabla u + \nabla p &= \nu \Delta u + \sin(ny)\hat{x}, &\quad& (x,t)\in [0,2\pi]^2\times(0,\infty),\\
    \nabla\cdot u &= 0, &\quad& (x,t)\in [0,2\pi]^2\times[0,\infty),\\
    u(\cdot,0)&=u_0, &\quad& x\in [0,2\pi]^2,
\end{alignedat}
\tag{KF}
\end{equation}
where $u$ denotes the velocity, $p$ the pressure and $\nu$ the kinematic viscosity. The particular choice of forcing function $\sin (ny)$ leads to a particular example of a Kolmogorov flow. We equip the domain $[0,2\pi]$ with periodic boundary conditions. We assume \[u_0 \in \cU \coloneqq \Bigl\{u\in \dot{L}^2_{\textrm{per}}\bigl( [0,2\pi]^2;\R^2\bigr) :\nabla\cdot u = 0\Bigr\}.\]
The dot on $L^2$ denotes the
space of spatially mean-zero functions.
The vorticity is defined as $w = (\nabla \times u)\widehat{z}$ and the stream function $f$ as the solution to the Poisson equation $ - \Delta f = w$. Existence of the semigroup $S_t : \cU \to \cU$ is shown in \citet[Theorem 2.1]{temam2012infinite}.
%{\color{red} Do you have my copy of the book so that I can check this or does someone else have it?} 
We generate the data by solving \eqref{eq:KF} in vorticity-streamfunction form by applying the pseudo-spectral split step method from \citet{Chandler_Kerswell_2013}. In our experimental set-up $n=4$ and $\nu=1/70$ are chosen. Random initial conditions are sampled from the Gaussian measure $\cN(0,C)$ where the covariance operator is given by
\[
C = 7^{3}\bigl(-\Delta +49I\bigr)^{-5},
\]
where the Laplacian is equipped with periodic boundary conditions on $[0,2\pi]^2$,
and viewed as acting  between spaces of spatially mean-zero functions. The input output-data pairs for the Kolmogorov flow experiment are given by $$\Bigl\{\omega\bigl(x,T;\omega_0^{(j)}\bigr);\omega\bigl(x,T+\Delta t;\omega_0^{(j)}\bigr)\Bigr\}_{j=1}^J,$$
for $\omega_0^{(j)}\sim \cN(0,C)$ being i.i.d. samples. Indeed, given any $r>0$ we aim to approximate the nonlinear operator $\sG^\dagger: H^r\bigl([0,2\pi]^2;\R \bigr)\to H^r\bigl([0,2\pi]^2;\R \bigr)$ defined by
\[
\sG^\dagger: \omega\bigl(x,T;\omega_0\bigr) \mapsto \omega\bigl(x,T+\Delta t;\omega_0\bigr).
\]

\begin{figure}[h!]
\centering
\includegraphics[width=\linewidth]{./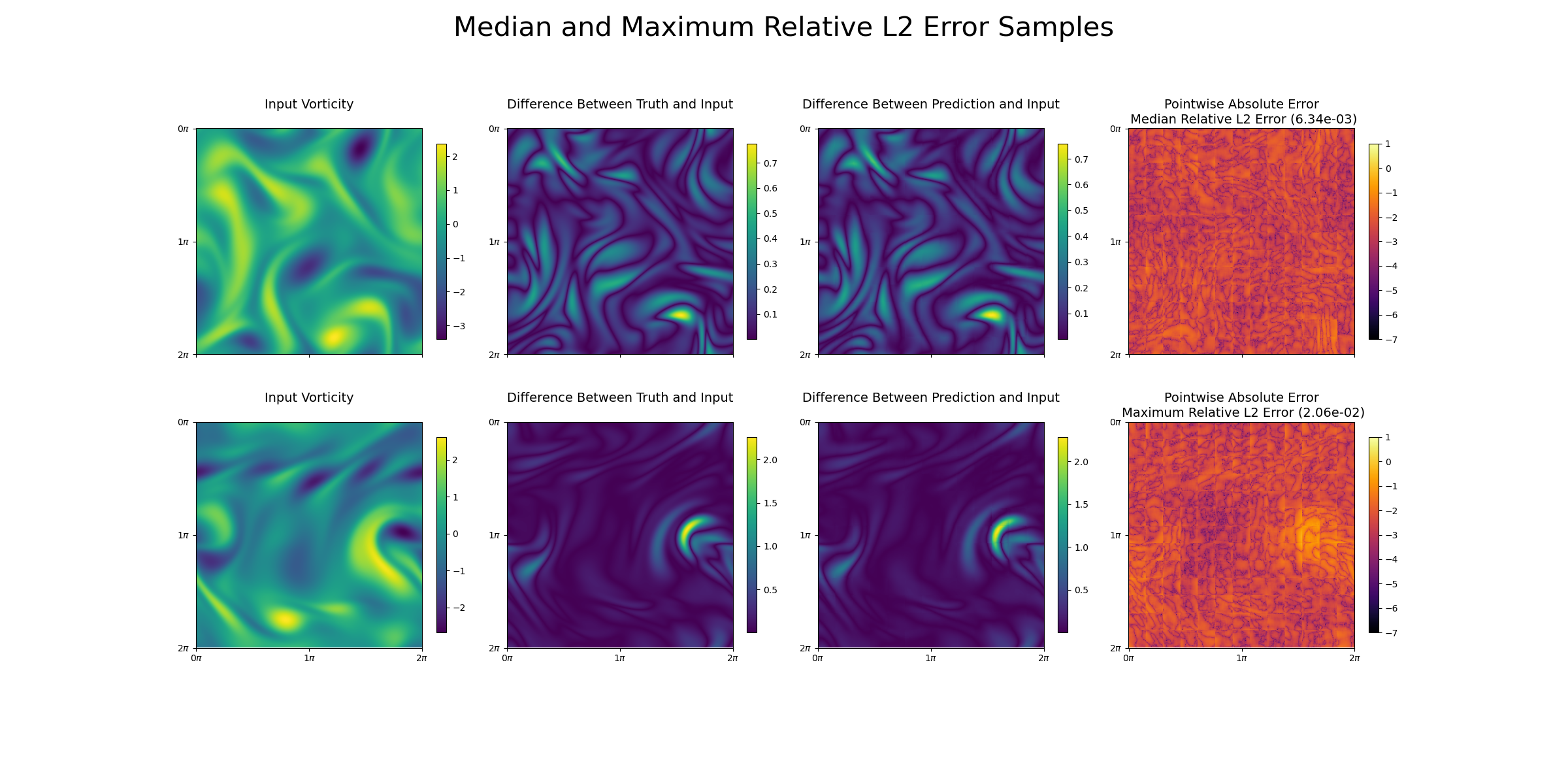}
\caption{The panel displays the result of the application of the Fourier attention neural operator on the Kolmogorov flow experiment for the median and maximum relative $L^2$ error samples. The first row displays the sample from the test set of the same resolution as the training set yielding the median relative $L^2$ error. The second row on the other hand displays the sample yielding the maximum relative $L^2$ error. The first column of the panel displays the input vorticity $w(x,T)$ of the relevant sample. The second column shows the absolute difference between the true vorticity $w(x,T+\Delta t)$ and the input vorticity $w(x,T)$, while the third column displays the absolute difference between the predicted vorticity at time $T+\Delta t$ and the input $w(x,T)$. The last column shows the pointwise absolute error (in log-scale) between the prediction and truth.}
\label{fig:FANO_NS_samples}
\end{figure}

In our experimental setting, we set $T=11$ and $\Delta t =0.1$. We train the AFNO, FNO, ViT neural operator and Fourier attention neural operator architectures on this problem using $9000$ independent samples of resolution $416\times 416$, $500$ samples for validation and $500$ samples for testing. The Fourier neural operator is parametrized by $12$ Fourier modes in each dimension and channel width of $128$; the model is trained using a batch size of $4$ and learning rate of $10^{-3}$. The AFNO is instantiated with $4$ layers of $128$ hidden channels, and does not employ patching; it is trained using a learning rate of $10^{-3}$ and batch size of $1$. Both the ViT neural operator and Fourier attention neural operator are parametrized by $128$ channels in the transformer encoder. The integral operator in the ViT NO is parametrized by $12$ Fourier modes in each dimension while the integral operators appearing in the Fourier attention neural operator are parametrized by $9$ Fourier modes in each dimension. Both architectures employ a final spectral convolution layer (Remark \ref{rm:smoothing}) that is parametrized by $64$ Fourier modes in each dimension. Both of these neural operators are trained using a batch size of $2$ and learning rate of $10^{-3}$. On this problem we train all the neural operators with the relative $H^1$  loss. Figure \ref{fig:FANO_NS_samples} displays the test results using the Fourier attention neural operators. 

\begin{figure}[h!]
\centering
\includegraphics[width=0.8\linewidth]{./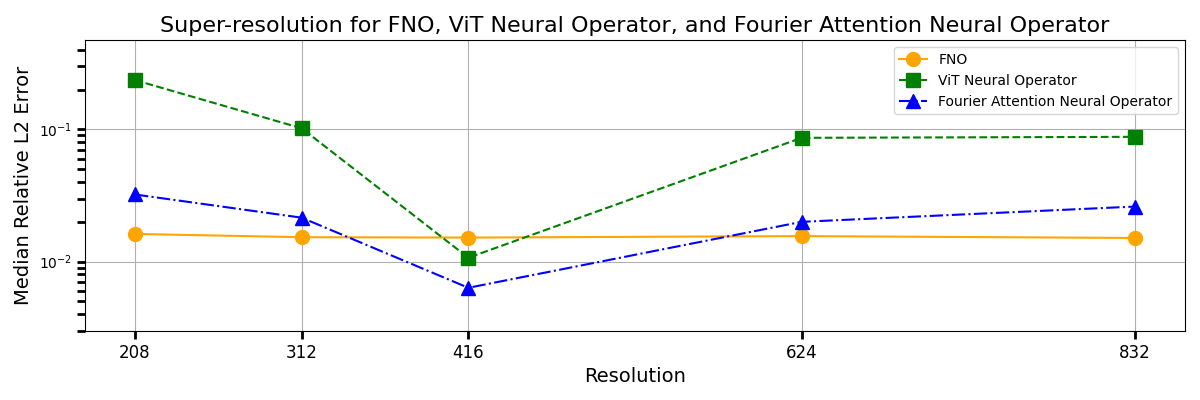}
\caption{The panel displays the performance in relative $L^2$ errors (in log-scale) of the FNO, ViT neural operator and Fourier attention neural operator when applied to Kolomgorov flow test sets of different resolutions at inference time (without retraining). All architectures are trained on data that is of resolution $416\times 416$.}
\label{fig:superresolution_KF}
\end{figure}

In Figure \ref{fig:superresolution_KF} we display the results of applying the three architectures to test samples of resolutions different to the $416\times 416$ training resolution. The results demonstrate the zero-shot generalization capability of the three neural operator architectures to different discretizations, which does not require retraining of the models. Again we note that FNO exhibits invariance to discretization that is more stable to changing resolution than the methods introduced here; but for all the neural
operators it is nonetheless clear that intrinsic properties of the continuum limit are learned and may
be transferred between discretizations. 

A few additional considerations are in order. The mode truncation employed in the Fourier neural operator make the architecture computationally efficient but yields an over-smoothing effect that is not suitable for problems at high resolutions, where high frequency detail is prominent. Attention-based neural operators offer a possible solution to this issue, as demonstrated by the performance of the transformer neural operators proposed when applied to the PDE operator learning problems considered, which involve rough diffusion coefficients and rough initial conditions. On the other hand, patching leads to more efficient attention-based neural operators, but also introduces possible discontinuities at patch-intersections. This issue has been observed to be resolved in the large data regime. Furthermore, it is possible to employ problem specific smoothing operators as the one introduced in Remark \ref{rm:smoothing}. In \Cref{fig:FANO_NS_samples_smoothing} we display the result of applying a smoothing operator as the last layer in the training of the FANO architecture in the context of the Kolmogorov flow problem. The smoothing via the inverse of the negative Laplacian is applied using the Fourier basis, given the periodic boundary conditions. Here, we make the specific choice $\epsilon=10^{-3}$ and $\alpha=1.001$, in the context of Remark \ref{rm:smoothing}. Such an approach is effective at reducing the discontinuities introduced by patching and reducing the evaluation error, but is problem dependent and requires knowledge of boundary conditions. Such discontinuity issues constitute an inherent limitation of patching and highlights the need for more suitable efficient attention mechanisms.

\begin{figure}[h!]
\centering
\includegraphics[width=\linewidth]{./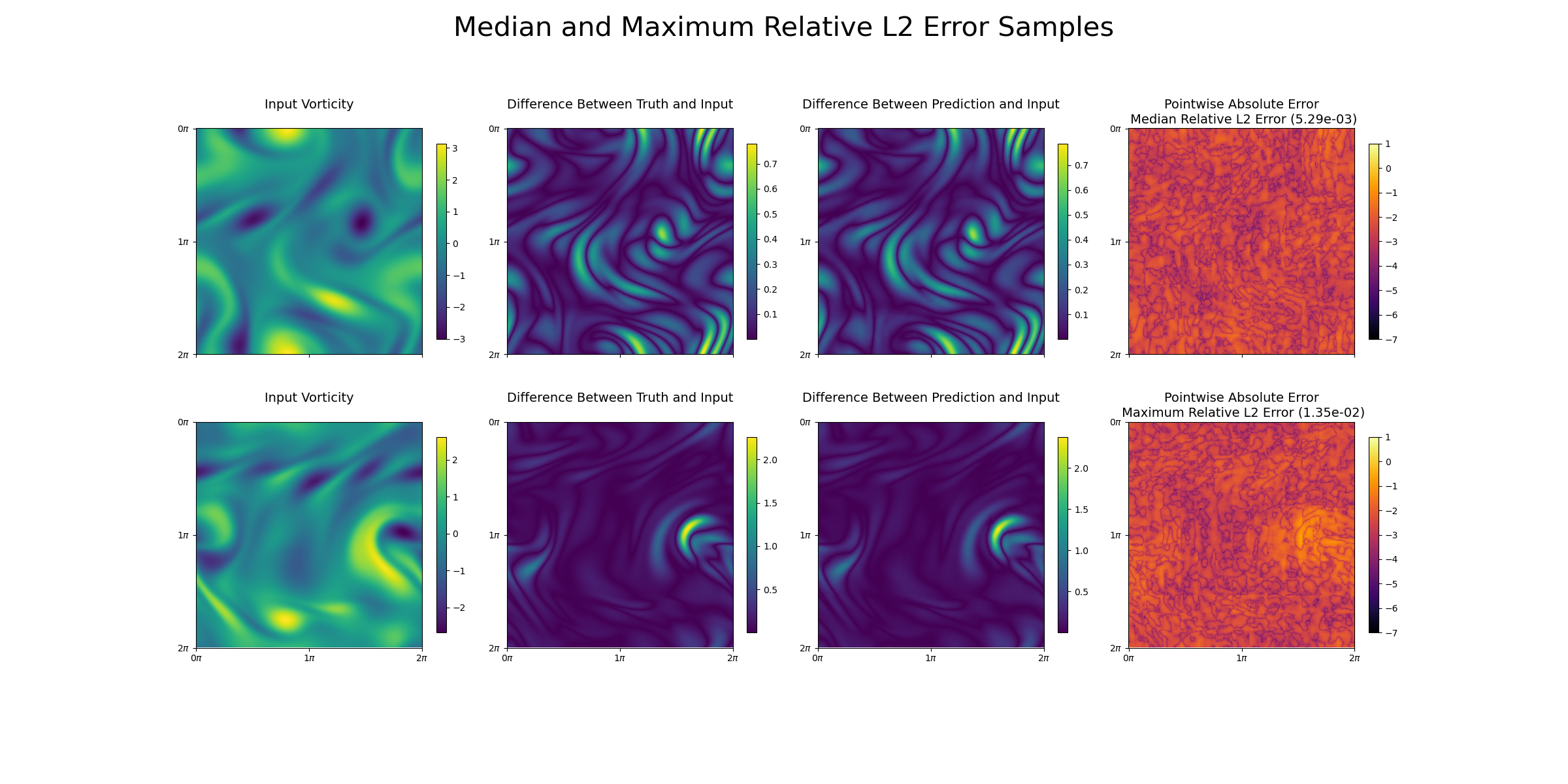}
\caption{The panel displays the result of the application of the Fourier attention neural operator employing a final smoothing layer on the Kolmogorov flow experiment for the median and maximum relative $L^2$ error samples. The first row displays the sample from the test set of the same resolution as the training set yielding the median relative $L^2$ error. The second row on the other hand displays the sample yielding the maximum relative $L^2$ error. The first column of the panel displays the input vorticity $w(x,T)$ of the relevant sample. The second column shows the absolute difference between the true vorticity $w(x,T+\Delta t)$ and the input vorticity $w(x,T)$, while the third column displays the absolute difference between the predicted vorticity at time $T+\Delta t$ and the input $w(x,T)$. The last column shows the pointwise absolute error (in log-scale) between the prediction and truth.}
\label{fig:FANO_NS_samples_smoothing}
\end{figure}

Furthermore, it is observed that in some experimental settings the patching-based neural operators exhibit worse stability to changing resolution than FNO. This is likely due to the effect of patching which may then be further amplified for FANO due to the application of a higher number of FFT(s) to non-periodic domains. In the context of the FANO architecture, such issues can be ameliorated using periodic extensions for Fourier-based parametrizations. In \Cref{fig:superresolution_KF_patching} we demonstrate the discretization invariance of the standard FANO compared to a FANO architecture that involves building a periodic extension of each patch. The Fourier integral operator is applied to each extended patch and the extension is then removed from the output. Such a technique leads to an increase in complexity and runtime but yields more stable discretization invariance. Other remedies could potentially include other kinds of parametrizations for the nonlocal operations, or fine-tuning at different resolutions.

\begin{figure}[h!]
\centering
\includegraphics[width=0.8\linewidth]{./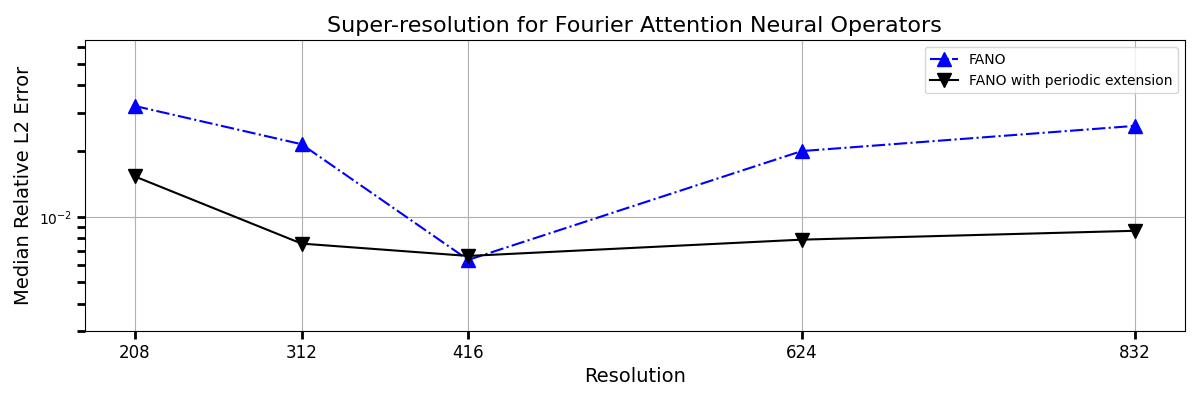}
\caption{The panel displays the performance in relative $L^2$ errors (in log-scale) of the Fourier attention neural operator and Fourier attention neural operator with periodic extensions of patches when applied to Kolomgorov flow test sets of different resolutions at inference time (without retraining). All architectures are trained on data that is of resolution $416\times 416$.}
\label{fig:superresolution_KF_patching}
\end{figure}

%%%%%%%%%%%%%%%%%%%%%%%%%%%%%%

\section{Conclusions}

In this work we have introduced a continuum formulation of the attention mechanism from the seminal work \citet{vaswani2017attention}. Our continuum formulation can be used to design transformer neural operators. Indeed, this continuum formulation leads to discretization invariant implementations of attention and schemes that exhibit zero-shot generalization to different resolutions. In the first result of its kind for transformers, with a slight modification to the architecture implemented in practice, the resulting neural operator architecture mapping between infinite-dimensional spaces of functions is shown to be a universal approximator of continuous functions and functions of Sobolev regularity defined over a compact domain. We extend the continuum formulation to patched attention, which makes it possible to design efficient transformer neural operators. To this end, we introduce a neural operator analogue of the vision transformer \citet{dosovitskyi2021image} and a more expressive architecture, the Fourier attention neural operator. Through a cost-accuracy analysis, we demonstrate the power of the methodology in the context of a range of operator learning problems. In the following we highlight potential avenues for further work.

\begin{enumerate}
    \item It is of interest to extend the analysis of \Cref{thm:self_attention_limit,thm:cross_attention_limit} to quantify the norm-dependent rates of convergence of discretized attention to its continuum limit. Furthermore, obtaining the results over i.i.d. samples from arbitrary strictly positive probability distributions would be of interest.

    \item The Fourier attention neural operator architecture uses operator parametrizations for $\sQ,\sK,\sV$. Other design choices for these operators not employing the Fourier basis are possible, for example using the wavelet basis as in \cite{TRIPURA2023115783} or using a multigrid approach as in \cite{he2024mgno}. An investigation of other attention neural operator architectures arising from this consideration would be of interest.

    \item It is of interest to investigate the discontinuity 
    issues arising from patching. In particular, it is desirable to design schemes that impose continuity amongst these throughout the architecture. 
    
    \item Deriving universal approximation theorems for the patch-based transformer neural operators would be of great theoretical interest.
\end{enumerate}

%%%%%%%%%%%%%%%%%%%%%%%%%%%%%%

%\section{Acknowledgements}
\acks{

AMS is supported by a Department of Defense Vannevar Bush  Faculty Fellowship; this
funding also supports EC and partly supported MEL. In addition, 
AMS and EC are supported by the SciAI Center, funded by the Office of Naval Research 
(ONR), under Grant Number N00014-23-1-2729. NBK is grateful to the Nvidia Corporation for support through full-time employment. MEL is supported by funding from the Eric and Wendy Schmidt Center at the Broad Institute of MIT and Harvard.

We thank Théo Bourdais, Matthieu Darcy, Miguel Liu-Schiaffini, Georgia Gkioxari, Sabera Talukder, Zihui Wu, and Zongyi Li for helpful discussions and feedback.  
}
%%%%%%%%%%%%%%%%%%%%%%%%%%%%%%%%%%%%%%%%%%%%%%%%%%%%%%%%%%%%%%%%%%%%%%%%%%%%%%%%%%%%%%%%%%%%%%%%%%
%%%%%%%%%%%%%%%%%%%%%%%%%%%%%%%%%%%%%%%%%%%%%%%%%%%%%%%%%%%%%%%%%%%%%%%%%%%%%%%%%%%%%%%%%%%%%%%%%%
%%%%%%%%%%%%%%%%%%%%%%%%%%%%%%%%%%%%%%%%%%%%%%%%%%%%%%%%%%%%%%%%%%%%%%%%%%%%%%%%%%%%%%%%%%%%%%%%%%
%%%%%%%%%%%%%%%%%%%%%%%%%%%%%%%%%%%%%%%%%%%%%%%%%%%%%%%%%%%%%%%%%%%%%%%%%%%%%%%%%%%%%%%%%%%%%%%%%%
%%%%%%%%%%%%%%%%%%%%%%%%%%%%%%%%%%%%%%%%%%%%%%%%%%%%%%%%%%%%%%%%%%%%%%%%%%%%%%%%%%%%%%%%%%%%%%%%%%

\appendix
\section{Proofs of Approximation Theorems}
\label{Appendix:A}

\subsection{Proof of Self-Attention Approximation Theorem}
\label{Appendix:self}
We introduce the following auxiliary result that we will apply to prove Theorem \ref{thm:self_attention_limit}.

\begin{lemma}
\label{lemma:expectation_of_l2}
Let \(D \subset \R^d\) be a bounded open set and let \(\{y_j\}_{j=1}^N \sim U(\Bar{D})\)
be an i.i.d. sequence. If \(f \in C(\Bar{D} \times \Bar{D})\)
then, with the expectation taken over the data \(\{y_j\}_{j=1}^N\),
\[ \E \left \| \int_D f(\cdot,y) \: \mathrm{d}y - \frac{|D|}{N} \sum_{j=1}^N f(\cdot,y_j) \right \|_{L^2} \leq |D|^{3/2} \|f\|_{C(\Bar{D} \times \Bar{D})} N^{-1/2}.\]
\end{lemma}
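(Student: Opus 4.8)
The plan is to recognize the empirical sum as an unbiased Monte Carlo estimator of the integral, pointwise in the first variable, control its mean-square error by the variance, and then integrate over the domain, converting the $L^2$-in-expectation bound to the stated $L^2$-expectation bound via Jensen's inequality.

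First I would fix $x \in \Bar{D}$ and note that, since each $y_j$ is uniformly distributed on $\Bar{D}$ with density $|D|^{-1}$, one has $\E[f(x,y_j)] = |D|^{-1}\int_D f(x,y)\,\mathrm{d}y$, so that $\tfrac{|D|}{N}\sum_{j=1}^N f(x,y_j)$ is an unbiased estimator of $\int_D f(x,y)\,\mathrm{d}y$. By independence of the $y_j$, the variance of this estimator equals $\tfrac{|D|^2}{N}\Var\bigl(f(x,y_1)\bigr)$, and since $\Var\bigl(f(x,y_1)\bigr) \le \E\bigl[f(x,y_1)^2\bigr] \le \|f\|_{C(\Bar{D}\times\Bar{D})}^2$, this gives the pointwise estimate
\[
\E\left[\left(\int_D f(x,y)\,\mathrm{d}y - \frac{|D|}{N}\sum_{j=1}^N f(x,y_j)\right)^{\!2}\right] \le \frac{|D|^2}{N}\,\|f\|_{C(\Bar{D}\times\Bar{D})}^2 .
\]

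Next I would integrate this inequality over $x \in D$ and interchange the expectation with the $x$-integral by Tonelli's theorem; this is justified because the integrand is nonnegative and jointly measurable, the latter following from the fact that continuity of $f$ on the compact set $\Bar{D}\times\Bar{D}$ makes $x \mapsto \tfrac{|D|}{N}\sum_j f(x,y_j)$ continuous (hence $L^2$) for every realization of $\{y_j\}$. Using $|D| < \infty$ one obtains
\[
\E\left\|\int_D f(\cdot,y)\,\mathrm{d}y - \frac{|D|}{N}\sum_{j=1}^N f(\cdot,y_j)\right\|_{L^2}^2 \le \frac{|D|^3}{N}\,\|f\|_{C(\Bar{D}\times\Bar{D})}^2 .
\]
Finally, applying Jensen's inequality to the concave map $t \mapsto \sqrt{t}$ yields $\E\|\,\cdot\,\|_{L^2} \le \bigl(\E\|\,\cdot\,\|_{L^2}^2\bigr)^{1/2} \le |D|^{3/2}\|f\|_{C(\Bar{D}\times\Bar{D})}N^{-1/2}$, which is the claimed bound.

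I do not anticipate any real obstacle: the argument is a textbook Monte Carlo variance computation combined with Fubini/Tonelli and Jensen. The only points meriting a sentence of care are the measurability justification needed to exchange expectation and spatial integration, and the elementary variance bound $\Var \le \E[\,\cdot\,^2] \le \|f\|_\infty^2$; both are immediate consequences of $f$ being continuous on the compact set $\Bar{D}\times\Bar{D}$, so that all quantities in sight are finite.
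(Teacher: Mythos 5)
Your proposal is correct and is essentially the paper's own argument: the paper likewise expands the square (i.e.\ the Monte Carlo variance computation) to get the pointwise bound $\frac{|D|^2}{N}\E_{y\sim U(\Bar{D})}[f(x,y)^2]$, bounds this by the sup norm, integrates over $x$, and finishes with Jensen's inequality. Your added remarks on Tonelli and measurability only make explicit what the paper leaves implicit.
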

\begin{proof}
Notice that, for any \(x \in \Bar{D}\),
\[\int_D f(x,y) \: \mathrm{d}y  = |D| \E_{y \sim U(\Bar{D})} \big[ f(x,y) \big ].\]
Hence, by expanding the square, we can observe that
\[\E \left ( \int_D f(x,y) \: \mathrm{d}y - \frac{|D|}{N} \sum_{j=1}^N f(x,y_j) \right )^2 \leq \frac{|D|^2}{N} \E_{y \sim U(\Bar{D})} \big [ f(x,y)^2 \big ].\]
%Fubini's theorem implies that,
Bounding $f$ in $L^\infty$, and noting that it is a continuous function, gives
\begin{align*}
\E \left \| \int_D f(\cdot,y) \: \mathrm{d}y - \frac{|D|}{N} \sum_{j=1}^N f(\cdot,y_j) \right \|^2_{L^2} &\leq \frac{|D|^2}{N} \int_D \E_{y \sim U(\Bar{D})} \big [ f(x,y)^2 \big ] \: \mathrm{d}x \\
&\leq |D|^3 \|f\|^2_{C(\Bar{D} \times \Bar{D})} N^{-1} .
\end{align*}
The result follows by Jensen's inequality. 
\end{proof}
As part of the proof of Theorem \ref{thm:self_attention_limit}, we first show that the the self-attention operator is a mapping of the form $\sA: L^\infty (D;\R^{d_u}) \to L^\infty (D; \R^{d_V})$ and hence $\sA:C (\Bar{D};\R^{d_u}) \to C (\Bar{D}; \R^{d_V})$.
\begin{lemma}
\label{lemma:boundedness}
    Let $u\in L^\infty(D;\R^{d_u})$, then it holds that $\sA(u)\in L^\infty(D;\R^{d_V})$. Furthermore, for $u \in C (\Bar{D};\R^{d_u}) $ it holds that $\sA(u) \in C (\Bar{D};\R^{d_V})$.
\end{lemma}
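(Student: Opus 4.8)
The plan is to work directly from the quotient representation of the expectation in Definition \ref{d:4}: for $u \in L^\infty(D;\R^{d_u})$ and $x \in D$,
\[
\sA(u)(x) = \frac{\int_D \exp\bigl(\langle Qu(x),Ku(y)\rangle_{\R^{\dK}}\bigr)\,Vu(y)\,\mathrm{d}y}{\int_D \exp\bigl(\langle Qu(x),Ku(s)\rangle_{\R^{\dK}}\bigr)\,\mathrm{d}s}.
\]
All the asserted properties are elementary consequences of estimates on this ratio that exploit boundedness of $D$ and boundedness of $u$; so the proof is a sequence of routine verifications rather than a single hard step.

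First I would treat the $L^\infty$ statement. Since $p(\cdot;u,x)$ is a probability density on $D$, the value $\sA(u)(x)$ is a convex combination of $\{Vu(y):y\in D\}$, so $|\sA(u)(x)| \le \operatorname{ess\,sup}_{y\in D}|Vu(y)| \le C\,\|u\|_{L^\infty}$ for a.e.\ $x$, where $C$ depends only on $V$. For the measurability needed to conclude $\sA(u)\in L^\infty$, I would note that $(x,y)\mapsto \exp(\langle Qu(x),Ku(y)\rangle_{\R^{\dK}})\,Vu(y)$ is jointly measurable on $D\times D$, so by Tonelli both the numerator and the denominator are measurable functions of $x$; moreover, by Cauchy--Schwarz the exponent is bounded below by $-|Q|_{\mathrm{op}}|K|_{\mathrm{op}}\|u\|_{L^\infty}^2$, so the denominator is bounded below by a strictly positive constant $c = c(Q,K,\|u\|_{L^\infty})$ (here the boundedness $|D|<\infty$ is used). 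Hence the ratio is a well-defined measurable function, bounded as above, so $\sA(u)\in L^\infty(D;\R^{d_V})$.

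For the $C(\Bar{D})$ statement, given $u\in C(\Bar{D};\R^{d_u})$ I would first observe that $\Bar{D}$ is compact, so $u$ is bounded and uniformly continuous; the quotient formula above then makes sense for every $x\in\Bar{D}$ and defines a function on $\Bar{D}$ bounded by $C\|u\|_{C(\Bar{D})}$ (again via the positive lower bound on the denominator). To prove continuity, take $x_n\to x$ in $\Bar{D}$: continuity of $u$ gives $u(x_n)\to u(x)$, and since $|u(y)|$ is uniformly bounded on $\Bar{D}$ and $\exp$ is Lipschitz on bounded sets, $\exp(\langle Qu(x_n),Ku(y)\rangle_{\R^{\dK}}) \to \exp(\langle Qu(x),Ku(y)\rangle_{\R^{\dK}})$ uniformly in $y\in\Bar{D}$. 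Because $|Vu(y)|$ is bounded and $|D|<\infty$, dominated convergence yields convergence of the numerator and of the denominator, and the uniform positive lower bound on the denominator lets me pass to the limit in the ratio, so $\sA(u)(x_n)\to\sA(u)(x)$. Sequential continuity on the metric space $\Bar{D}$ gives continuity, hence $\sA(u)\in C(\Bar{D};\R^{d_V})$.

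The only points that require care — and the closest thing to an obstacle — are (i) establishing the uniform-in-$x$ strictly positive lower bound on the denominator, which is precisely where boundedness of $D$ and of $u$ enter and without which the map would not even be well-defined; and (ii) in the $L^\infty$ case, checking the joint measurability so that Tonelli applies and the pointwise-defined ratio is a genuine element of $L^\infty(D;\R^{d_V})$. Neither is difficult, but both should be stated explicitly.
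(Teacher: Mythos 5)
Your proof is correct and follows essentially the same route as the paper's: you use that $p(\cdot;u,x)$ is a probability density to bound $\sA(u)$ by $|V|\,\|u\|_{L^\infty}$, and deduce continuity from continuity of the inner product in its first argument and of the exponential. The only difference is that you spell out details the paper leaves implicit (the strictly positive lower bound on the normalizing integral, joint measurability, and the dominated-convergence step for continuity), which is a welcome but not substantively different elaboration.
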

\begin{proof}
    The result follows readily from the fact that $p$ is a probability density function. For completeness, we show the boundedness of the normalization constant of the function $p$. Indeed we note that,
    \begin{align*}
    \int_D \mathrm{exp} \big ( \langle Q u(x), K u(s) \rangle \big ) \: \mathrm{d}s &\leq \int_D \mathrm{exp} \big ( |Q| |K| |u(x)| |u(y)| \big ) \mathrm{d}s \\
    &\leq \int_D \mathrm{exp} \bigg ( \frac{1}{2} |Q|^2 |K|^2 |u(x)|^2 \bigg ) \mathrm{exp} \bigg ( \frac{1}{2} |u(y)|^2 \bigg ) \mathrm{d}s \\
    &\leq |D| \mathrm{exp} \big ( R \|u\|_{L^\infty}^2 \big ) 
\end{align*}
for some constant \(R  > 0\). From definition, it is clear that 
\[\int_D p(y; u, x) \: \mathrm{d}y = 1,\] 
hence \(p\) is indeed a valid probability density function. It hence follows that
\[\|A(u)\|_{L^\infty} \leq |V|\|u\|_{L^\infty}\]
hence \(A : L^\infty (D;\R^{d_u}) \to L^\infty (D; \R^{d_V})\). Similarly, it follows that
\(A : C (\Bar{D};\R^{d_u}) \to C (\Bar{D}; \R^{d_V})\). We note that the continuity of $u$ is preserved by $\sA$ under the continuity of the inner product in its first argument and the continuity of the exponential.
\end{proof}
We are now ready to establish the full result of Theorem \ref{thm:self_attention_limit} which states that 
\[ \lim_{N \to \infty} \sup_{u \in B} \E \left \| \sA(u) - \frac{\sum_{j=1}^N \mathrm{exp} \big ( \langle Qu(\cdot), Ku(y_j)  \rangle \big ) V u(y_j)}{\sum_{\ell=1}^N \mathrm{exp} \big ( \langle Qu(\cdot), Ku(y_\ell)  \rangle \big )} \right \|_{C(\Bar{D};\R^{d_V})}  = 0,\]
with the expectation taken over i.i.d. sequences \(\{y_j\}_{j=1}^N \sim \unif(\Bar{D})\).
\begin{proof}[Proof of Theorem \ref{thm:self_attention_limit}]
Before commencing the proof, we first establish useful shorthand notation. Letting \(u \in B\) we define,
\[f(x,y;u) \coloneqq \mathrm{exp} \big ( \langle Q u(x), K u(y) \rangle \big ), \quad g_l(x,y;u) \coloneqq \mathrm{exp} \big ( \langle Q u(x), K u(y) \rangle \big ) \big( Vu(y) \big )_l  \]
for all \(x,y \in \Bar{D}\) and \(l \in \range{1}{d_V}\). For $u\in B$ we also define
\[a(x;u) \coloneqq \int_D f(x,y;u) \: \mathrm{d}y, \qquad a^{(N)}(x;u) \coloneqq \frac{|D|}{N} \sum_{j=1}^N f(x,y_j;u),\]
and
\[b_l(x;u) \coloneqq \int_D g_l(x,y;u) \: \mathrm{d}y, \qquad b^{(N)}_l(x;u) \coloneqq \frac{|D|}{N} \sum_{j=1}^N g_l(x,y_j;u)\]
for any $x\in\Bar{D}$ and any \(l \in \range{1}{d_V}\). We set \(\alpha_l = b_l/a\) and \(\alpha_l^{(N)} = b_l^{(N)}/a^{(N)}\). Given this notation, to prove the result we must show
\begin{equation}
\label{eq:aim}
\lim_{N \to \infty} \sup_{u \in B} \E \left \| \alpha_l(\cdot\,;u) - \alpha_l^{(N)}(\cdot\,;u) \right\|_{C(\Bar{D};\R)}=0,
\end{equation}
for any $l\in\range{1}{d_V}$. We divide the proof in the following key steps. We first show that for fixed $u\in B$, it holds that
\begin{equation}
\label{eq:step1}
    \lim_{N \to \infty} \E \|\alpha_l(\cdot\,;u) - \alpha_l^{(N)}(\cdot\,;u)\|_{L^2} = 0,
\end{equation}
for any $l\in\range{1}{d_V}$. By using compactness and applying the Arzel\`a-Ascoli theorem, we then show that
\begin{equation}
\label{eq:step2}
\lim_{N \to \infty} \E \|\alpha_l(\cdot\,;u) - \alpha_l^{(N)}(\cdot\,;u)\|_{C(\Bar{D})} = 0,
\end{equation}
for any $l\in\range{1}{d_V}$. We then use the compactness of $B$ and continuity of the mappings $u\mapsto \alpha_\ell(\cdot\, ;u)$ and $u\mapsto \alpha^{(N)}_\ell(\cdot\, ;u)$ to deduce the result given by \eqref{eq:aim}.
We now focus on establishing \eqref{eq:step1}. Since \(B\) is bounded, there exists a constant \(M > 0\) such that,
\[\sup_{u \in B} \|u\|_{L^\infty} \leq M.\]
Therefore, we have
\[\sup_{u \in B} \max \{ \|f\|, \|g_1\|, \dots, \|g_m\| \} \leq \max \left \{ \text{exp} \big( RM^2 \big ), M |V| \text{exp} \big (RM^2 \big ) \right \} \coloneqq J,\]
where \(\|\cdot\| = \|\cdot\|_{C(\Bar{D} \times \Bar{D})}\).
Clearly,
\[\sup_{u \in B} \max \{ \|a\|_{L^2}, \|a^{(N)}\|_{L^2} \} \leq |D|^{3/2} J\]
Notice that, since \(|\langle Q u(x), K u(s) \rangle| \leq RM^2\), we have \(f(x,y) \geq \text{exp} \big ( - R M^2 \big )\).
It follows that
\[\inf_{u \in B} \min \{ \|a\|_{L^2}, \|a^{(N)}\|_{L^2} \} \geq |D|^{3/2} \text{exp} \big ( - R M^2 \big ) \coloneqq |D|^{3/2} I > 0.\]
Similarly, 
\[\sup_{u \in B} \max_{l \in \range{1}{d_V}} \{ \|b_l\|_{L^2}, \|b_l^{(N)}\|_{L^2} \} \leq |D|^{3/2} J.\]
Applying Lemma~\ref{lemma:expectation_of_l2}, for fixed $u\in B$ we find that
\begin{subequations}
\begin{align}
\label{eq:L2}
\E \left \| \frac{b_l}{a} - \frac{b_l^{(N)}}{a^{(N)}} \right \|_{L^2} &\leq \E \frac{\|a^{(N)}\|_{L^2} \|b_l - b_l^{(N)}\|_{L^2} + \|b_l^{(N)}\|_{L^2} \|a - a^{(N)}\|_{L^2}  }{\|a^{(N)}\|_{L^2} \|a\|_{L^2}} \\
&\leq \frac{2  J^2}{I^2} N^{-1/2}.%*|D|^{3/2}
\end{align}
\end{subequations}
Setting \(\alpha_l = b_l/a\) and \(\alpha_l^{(N)} = b_l^{(N)}/a^{(N)}\), from \eqref{eq:L2} we deduce that 
\begin{equation}
    \lim_{N \to \infty} \E \|\alpha_l(\cdot\,;u) - \alpha_l^{(N)}(\cdot\,;u)\|_{L^2} = 0,
\end{equation}
for any $l\in\range{1}{d_V}$. We now proceed to the second step of the proof, i.e. showing that \eqref{eq:step2} holds. Using similar reasoning as before, we notice that 
\begin{subequations}
\begin{align}
\inf_{u \in B}& \min \{ \|a\|_{C(\Bar{D})}, \|a^{(N)}\|_{C(\Bar{D})} \} \geq |D|I,\\
\sup_{u \in B} &\max_{l \in \range{1}{d_V}} \{ \|b_l\|_{C(\Bar{D})}, \|b_l^{(N)}\|_{C(\Bar{D})} \} \leq |D| J,
\end{align}
\end{subequations}
hence the sequence \( \{\alpha_l^{(N)}\}\) is uniformly bounded in $N$. Now, for fixed $u\in B$ the sequence \( \{\alpha_l^{(N)}(\cdot\,;u)\}\) is also uniformly equicontinuous with probability 1 over the choice \(\{y_j\}_{j=1}^N \sim U(\Bar{D})\). Indeed, we note that
    \begin{subequations}
        \begin{align}
            \left| \frac{b_l^{(N)}(r)}{a^{(N)}(r)}- \frac{b_l^{(N)}(t)}{a^{(N)}(t)}\right| &= \left| \frac{\sum_{j=1}^N g_l(r,y_j)}{\sum_{k=1}^N f(r,y_k)} - \frac{\sum_{j=1}^N g_l(t,y_j)}{\sum_{k=1}^N f(t,y_k)}\right|\\
            &\leq\frac{1}{N^2I}\left|\sum_{j,n=1}^Ng_l(r,y_j)f(t,y_n)-g_l(t,y_j)f(r,y_n) \right|\\
            %&\leq \frac{1}{N^2J}\sum_{j,n}\bigl|g_l(r,s_j)-g_l(t,s_j) \bigr|\cdot|f(t,s_n)|+|g_l(t,s_j)|\cdot\bigl|f(t,s_n)-f(r,s_n)\bigr|\\
            &\leq \frac{J}{N^2I}\sum_{j,n=1}^N\bigl(\bigl|g_l(r,y_j)-g_l(t,y_j) \bigr|+\bigl|f(t,y_n)-f(r,y_n)\bigr| \bigr).
        \end{align}
    \end{subequations}
    The result then follows from the uniform continuity of $f$ and $g_l$ in their first arguments, due to the compactness of $\Bar{D}$.
%%%
Therefore, since $\Bar{D}\subset\R^d$ is compact, by the Arzel\`a–Ascoli theorem there exists a subsequence of indices $(N_k)_{k\in\N}$ such that $\alpha_l^{(N_k)}$ converges in $C(\Bar{D})$ to some $\alpha_l^{(\infty)}$. Since,
\begin{equation*}
   \E \|\alpha_l^{(\infty)} - \alpha_l^{(N_k)}\|_{L^2} \leq |D|^{1/2}\E \|\alpha_l^{(\infty)} - \alpha_l^{(N_k)}\|_{C(\Bar{D})},
\end{equation*}
by uniqueness of limits, it is readily observed that $\alpha_l^{(\infty)}=\alpha_l$. Therefore, since the limit is independent of the subsequence, it holds that
\[\lim_{N \to \infty} \E \|\alpha_l(\cdot\,;u) - \alpha_l^{(N)}(\cdot\,;u)\|_{C(\Bar{D})} = 0,\]
for any $l\in\range{1}{d_V}$. We now turn our attention to establishing \eqref{eq:aim}. By reasoning as before and by using the continuity of the exponential and the inner product in its first argument, it is straightforward to show that as mappings of the form \(\alpha_l:B \to C(\Bar{D};\R^{d_V})\) and \(\alpha_l^{(N)}:B \to C(\Bar{D};\R^{d_V})\) where $B$ is compact, \(\alpha_l\) and \(\alpha_l^{(N)}\) are continuous and hence the sequence \( \{\alpha_l^{(N)}\}\) is uniformly equicontinuous. Therefore, we can find moduli of continuity $\omega_1, \omega_2$ such that
\[\|\alpha_l(u) - \alpha_l(v)\|_{C(\Bar{D})} \leq \omega_1 \big ( \|u - v\|_{C(\Bar{D})} \big ), \quad \|\alpha_l^{(N)}(u) - \alpha_l^{(N)}(v)\|_{C(\Bar{D})} \leq \omega_2 \big ( \|u - v\|_{C(\Bar{D})} \big )\]
for any $u,v \in B$ and $N \in \mathbb{N}$. Fix $\epsilon > 0$. Since $B$ is compact, we can find a number $L = L (\epsilon) \in \mathbb{N}$ and functions $\phi_1, \dots, \phi_L \in B$ such that, for any $u \in B$, there exists $j \in \range{1}{L}$ such that
\[\|u - \phi_j\|_{C(\Bar{D})} < \epsilon.\]
Furthermore, we can find a number $S = S(\epsilon) \in \mathbb{N}$ such that, for any $j \in \range{1}{L}$, we have, for all $N \geq S$,
\[\E \|\alpha_l(\phi_j) - \alpha_l^{(N)} (\phi_j)\|_{C(\Bar{D})} < \epsilon.\]
It follows by triangle inequality that
\begin{align*}
    \E \|\alpha_l (u) - \alpha_l^{(N)}(u)\|_{C(\Bar{D})} &\leq \E \|\alpha_l (u) - \alpha_l(\phi_j)\|_{C(\Bar{D})} + \E \|\alpha_l(\phi_j) - \alpha_l^{(N)}(u)\|_{C(\Bar{D})} \\
    &\leq |D| \omega_1(\epsilon) + \E \|\alpha_l(\phi_j) - \alpha_l^{(N)}(\phi_j)\|_{C(\Bar{D})} + \E \|\alpha_l^{(N)}(\phi_j) - \alpha_l^{(N)}(u)\|_{C(\Bar{D})} \\
    &\leq |D| \omega_1(\epsilon) + \epsilon + |D| \omega_2(\epsilon).
\end{align*}
Therefore the result follows since the argument is uniform over all $l \in \range{1}{d_V}$.
\end{proof}

%%%%%%%%%%%%%%%%%%%%%%%%%%%%%%%%%%%%%%%%%%%%%%%%%%%%%%%%%%%%%%%%%%%%%%%%%%%%%%%%%%%%%%%%%%%%%%%%%%%%%%%%%%%%%%%%%%%%%%%%%%%%%%%%%%%%%%%%%%%%%%%%%%%%%%%%%%%%%%%%%%%%%%%%%%%%%%%%%%%%%%%

\subsection{Proof of Cross-Attention Approximation Theorem}
\label{Appendix:cross}

We make straightforward modifications to Lemmas \ref{lemma:expectation_of_l2}, \ref{lemma:boundedness2} and to the proof of Theorem \ref{thm:self_attention_limit} in order to establish Theorem \ref{thm:cross_attention_limit}.

\begin{lemma}
\label{lemma:expectation_of_l2_cross}
Let \(D \subset \R^d\) and \(E \subset \R^e\) be bounded open sets and \(f \in C(\Bar{D} \times \Bar{E})\).
Then
\[ \E \left \| \int_E f(\cdot,y) \: \mathrm{d}y - \frac{|E|}{N} \sum_{j=1}^N f(\cdot,y_j) \right \|_{L^2} \leq |D|^{1/2} |E|\|f\|_{C(\Bar{D} \times \Bar{E})} N^{-1/2}\]
with the expectation taken over i.i.d. sequences \(\{y_j\}_{j=1}^N \sim U(\Bar{E})\).
\end{lemma}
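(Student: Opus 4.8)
The plan is to follow the proof of Lemma~\ref{lemma:expectation_of_l2} essentially verbatim, the only new bookkeeping being that the two domains must be kept separate: the $L^2$ norm is taken over $\Bar{D}$, whereas the Monte Carlo average samples the points $\{y_j\}_{j=1}^N$ uniformly from $\Bar{E}$. First I would fix $x\in\Bar{D}$ and record the unbiasedness identity
\[
\int_E f(x,y)\,\mathrm{d}y = |E|\,\E_{y\sim\unif(\Bar{E})}\bigl[f(x,y)\bigr],
\]
so that $\tfrac{|E|}{N}\sum_{j=1}^N f(x,y_j)$ is an unbiased estimator of $\int_E f(x,y)\,\mathrm{d}y$. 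Expanding the square and using that the $y_j$ are i.i.d.\ (so that the cross terms cancel) then gives the pointwise variance bound
\[
\E\left(\int_E f(x,y)\,\mathrm{d}y - \frac{|E|}{N}\sum_{j=1}^N f(x,y_j)\right)^2 = \frac{|E|^2}{N}\,\Var_{y\sim\unif(\Bar{E})}\!\bigl(f(x,y)\bigr) \le \frac{|E|^2}{N}\,\E_{y\sim\unif(\Bar{E})}\bigl[f(x,y)^2\bigr].
\]

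Second, I would integrate this inequality over $x\in D$ and bound $f$ in supremum norm, which is legitimate since $f\in C(\Bar{D}\times\Bar{E})$ and $\Bar{D}\times\Bar{E}$ is compact; interchanging the expectation with the $x$-integral is justified by Tonelli's theorem, the integrand being nonnegative. This yields
\[
\E\left\|\int_E f(\cdot,y)\,\mathrm{d}y - \frac{|E|}{N}\sum_{j=1}^N f(\cdot,y_j)\right\|_{L^2}^2 \le \frac{|E|^2}{N}\int_D \E_{y\sim\unif(\Bar{E})}\bigl[f(x,y)^2\bigr]\,\mathrm{d}x \le \frac{|D|\,|E|^2}{N}\,\|f\|_{C(\Bar{D}\times\Bar{E})}^2 .
\]
Finally, Jensen's inequality moves the expectation inside the square root, giving the stated bound $|D|^{1/2}\,|E|\,\|f\|_{C(\Bar{D}\times\Bar{E})}\,N^{-1/2}$.

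I do not anticipate any real obstacle: this is a standard Monte Carlo variance estimate, and relative to Lemma~\ref{lemma:expectation_of_l2} the only change is the accounting of constants — the $L^2$ norm over $D$ (rather than over $D=E$) contributes a single factor $|D|^{1/2}$ instead of two, while the variance carries the factor $|E|^2$; together these turn the exponent $3/2$ on $|D|$ into a factor $|D|^{1/2}|E|$. The analogous modifications to the boundedness lemma and to the individual steps of the proof of Theorem~\ref{thm:self_attention_limit} (the $L^2$ convergence, the Arzel\`a--Ascoli step, and the covering-of-$B$ step, now with $E$ in place of $D$ wherever the sampling domain appears) then deliver Theorem~\ref{thm:cross_attention_limit}.
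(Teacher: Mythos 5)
Your proposal is correct and follows essentially the same argument as the paper's proof: unbiasedness of the Monte Carlo average, the pointwise variance bound $\frac{|E|^2}{N}\E_{y\sim\unif(\Bar{E})}[f(x,y)^2]$ from expanding the square, integration over $D$ with the supremum bound on $f$, and Jensen's inequality, yielding the constant $|D|^{1/2}|E|\,\|f\|_{C(\Bar{D}\times\Bar{E})}$. Your accounting of how the exponent $3/2$ in Lemma~\ref{lemma:expectation_of_l2} splits into $|D|^{1/2}|E|$ here is exactly right.
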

\begin{proof}
Notice that, for any \(x \in \Bar{D}\),
\[\int_E f(x,y) \: \mathrm{d}y  = |E| \E_{y \sim U(\Bar{E})} \big[ f(x,y) \big ].\]
Hence, by expanding the square, we can observe that
\[\E \left ( \int_E f(x,y) \: \mathrm{d}y - \frac{|E|}{N} \sum_{j=1}^N f(x,y_j) \right )^2 \leq \frac{|E|^2}{N} \E_{y \sim U(\Bar{E})} \big [ f(x,y)^2 \big ].\]
Bounding $f$ in $L^\infty$, and noting that it is a continuous function, gives
\begin{align*}
\E \left \| \int_E f(\cdot,y) \: \mathrm{d}y - \frac{|E|}{N} \sum_{j=1}^N f(\cdot,y_j) \right \|^2_{L^2} &\leq \frac{|E|^2}{N} \int_D \E_{y \sim U(\Bar{E})} \big [ f(x,y)^2 \big ] \: \mathrm{d}x \\
&\leq |D||E|^2 \|f\|^2_{C(\Bar{D} \times \Bar{E})} N^{-1} .
\end{align*}
The result follows by Jensen's inequality. 
\end{proof}
As part of the proof of Theorem \ref{thm:cross_attention_limit}, we first show that the the cross-attention operator is a mapping of the form $\sC:L^\infty (D;\R^{d_u})\times L^\infty (E;\R^{d_v}) \to L^\infty (D; \R^{d_V})$ and hence $\sA: C (\Bar{D};\R^{d_u})\times C (\Bar{E};\R^{d_v}) \to C (\Bar{D}; \R^{d_V})$.
\begin{lemma}
\label{lemma:boundedness2}
    Let $u\in L^\infty (D;\R^{d_u})\times L^\infty (E;\R^{d_v})$, then it holds that $\sA(u)\in L^\infty(D;\R^{d_V})$. Furthermore, for $u \in C (\Bar{D};\R^{d_u})\times C (\Bar{E};\R^{d_v}) $ it holds that $\sA(u) \in C (\Bar{D};\R^{d_V})$.
\end{lemma}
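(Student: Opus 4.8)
The plan is to repeat, essentially verbatim, the argument behind Lemma~\ref{lemma:boundedness}, accounting for the fact that in the cross-attention setting of Definitions~\ref{d:7} and \ref{d:8} the normalizing integral runs over $\Bar{E}$ and the query--key pairing couples the two functions $u$ and $v$; here $\sC$ plays the role that $\sA$ plays in the self-attention lemma. The first step is to show that the denominator in Definition~\ref{d:7}, namely $\int_E \exp\big(\langle Qu(x),Kv(s)\rangle_{\R^{\dK}}\big)\,\mathrm{d}s$, is finite and bounded away from zero, uniformly in $x \in D$. For the upper bound, bounding $|Qu(x)| \le |Q|\|u\|_{L^\infty}$ and $|Kv(s)| \le |K|\|v\|_{L^\infty}$ and applying Young's inequality exactly as in the proof of Lemma~\ref{lemma:boundedness} gives
\[
\int_E \exp\big(\langle Qu(x),Kv(s)\rangle_{\R^{\dK}}\big)\,\mathrm{d}s \le |E|\,\exp\big(R\,(\|u\|_{L^\infty}^2 + \|v\|_{L^\infty}^2)\big)
\]
for some constant $R > 0$ depending only on $|Q|$ and $|K|$. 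For the lower bound, since the exponent is bounded below by $-|Q||K|\|u\|_{L^\infty}\|v\|_{L^\infty}$, the integral is at least $|E|\exp(-|Q||K|\|u\|_{L^\infty}\|v\|_{L^\infty}) > 0$.

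With these two bounds in hand, $q(\cdot;u,v,x)$ is a genuine probability density on $E$, and since $\sC(u,v)(x) = \E_{y\sim q(\cdot;u,v,x)}[Vv(y)]$ is an average of $Vv(y)$ against this probability measure, $|\sC(u,v)(x)| \le |V|\,\|v\|_{L^\infty}$ for a.e.\ $x \in D$; this proves $\sC(u,v) \in L^\infty(D;\R^{d_V})$, so $\sC : L^\infty(D;\R^{d_u}) \times L^\infty(E;\R^{d_v}) \to L^\infty(D;\R^{d_V})$.

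For the continuity claim, suppose instead $u \in C(\Bar{D};\R^{d_u})$ and $v \in C(\Bar{E};\R^{d_v})$. I would write $\big(\sC(u,v)(x)\big)_l$ as the quotient of $\int_E \exp\big(\langle Qu(x),Kv(y)\rangle_{\R^{\dK}}\big)(Vv(y))_l\,\mathrm{d}y$ over the denominator above, for each $l \in \range{1}{d_V}$, and argue that both numerator and denominator depend continuously on $x$: for each fixed $y \in \Bar{E}$ the integrands are continuous in $x$ by continuity of the inner product in its first argument and of the exponential, and the integrals pass to the limit by dominated convergence, the dominating constant coming from the compactness of $\Bar{D}$ and $\Bar{E}$. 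Since the denominator is bounded away from zero, the quotient is continuous on $\Bar{D}$, so $\sC(u,v) \in C(\Bar{D};\R^{d_V})$. I do not anticipate any genuine obstacle; the only mild care needed is the uniform domination legitimizing the passage to the limit under the integral sign, which is supplied by compactness exactly as in Lemma~\ref{lemma:boundedness}.
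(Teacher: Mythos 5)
Your proposal is correct and follows essentially the same route as the paper: bound the normalizing integral above via Young's inequality, observe that $q(\cdot;u,v,x)$ is a probability density so that $\|\sC(u,v)\|_{L^\infty}\leq |V|\|v\|_{L^\infty}$, and deduce continuity from continuity of the exponential and of the inner product in its first argument (your lower bound on the denominator and the explicit dominated-convergence step simply spell out what the paper leaves implicit). Note the lemma's statement writes $\sA(u)$ by a typographical slip; you correctly read it as a statement about the cross-attention operator $\sC(u,v)$.
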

\begin{proof}
    The result follows readily from the fact that $q$ is a probability density function. For completeness, we show the boundedness of the normalization constant of the function $q$. Indeed we note that,
\begin{align*}
    \int_E \mathrm{exp} \big ( \langle Q u(x), K v(s) \rangle \big ) \: \mathrm{d}s &\leq \int_E \mathrm{exp} \big ( |Q| |K| |u(x)| |v(s)| \big ) \mathrm{d}s \\
    &\leq \int_E \mathrm{exp} \bigg ( \frac{1}{2} |Q|^2 |K|^2 |u(x)|^2 \bigg ) \mathrm{exp} \bigg ( \frac{1}{2} |v(s)|^2 \bigg ) \mathrm{d}s \\
    &\leq |E| \mathrm{exp} \big ( R_1 \|u\|_{L^\infty}^2 \big ) \mathrm{exp} \big ( R_2 \|v\|_{L^\infty}^2 \big )\\
    &\leq |E| \mathrm{exp} \Big ( R \bigl(\|u\|_{L^\infty}^2 + \|v\|_{L^\infty}^2 \bigr)\Big ),
\end{align*}
for some constant \(R \coloneqq \max\{R_1,R_2\}\). From definition, it is clear that 
\[\int_E q(y; u,v,x) \: \mathrm{d}y = 1,\] 
hence \(q\) is indeed a valid probability density function. It hence follows that
\[\|\sC(u,v)\|_{L^\infty(D;\R^{d_V})} \leq |V|\|v\|_{L^\infty(E;\R^{d_v})}\]
hence \(\sC : L^\infty (D;\R^{d_u})\times L^\infty (E;\R^{d_v}) \to L^\infty (D; \R^{d_V})\). Similarly, it follows that
\(\sC : C (\Bar{D};\R^{d_u})\times C (\Bar{E};\R^{d_v}) \to C (\Bar{D}; \R^{d_V})\).  We note that the continuity is preserved by $\sC$ under the continuity of the inner product in its first argument and the continuity of the exponential.
\end{proof}
We are now ready to establish the full result of Theorem \ref{thm:cross_attention_limit} which states that 
\[ \lim_{N \to \infty} \sup_{(u,v) \in B} \E \left \| \sC(u,v) - \frac{\sum_{j=1}^N \mathrm{exp} \big ( \langle Qu(\cdot), Kv(y_j)  \rangle \big ) V v(y_j)}{\sum_{\ell=1}^N \mathrm{exp} \big ( \langle Qu(\cdot), Kv(y_\ell)  \rangle \big )} \right \|_{C(\Bar{D};\R^{d_V})}  = 0,\]
    with the expectation taken over i.i.d. sequences \(\{y_j\}_{j=1}^N \sim \unif(\Bar{E})\).
\begin{proof}[Proof of Theorem \ref{thm:cross_attention_limit}]
Before commencing the proof, we first establish useful shorthand notation. Letting \((u,v) \in B\) we define,
\[f(x,y;u,v) \coloneqq \mathrm{exp} \big ( \langle Q u(x), K v(y) \rangle \big ), \quad g_l(x,y;u,v) \coloneqq \mathrm{exp} \big ( \langle Q u(x), K v(y) \rangle \big ) \big( Vv(y) \big )_l  \]
for all \(x\in \Bar{D}, y\in\Bar{E}\) and \(l \in \range{1}{d_V}\). For $(u,v) \in B$ we also define
\[a(x;u,v) \coloneqq \int_E f(x,y;u,v) \: \mathrm{d}y, \qquad a^{(N)}(x;u,v) \coloneqq \frac{|E|}{N} \sum_{j=1}^N f(x,y_j;u,v),\]
and
\[b_l(x;u,v) \coloneqq \int_E g_l(x,y;u,v) \: \mathrm{d}y, \qquad b^{(N)}_l(x;u,v) \coloneqq \frac{|E|}{N} \sum_{j=1}^N g_l(x,y_j;u,v)\]
for any $x\in\Bar{D}$ and any \(l \in \range{1}{d_V}\). We set \(\alpha_l = b_l/a\) and \(\alpha_l^{(N)} = b_l^{(N)}/a^{(N)}\). Given this notation, to prove the result we must show
\begin{equation}
\label{eq:aim2}
\lim_{N \to \infty} \sup_{(u,v) \in B} \E \left \| \alpha_l(\cdot\,;u,v) - \alpha_l^{(N)}(\cdot\,;u,v) \right\|_{C(\Bar{D};\R)}=0,
\end{equation}
for any $l\in\range{1}{d_V}$. We divide the proof in the following key steps. We first show that for fixed $(u,v)\in B$, it holds that
\begin{equation}
\label{eq:step1_2}
    \lim_{N \to \infty} \E \|\alpha_l(\cdot\,;u,v) - \alpha_l^{(N)}(\cdot\,;u,v)\|_{L^2} = 0,
\end{equation}
for any $l\in\range{1}{d_V}$. By using compactness and applying the Arzelà-Ascoli theorem, we then show that for $(u,v)\in B$
\begin{equation}
\label{eq:step2_2}
\lim_{N \to \infty} \E \|\alpha_l(\cdot\,;u,v) - \alpha_l^{(N)}(\cdot\,;u,v)\|_{C(\Bar{D})} = 0,
\end{equation}
for any $l\in\range{1}{d_V}$. We then use the compactness of $B$ and continuity of the mappings $(u,v)\mapsto \alpha_\ell(\cdot\, ;u,v)$ and $(u,v)\mapsto \alpha^{(N)}_\ell(\cdot\, ;u,v)$ to deduce the result given by \eqref{eq:aim2}.
We now focus on establishing \eqref{eq:step1_2}. Since \(B\) is bounded, there exists a constant \(M > 0\) such that,
\[\sup_{(u,v) \in B} \|u\|_{L^\infty}+\|v\|_{L^\infty} \leq M.\]
Therefore, we have
\[\sup_{(u,v) \in B} \max \{ \|f\|, \|g_1\|, \dots, \|g_m\| \} \leq \max \left \{ \text{exp} \big( RM^2 \big ), M |V| \text{exp} \big (RM^2 \big ) \right \} \coloneqq J,\]
where \(\|\cdot\| = \|\cdot\|_{C(\Bar{D} \times \Bar{E})}\).
Clearly, it holds that
\[\sup_{(u,v) \in B} \max \{ \|a\|_{L^2}, \|a^{(N)}\|_{L^2} \} \leq |D|^{1/2}|E| J.\]
Notice that, since \(|\langle Q u(x), K v(s) \rangle| \leq RM^2\), we have \(f(x,y) \geq \text{exp} \big ( - R M^2 \big )\).
It follows that
\[\inf_{(u,v) \in B} \min \{ \|a\|_{L^2}, \|a^{(N)}\|_{L^2} \} \geq |D|^{1/2}|E| \text{exp} \big ( - R M^2 \big ) \coloneqq |D|^{1/2}|E| I > 0.\]
Similarly, 
\[\sup_{(u,v) \in B} \max_{l \in [d_K]} \{ \|b_l\|_{L^2}, \|b_l^{(N)}\|_{L^2} \} \leq |D|^{1/2}|E| J.\]
Applying Lemma~\ref{lemma:expectation_of_l2_cross}, we find
\begin{subequations}
\begin{align}
\label{eq:L2_cross}
\E \left \| \frac{b_l}{a} - \frac{b_l^{(N)}}{a^{(N)}} \right \|_{L^2} &\leq \E \frac{\|a^{(N)}\|_{L^2} \|b_l - b_l^{(N)}\|_{L^2} + \|b_l^{(N)}\|_{L^2} \|a - a^{(N)}\|_{L^2}  }{\|a^{(N)}\|_{L^2} \|a\|_{L^2}} \\
&\leq \frac{2 J^2}{I^2} N^{-1/2}.
\end{align}
\end{subequations}
Setting \(\alpha_l = b_l/a\) and \(\alpha_l^{(N)} = b_l^{(N)}/a^{(N)}\), from \eqref{eq:L2_cross} we deduce that for $(u,v)\in B$ it holds that
\begin{equation}
    \lim_{N \to \infty} \E \|\alpha_l(\cdot\,;u,v) - \alpha_l^{(N)}(\cdot\,;u,v)\|_{L^2} = 0,
\end{equation}
for any $l\in\range{1}{d_V}$. We now proceed to the second step of the proof, i.e. showing that \eqref{eq:step2_2} holds. Using similar reasoning as before, we notice that 
\begin{subequations}
\begin{align}
\inf_{(u,v) \in B}& \min \{ \|a\|_{C(\Bar{D})}, \|a^{(N)}\|_{C(\Bar{D})} \} \geq |E|I,\\
\sup_{(u,v) \in B} &\max_{l \in [m]} \{ \|b_l\|_{C(\Bar{D})}, \|b_l^{(N)}\|_{C(\Bar{D})} \} \leq |E| J,
\end{align}
\end{subequations}
hence the sequence \( \{\alpha_l^{(N)}\}\) is uniformly bounded in $N$. Now, for fixed $(u,v)\in B$ the sequence \( \{\alpha_l^{(N)}(\cdot\,;u,v)\}\) is also uniformly equicontinuous with probability 1 over the choice \(\{y_j\}_{j=1}^N \sim U(\Bar{D})\). Indeed, we note that
    \begin{subequations}
        \begin{align}
            \left| \frac{b_l^{(N)}(r)}{a^{(N)}(r)}- \frac{b_l^{(N)}(t)}{a^{(N)}(t)}\right| &= \left| \frac{\sum_{j=1}^N g_l(r,y_j)}{\sum_{k=1}^N f(r,y_k)} - \frac{\sum_{j=1}^N g_l(t,y_j)}{\sum_{k=1}^N f(t,y_k)}\right|\\
            &\leq\frac{1}{N^2I}\left|\sum_{j,n}g_l(r,y_j)f(t,y_n)-g_l(t,y_j)f(r,y_n) \right|\\
            %&\leq \frac{1}{N^2J}\sum_{j,n}\bigl|g_l(r,s_j)-g_l(t,s_j) \bigr|\cdot|f(t,s_n)|+|g_l(t,s_j)|\cdot\bigl|f(t,s_n)-f(r,s_n)\bigr|\\
            &\leq \frac{J}{N^2I}\sum_{j,n}\bigl(\bigl|g_l(r,y_j)-g_l(t,y_j) \bigr|+\bigl|f(t,y_n)-f(r,y_n)\bigr| \bigr).
        \end{align}
    \end{subequations}
    The result then follows from the uniform continuity of $f$ and $g_l$ in their first arguments, due to the compactness of $\Bar{D}$.
%%%
Therefore, since $\Bar{D}\subset\R^d$ is compact, by the Arzel\`a–Ascoli theorem there exists a subsequence of indices $(N_k)_{k\in\N}$ such that $\alpha_l^{(N_k)}$ converges in $C(\Bar{D})$ to some $\alpha_l^{(\infty)}$. Since,
\begin{equation*}
   \E \|\alpha_l^{(\infty)} - \alpha_l^{(N_k)}\|_{L^2} \leq |D|^{1/2}\E \|\alpha_l^{(\infty)} - \alpha_l^{(N_k)}\|_{C(\Bar{D})},
\end{equation*}
by uniqueness of limits, it is readily observed that $\alpha_l^{(\infty)}=\alpha_l$. Therefore, since the limit is independent of the subsequence, for fixed $(u,v)\in B$ it holds that
\[\lim_{N \to \infty} \E \|\alpha_l(\cdot\,;u,v) - \alpha_l^{(N)}(\cdot\,;u,v)\|_{C(\Bar{D})} = 0,\]
for any $l\in\range{1}{d_V}$. We now turn our attention to establishing \eqref{eq:aim2}. By reasoning as before and by using the continuity of the exponential and of the inner product on the product space, it is straightforward to show that as mappings of the form \(\alpha_l:B \to C(\Bar{D};\R^{d_V})\) and \(\alpha_l^{(N)}:B \to C(\Bar{D};\R^{d_V})\) where $B$ is compact, \(\alpha_l\) and \(\alpha_l^{(N)}\) are continuous and hence the sequence \( \{\alpha_l^{(N)}\}\) is uniformly equicontinuous. %\ec{Perhaps need to add more details as to why they are continuous. Maybe also note that this result depends on the continuity/boundedness of $V$ - this is important for patching where the kernel integral operator is HS so continuous.}
Therefore, we can find moduli of continuity $\omega_1, \omega_2$ such that
\[
\begin{aligned}
\|\alpha_l(u,v) - \alpha_l(\widetilde{u},\widetilde{v})\|_{C(\Bar{D})} &\leq \omega_1 \big ( \|(u,v) - (\widetilde{u},\widetilde{v})\|_{C(\Bar{D})\times C(\Bar{E})} \big ),\\
\|\alpha_l^{(N)}(u,v) - \alpha_l^{(N)}(\widetilde{u},\widetilde{v})\|_{C(\Bar{D})} &\leq \omega_2 \big ( \|(u,v) - (\widetilde{u},\widetilde{v})\|_{C(\Bar{D})\times C(\Bar{E})} \big )
\end{aligned}
\]
for any $(u,v), (\widetilde{u},\widetilde{v}) \in B$ and $N \in \mathbb{N}$. Fix $\epsilon > 0$. Since $B$ is compact, we can find a number $L = L (\epsilon) \in \mathbb{N}$ and functions $\phi_1, \dots, \phi_L \in B$ such that, for any $(u,v) \in B$, there exists $j \in \range{1}{L}$ such that
\[\|(u,v) - \phi_j\|_{C(\Bar{D})\times C(\Bar{E})} < \epsilon.\]
Furthermore, we can find a number $S = S(\epsilon) \in \mathbb{N}$ such that, for any $j \in \range{1}{L}$, we have, for all $N \geq S$,
\[\E \|\alpha_l(\phi_j) - \alpha_l^{(N)} (\phi_j)\|_{C(\Bar{D})} < \epsilon.\]
It follows by triangle inequality that
\begin{align*}
    \E \|\alpha_l (u,v) - \alpha_l^{(N)}(u,v)\|_{C(\Bar{D})} &\leq \E \|\alpha_l (u,v) - \alpha_l(\phi_j)\|_{C(\Bar{D})} + \E \|\alpha_l(\phi_j) - \alpha_l^{(N)}(u,v)\|_{C(\Bar{D})} \\
    &\leq |D| \omega_1(\epsilon) + \E \|\alpha_l(\phi_j) - \alpha_l^{(N)}(\phi_j)\|_{C(\Bar{D})} + \E \|\alpha_l^{(N)}(\phi_j) - \alpha_l^{(N)}(u,v)\|_{C(\Bar{D})} \\
    &\leq |E| \omega_1(\epsilon) + \epsilon + |E| \omega_2(\epsilon).
\end{align*}
Therefore the result follows since our argument is uniform over all $l \in \range{1}{d_V}$.
\end{proof}

%%%%%%%%%%%%%%%%%%%%%%%%%%%%%%%%%%%%%%%%%%%%%%%%%%%%%%%%%%%%%%%%%%%%%%%%%%%%%%%%%%%%%%%%%%%%
%%%%%%%%%%%%%%%%%%%%%%%%%%%%%%%%%%%%%%%%%%%%%%%%%%%%%%%%%%%%%%%%%%%%%%%%%%%%%%%%%%%%%%%%%%%%
\section{Transformer Neural Operators: The Discrete Setting}

We describe how the transformer neural operators described in Section \ref{sec:transformers_operator} are implemented in the discrete setting. Namely, in practice we will have access to evaluations of the input function $u\in \cU\bigl(D;\R^{d_u} \bigr)$ at a finite set of $N$ discretization points $\{x_i\}_{i=1}^N\subset D$. To highlight how the neural operator methodology is applied in practical settings it is useful, abusing notation, to view the input
to the neural operators as $u\in \R^{N\times d_u}$. In order to distinguish between functions in $\cU\bigl(D;\R^{d_u} \bigr)$, and discrete representations of these functions at a finite set of grid points, in this section we use different fonts for a function $\su\in\cU\bigl(D;\R^{d_u} \bigr)$ and its discrete analogue $u\in \R^{N\times d_u}$. We highlight that in practice, it is of interest to apply approximations of the attention operators $\sA$ and $\sAp$ defined on the continuum, to $u\in \R^{N\times d_u}$. Through an abuse of notation, we will write $\sA_{\placeholder}(u)$ to denote the finite-dimensional approximation of $\sA_{\placeholder}$ acting on matrices, and similarly for $\sE_L, \sF_{\textrm{NN}}$ and $\sF_{\textrm{LN}}.$

%We highlight that in the foolowing, we will denote by $u_\ell$ the $\ell$'th row of the matrix $u\in \R^{N\times d_u}$.

In Subsection \ref{subsubsec:vanilla_transformer_implement} we describe the transformer neural operator in the discrete setting, in Subsection \ref{subsubsec:ViT_implementation} the vision transformer neural operator and in Subsection \ref{subsubsec:patch_transformer_implement} the Fourier attention neural operator.

\subsection{Transformer Neural Operator}
\label{subsubsec:vanilla_transformer_implement}

In this section, we describe how the transformer neural operator from Subsection \ref{subsec:vanilla_transformer} is implemented in the discrete setting. The input to the model is therefore a tensor $u\in \R^{N \times d_u}$,
% The input $u\in \R^{N\times d_u}$ 
which
is then concatenated with the discretization points $\{x_i\}_{i=1}^N\subset D$, to form the tensor $u_{\textrm{in}}\in \R^{N \times (d_u+d)}$. The learnable linear transformation $W_{\textrm{in}}^\top\in \R^{(d_u+d) \times d_\textrm{model} }$ then acts on $u_{\textrm{in}}$ to yield
\[
v^{(0)}\coloneqq u_{\textrm{in}}W_{\textrm{in}}^\top,
\]
which
% The resulting tensor $\widehat{u}\in \R^{N \times d_\textrm{model}}$ 
is then fed through the transformer encoder.
In practice we solve the recurrence relation in \eqref{eq:recurrence_enc_def} on a finite set of $N$ grid points that arise naturally from the discrete input $u_\textrm{in}$.
% $\sE_L:\R^\ml{d_\textrm{model} \times N}\to \R^\ml{ d_\textrm{model} \times N}$ as the $\widehat{u}^{(0)}$ in the recurrence relation \eqref{eq:recurrence_enc_def}. 
The multi-head attention operator $\sA_{\textrm{MultiHead}}$ is defined by the application of the linear transformation $W_{\textrm{MultiHead}}^\top\in \R^{Hd_K\times d_{\textrm{model}} }$ to the concatenation of $H\in\N$ self-attention operations. The concatenation of the outputs of the $H\in\N$ self-attention operations results in a $\R^{N\times Hd_K}$ tensor, so that
\begin{equation}
\label{eq:multihead_implement}
    \sA_{\textrm{MultiHead}}(v) \coloneqq  \bigl(\sA(v;Q_1,K_1,V_1),\ldots, \sA(v;Q_H,K_H,V_H)  \bigr) {W}_{\textrm{MultiHead}}^\top \in \R^{N\times d_{\textrm{model}}},
\end{equation}
for any $v\in \R^{N\times d_{\textrm{model}}}$. We recall that the self-attention operator in question, $\sA$, is defined in Definition \ref{d:4} for functions.
In the discrete setting, where $v\in\R^{N\times d_{\textrm{model}}}$, we compute the expectation and the integral arising from the normalization constant using the following trapezoidal approximations:
\begin{equation}
\label{eq:riemann_expect}
    \sA(v)_j :=   \frac12\sum_{k=2}^N \Bigl( V{v_k} \cdot p(k;v,j) + V{v_{k-1}} \cdot p(k-1;v,j) \Bigr)\cdot \Delta x_k,
    \end{equation}
for any $j=1,\ldots,N$, with
\begin{equation}
\label{eq:riemann_p}
    p(k;v,j) \approx \frac{\exp\Bigl(\bigl\langle Q{v_j}, Kv_k \bigr\rangle_{\R^{\dK}} \Bigr)}{\frac12\sum_{\ell=2}^N \Bigl(\exp\bigl(\bigl\langle Q{v_j}, Kv_\ell \bigr\rangle_{\R^{\dK}} \bigr)+\exp\bigl(\bigl\langle Q{v_j}, Kv_{\ell-1} \bigr\rangle_{\R^{\dK}} \bigr)\Bigr)\Delta x_\ell},
\end{equation}
%where for $x \in [x_j, x_{j+1}]$ and an assumed grid ordering,
where by $u_\ell$ we have denoted the $\ell$'th row of the matrix $v\in \R^{N\times d_{\textrm{model}}}$, and similarly for $\sA(v)_\ell$.
%\begin{equation}
%    \label{eq:u_interp}
%    \begin{aligned}
%    u_\textrm{Interp}(x) &:= w u(j) + (1-w)u(j+1)\\
%    w&:= \frac{|x_{j+1} - x|}{|x_{j+1} - x_j|}.
%    \end{aligned}
%\end{equation}
%
% \begin{equation}
% \label{eq:riemann_p}
%     p(x_k;u,\ml{x}) \approx \frac{\exp\Bigl(\bigl\langle Qu(\ml{x}), Ku(x_k) \bigr\rangle_{\R^{d_K}} \Bigr)}{\frac12\sum_{\ell=2}^N \Bigl(\exp\bigl(\bigl\langle Qu(\ml{x}), Ku(x_\ell) \bigr\rangle_{\R^{d_K}} \bigr)+\exp\bigl(\bigl\langle Qu(\ml{x}), Ku(x_{\ell-1}) \bigr\rangle_{\R^{d_K}} \bigr)\Bigr)\Delta x_\ell}.
% \end{equation}
In equations \eqref{eq:riemann_expect} and \eqref{eq:riemann_p} 
% $\Delta x_j$ 
$\Delta x_k$
denotes 
% \[
% \Delta x_j \coloneqq \prod_{k=1}^d |(x_j)_k-(x_{j-1})_k|,
% \]
\[
\Delta x_k \coloneqq \prod_{i=1}^d |(x_k)_i-(x_{k-1})_i|,
\]
for 
% $j=2,\ldots,N$, where $(x_j)_k$ is the $k$'th component of $x_j$. 
$k=2,\ldots,N$, where $(x_k)_i$ is the $i$'th component of the vector $x_k$.
%In practice, we evaluate \eqref{eq:riemann_expect} at the same $N$ discretization points that the input data are defined upon; hence, the interpolation in \eqref{eq:u_interp} reduces to a lookup table of pairs $(x_j, u(j))$.\footnote{One could evolve the recurrence of \eqref{eq:recurrence_enc_def} on a finer grid than the data define, 
%potentially enabling super-resolution of the outputs for a fixed input discretization.}}
This trapezoidal approximation allows for a mathematically consistent implementation of attention for irregularly sampled data, as demonstrated in Section \ref{sec:numerics}. 

The linear transformations $W_1^\top,W_2^\top\in\R^{d_{\textrm{model}}\times d_{\textrm{model}}}$ are applied pointwise. It is also straightforward to apply the definitions for $\sF_\textrm{LayerNorm}$ in \eqref{eq:LN} and $\sF_\textrm{NN}$ in \eqref{eq:NN} to the finite representation of $v$ as both involve pointwise linear transformations.
% }$\sF_{\textrm{LayerNorm}}:\R^{N\times d_{\textrm{model}}}\to \R^{N\times d_{\textrm{model}}}$ applies normalizations as defined in equation \eqref{eq:LN}. 
We observe that the recurrence relation \eqref{eq:recurrence_enc_def} is compatible with varying domain discretization lengths $N$, and will produce an output $v^{(L)}$ of dimension matching the input $v^{(0)}$. The resulting tensor $v^{(L)}\in \R^{N \times  d_{\textrm{model}}}$ is then projected to the output dimension by applying the learnable linear transformation $W_{\textrm{out}}^\top\in \R^{d_{\textrm{model}} \times d_z}$, so that the output of the architecture is defined as 
\[
z\coloneqq v^{(L)}W_{\textrm{out}}^\top \in \R^{d_{\textrm{model}} \times d_z}.
\]
%%%%%%%%%%%%%%%%%%%%%%%%%%%%%%%%%%%%%%%%%%%%%

\subsection{Vision Transformer Neural Operator}
\label{subsubsec:ViT_implementation}

In this section, we describe how the transformer neural operator from Subsection \ref{subsec:patch_transformer} is implemented in the discrete setting. In practice, we will have access to a discretization of the domain $D$ containing $N$ points and evaluations of the function $\su\in \cU(D;\R^{d_u})$ at these points; the resolution of the input will be determined by $n_1\times\ldots\times n_d=N$. The input to the model will therefore be a tensor $u\in \R^{n_1\times\ldots\times n_d\times d_u}$. The input $u\in \R^{n_1\times\ldots\times n_d\times d_u}$ is first concatenated with the discretization points $\{x_i\}_{i=1}^N\subset D$, to form the tensor $u_{\textrm{in}}\in \R^{n_1\times\ldots\times n_d\times (d_u+d)}$. Denoting by $p_1\times\ldots\times p_d$ the resolution of each patch, application of the reshaping operator $\sS_{\textrm{reshape}}$ results in
$$\sS_{\textrm{reshape}}\bigl(u_{\textrm{in}} \bigr)\in \R^{P\times p_1 \times\ldots\times p_d \times (d_u+d)}.$$
Following Definition \ref{def:FourierIntegralOperator_final} and the subsequent discussion, the nonlocal linear operator $\sW_{\textrm{in}}$ is implemented as
\begin{equation}
\label{eq:int_discrete}
    \sW_{\textrm{in}}(\theta){{u}} \coloneqq \cF^{-1}\Bigl(R_{\sW_{\textrm{in}}}(\theta) \cdot (\cF{u}) \Bigr),
\end{equation}
for any $u\in \R^{P\times p_1 \times\ldots\times p_d \times (d_u+d)}$, where we have made explicit the dependence on $\theta \in\Theta$ to highlight the learnable components in the implementation. The FFT is computed on the $p_1\times\ldots\times p_d$ dimensions so that the resulting tensor is of the form $\cF{u}\in\C^{P\times p_1\times\ldots\times p_d\times (d_u+d)}$. Since ${u}$ is convolved with a function that possess only  $\bigl|Z_{k_{\textrm{max}}} \bigr|$ Fourier modes, the higher Fourier modes may be truncated so that one may consider $\cF{u}\in\C^{P\times (2k_{\textrm{max},1}+1)\times\ldots\times (2k_{\textrm{max},d}+1)\times (d_u+d)}$. The learnable tensor is $R_{\sW_{\textrm{in}}}\in\C^{(2k_{\textrm{max},1}+1)\times\ldots\times (2k_{\textrm{max},d}+1)\times d_{\textrm{model}}\times (d_u+d)}$ and the operation in Fourier space is defined as
\begin{equation}
\label{eq:frequency_comp}    \Bigl(R\cdot\bigl(\cF{u}\bigr) \Bigr)_{p,i_1,\ldots,i_d, \ell} = \sum_{k=1}^{d_u+d} R_{p,i_1,\ldots,i_d, \ell,k} \bigl(\cF{u} \bigr)_{p,i_1,\ldots,i_d, k}       ,
\end{equation}
for $p\in\range{1}{P}, \,1\leq i_j\leq 2k_{\textrm{max},j}+1,\, 1\leq \ell\leq d_{\textrm{model}} $. %The inverse Fourier transform is then applied using the inverse FFT along the dimensions $k_{\textrm{max},1},\ldots, k_{\textrm{max},d}$ so that $\sQ_h\widetilde{u}, \sK_h\widetilde{u}, \sV_h\widetilde{u} \in \R^{P\times p_1\times \ldots\times p_d\times d_K}$ for $h=1,\ldots,H$.
The inverse FFT is then applied along the dimensions $2k_{\textrm{max},1}+1,\ldots, 2k_{\textrm{max},d}+1$. The resulting tensor defined as ${v}^{(0)}\in \R^{{P\times p_1\times \ldots\times p_d\times d_{\textrm{model}}}}$ is then fed through the transformer encoder $\sE_L:\R^{{P\times p_1\times \ldots\times p_d\times d_{\textrm{model}}}}\to\R^{{P\times p_1\times \ldots\times p_d\times d_{\textrm{model}}}}$.

The multi-head attention is based on the self-attention operator acting on functions from Definition \ref{def:patch_attention}. While the expectation is computed with respect to a probability measure on the patch sequence, hence inducing a discrete probability mass function, the discretization of the domain introduces the need to approximate the $L^2$  inner product appearing in Definition \ref{def:patch_attention_p}. We resort to an approximation of the integral given by 
\begin{subequations}
\label{eq:MC_approx_L2_1}
\begin{align}
    \big\langle \sQ_h {\sv}(j), \sK_h {\sv}(k) \big\rangle_{L^2(D' ,\mathbb{R}^{\dK})} &= \int_{D'} \bigl(\sQ_h {\sv}(j)\bigr)(x) \cdot \bigl(\sK {\sv}(k)\bigr)(x) \,\mathrm{d}x\\
    &\approx \sum_{x_i\in D'} \Delta x_i\cdot\bigl(\sQ_h {\sv}(j)\bigr)(x_i) \cdot \bigl(\sK_h {\sv}(k)\bigr)(x_i)
\end{align}
\end{subequations}
where we have denoted %$\widetilde{u}_\ell \in \R^{{p_1\times \ldots\times p_d\times d_{\textrm{model}}}}$ the $\ell$'th patch tensor and
by $\Delta x_i$ the volume of the $i$th cell. We recall that in this case the linear operators $\sQ_h,\sK_h,\sV_h$ are pointwise local operators hence they admit finite dimensional representations $Q_h\in \R^{d_K\times d_{\textrm{model}}},K_h\in \R^{d_K\times d_{\textrm{model}}},V_h\in \R^{d_K\times d_{\textrm{model}}}$, for $h=1,\ldots,H$. In the case of uniform patching and uniform discretization of the domain of size $s^d=N$, which is the setting we experiment with, the approximation reduces to the approximation of the integral of the form
\begin{equation}
\label{eq:MC_approx_L2_2}
    \big\langle \sQ_h {\sv}(j), \sK_h {\sv}(k) \big\rangle_{L^2(D' ,\mathbb{R}^{\dK})} \approx \frac{1}{N}  \sum_{i_1,\ldots,i_d=1}^s \sum_{\ell=1}^{d_K} \bigl( {v}Q_h^\top\bigr)_{j\times i_1\times\ldots\times i_d\times \ell}\bigl( {v}K_h^\top\bigr)_{k\times i_1\times\ldots\times i_d\times \ell}.
\end{equation}
The multi-head attention operator is thus defined by the application of the linear transformation $W_{\textrm{MultiHead}}^\top\in \R^{Hd_K\times d_{\textrm{model}} }$ to the concatenation of $H\in\N$ self-attention operations defined by the approximations above. The concatenation of the outputs of the $H\in\N$ self-attention operations results in a $\R^{P\times p_1\times \ldots\times p_d\times Hd_K}$ tensor, so that 
\begin{equation}
\label{eq:multihead_implement_operator}
    \sA_{\textrm{MultiHead}}(v) \coloneqq  \bigl(\sAp(v;{Q}_1,{K}_1,{V}_1),\ldots, \sAp(v;{Q}_H,{K}_H,{V}_H)  \bigr) W_{\textrm{MultiHead}}^\top \in \R^{P\times p_1\times \ldots\times p_d\times d_{\textrm{model}}},
\end{equation}
for any $v\in \R^{P\times p_1\times \ldots\times p_d\times d_{\textrm{model}}}$. The linear transformations $W_1^\top,W_2^\top\in\R^{d_{\textrm{model}}\times d_{\textrm{model}}}$ are applied pointwise. The layer normalization operator is applied as defined in \eqref{eq:LN_patch} to every point in every patch so that $\sF_{\textrm{LayerNorm}}(v)\in \R^{P\times p_1 \times \ldots \times p_d \times d_{\textrm{model}}} $. 

The operator $\sF_{\textrm{NN}}$ is applied so that $\sF_{\textrm{NN}}(v)\in \R^{P\times p_1 \times \ldots \times p_d \times d_{\textrm{model}}}$, for any $v\in \R^{P\times p_1 \times \ldots \times p_d \times d_{\textrm{model}}}$. Indeed, in this finite-dimensional setting $\sF_\textrm{NN}$ is defined as
\begin{equation}
\label{eq:NN_patch}
    \sF_\textrm{NN}(v;W_3,W_4,b_1,b_2) = f\bigl(vW_4^\top+b_1\bigr)W_3^\top + b_2,
\end{equation}
for any $v\in \R^{P\times p_1 \times \ldots \times p_d \times d_{\textrm{model}}}$, where $f$ is a nonlinear activation function $W_3^\top,W_4^\top\in\R^{d_\textrm{model}\times d_\textrm{model}}$ are learnable local linear operators and $b_1,b_2 \in \R^{P\times p_1 \times \ldots \times p_d \times d_{\textrm{model}}}$ are learnable in the last dimension.

After computing the recurrence for $\sE_L$ as described in \Cref{eq:recurrence_enc_def}, a reshaping operator $\sS'_{\textrm{reshape}}$ is applied to the resulting tensor so that 
\begin{equation}
\label{eq:s'r_impl}
    v \mapsto \sS'_{\textrm{reshape}} (v) \in \R^{n_1 \times \ldots \times n_d \times d_{\textrm{model}} },
\end{equation}
for any $v\in \R^{{P\times p_1\times \ldots\times p_d\times d_{\textrm{model}}}}$. A pointwise linear transformation $W_{\textrm{out}}^\top\in \R^{ d_{\textrm{model}}\times d_z}$ is then applied according to 
\begin{equation}
\label{eq:w_out_impl}
v \mapsto vW_{\textrm{out}}^\top,
\end{equation}
for any $v\in\R^{n_1 \times \ldots \times n_d \times d_{\textrm{model}} }$.

%%%%%%%%%%%%%%%%%%%%%%%%%%%%%%%%%%%%%%%%%%%%%%

\subsection{Fourier Attention Neural Operator}
\label{subsubsec:patch_transformer_implement}
In this section, we describe how the transformer neural operator from Subsection \ref{subsec:fourier_patch_transformer} is implemented in the discrete setting. In practice, we consider a discretization of the domain $D$ containing $N$ points and evaluations of the function $u\in\cU\bigl(D;\R^{d_u} \bigr)$ at these points; the resolution of the input is defined by $n_1\times\ldots\times n_d=N$. The input to the model is hence a tensor $u\in \R^{n_1\times\ldots\times n_d\times d_u}$. The input is concatenated with the discretization points $\{x_i\}_{i=1}^N\subset D$, to form the tensor $u_{\textrm{in}}\in \R^{n_1\times\ldots\times n_d\times (d_u+d)}$. Denoting by $p_1\times\ldots\times p_d$ the resolution of each patch, application of the reshaping operator $\sS_{\textrm{reshape}}$ results in $$\sS_{\textrm{reshape}}\bigl(u_{\textrm{in}} \bigr)\in \R^{P\times p_1 \times\ldots\times p_d \times (d_u+d)}.$$
The learnable linear transformation $W_{\textrm{in}}^\top\in \R^{(d_u+d)\times d_\textrm{model}}$ is then applied as a map
$$\sS_{\textrm{reshape}}\bigl(u_{\textrm{in}} \bigr)\mapsto \sS_{\textrm{reshape}}\bigl(u_{\textrm{in}} \bigr)W_{\textrm{in}}^\top \in \R^{P\times p_1 \times\ldots\times p_d \times d_{\textrm{model}}}.$$
The resulting tensor $\widetilde{u}^{(0)}$ is then fed through the transformer encoder $$\sE_L:\R^{{P\times p_1\times \ldots\times p_d\times d_{\textrm{model}}}}\to\R^{{P\times p_1\times \ldots\times p_d\times d_{\textrm{model}}}}.$$ 
In the recurrence relation that defines the encoder, $\sA_{\textrm{MultiHead}}$ is the multi-head attention operator as defined in equation \eqref{eq:multihead1}. In this setting, letting $H\in \N$ denote the number of attention heads, the multi-head attention operator is parametrized by learnable linear integral operators $\sQ_h:\R^{p_1\times\ldots\times p_d\times d_\textrm{model}}\to \R^{p_1\times\ldots\times p_d\times d_K}$, $\sK_h: \R^{p_1\times\ldots\times p_d\times d_\textrm{model}}\to \R^{ p_1\times\ldots\times p_d\times d_K}$, and $\sV_h: \R^{p_1\times\ldots\times p_d\times d_\textrm{model}}\to \R^{ p_1\times\ldots\times p_d\times d_K}$ for $h=1,\ldots, H$. Following Definition \ref{def:FourierIntegralOperator_final} and the subsequent discussion we implement the linear integral operators as 
\begin{subequations}
\label{eq:QKV_operator_implement}
    \begin{align}
        \sQ_h(\theta){{v}} &= \cF^{-1}\Bigl(R_{\sQ_h}(\theta) \cdot (\cF{v}) \Bigr),\\
        \sK_h(\theta){v} &= \cF^{-1}\Bigl(R_{\sK_h}(\theta) \cdot (\cF{v}) \Bigr),\\
        \sV_h(\theta){v}  &= \cF^{-1}\Bigl(R_{\sV_h}(\theta) \cdot (\cF{v}) \Bigr),
    \end{align}
\end{subequations}
for $h=1,\ldots,H$ and any $v\in\R^{p_1\times\ldots\times p_d\times d_\textrm{model}}$, where we have made explicit the dependence on $\theta \in\Theta$ to highlight the learnable components in the implementation. In \eqref{eq:QKV_operator_implement} the FFT is computed on the $p_1\times\ldots\times p_d$ so that the resulting tensor $\cF{v}\in\C^{ p_1\times\ldots\times p_d\times d_{\textrm{model}}}$. Since ${v}$ is a function that possess only $\bigl|Z_{k_{\textrm{max}}} \bigr|$ Fourier modes, the higher Fourier modes may be truncated so that one may consider $\cF{v}\in\C^{ (2k_{\textrm{max},1}+1)\times\ldots\times (2k_{\textrm{max},d}+1)\times d_{\textrm{model}}}$. The learnable tensors are of dimension $R_{\sQ_h},R_{\sK_h},R_{\sV_h}\in\C^{(2k_{\textrm{max},1}+1)\times\ldots\times (2k_{\textrm{max},d}+1)\times d_{K}\times d_{\textrm{model}}}$ for $h=1,\ldots,H$, and the operation in Fourier space is defined as
\begin{equation}
    \Bigl(R\cdot\bigl(\cF{v}\bigr) \Bigr)_{i_1,\ldots,i_d, \ell} = \sum_{k=1}^{d_{\textrm{model}}} R_{i_1,\ldots,i_d, \ell,k} \bigl(\cF{v} \bigr)_{i_1,\ldots,i_d, k}       ,
\end{equation}
for any $v\in \R^{p_1\times\ldots\times p_d\times d_\textrm{model}}$ and $\,1\leq i_j\leq 2k_{\textrm{max},j}+1,\, 1\leq \ell\leq d_K $. The inverse FFT is computed along the dimensions $2k_{\textrm{max},1}+1,\ldots, 2k_{\textrm{max},d}+1$ so that $\sQ_h{v}, \sK_h{v}, \sV_h{v} \in \R^{ p_1\times \ldots\times p_d\times d_K}$ for $h=1,\ldots,H$. We recall that in this case the self-attention operator $\sA$ is defined according to Definition \ref{def:patch_attention}. While the expectation is computed with respect to a probability measure on the patch sequence, hence inducing a discrete probability mass function, the discretization of the domain introduces the need to approximate the $L^2$  inner product appearing in Definition \ref{def:patch_attention_p}. We resort to an approximation of the integral given by \eqref{eq:MC_approx_L2_1} and \eqref{eq:MC_approx_L2_2}. The multi-head attention operator is thus defined by the application of the linear transformation $W_{\textrm{MultiHead}}^\top\in \R^{Hd_K\times d_{\textrm{model}} }$ to the concatenation of $H\in\N$ self-attention operations. The concatenation of the outputs of the $H\in\N$ self-attention operations results in a $\R^{P\times p_1\times \ldots\times p_d\times Hd_K}$ tensor, so that 
\begin{equation}
\label{eq:multihead_implement_operator1}
    \sA_{\textrm{MultiHead}}(v) \coloneqq  \bigl(\sAp(v;\sQ_1,\sK_1,\sV_1),\ldots, \sAp(v;\sQ_H,\sK_H,\sV_H)  \bigr) W_{\textrm{MultiHead}}^\top \in \R^{P\times p_1\times \ldots\times p_d\times d_{\textrm{model}}},
\end{equation}
for any $v\in \R^{P\times p_1\times \ldots\times p_d\times d_{\textrm{model}}}$. The linear transformations $W_1^\top,W_2^\top\in\R^{d_{\textrm{model}}\times d_{\textrm{model}}}$ are applied pointwise. The layer normalization operator is applied as defined in \eqref{eq:LN} to every point in every patch so that $\sF_{\textrm{LayerNorm}}(v)\in \R^{P\times p_1 \times \ldots \times p_d \times d_{\textrm{model}}} $. Finally, the operator $\sF_{\textrm{NN}}$ is parametrized and applied as in \eqref{eq:NN_patch} so that $\sF_{\textrm{NN}}(v)\in \R^{P\times p_1 \times \ldots \times p_d \times d_{\textrm{model}}}$, for any $v\in \R^{P\times p_1 \times \ldots \times p_d \times d_{\textrm{model}}}$.

A reshaping operator $\sS'_{\textrm{reshape}}$ to the tensor resulting from the transformer encoder iteration as in \Cref{eq:s'r_impl}. A pointwise linear transformation $W_{\textrm{out}}^\top\in \R^{ d_{\textrm{model}}\times d_z}$ is then applied as in \Cref{eq:w_out_impl}.

\section{Complexity of the Transformer Neural Operators}
\label{Appendix:complexity}

We summarize the expressions for the parameter complexity and evaluation cost (measured in floating point operations, FLOPS) of the neural operators from Section \ref{sec:transformers_operator} and the Fourier neural operator (FNO) from \citet{li2021fourier}. These are provided in Tables \ref{tab:parameter_complexity1} and \ref{tab:evaluation_cost1}, respectively. In Subsection \ref{subsec:param_complex} and \ref{subsec:cost_complex} we provide further details for the computations of the parameter and evaluation cost complexities, respectively.

\subsection{Parameter Complexity}
\label{subsec:param_complex}

\begin{table}[htbp]
    \centering
    \begin{tabular}{|l|c|}
        \hline
        \cellcolor{gray!30}\textbf{Architecture} & \cellcolor{gray!30}\textbf{Parameter Complexity}  \\
        \hhline{|==|}
        \textbf{FNO} & $(d_u+d)\cdot d_{\textrm{FNO}}+2d_{\textrm{FNO}}\cdot  d_{\textrm{model}}+d_{\textrm{FNO}}\cdot d_z +4\cdot\bigl(d^2_{\textrm{model}}\cdot k_{\textrm{max}}+d_{\textrm{model}}^2 \bigr)$\\
        \hline
        \textbf{AFNO} & $(d_u+d)\cdot d_{\textrm{model}}+Nd_{\textrm{model}}+d_{\textrm{model}}d_z+4\cdot(8+4/b)\cdot d^2_{\textrm{model}}$\\
        \hline
        \textbf{Transformer NO}&  $(d_u+d)\cdot d_{\textrm{model}}+d_{\textrm{model}}\cdot d_z +6\cdot\bigl(6d^2_{\textrm{model}}+2d_{\textrm{model}} \bigr)$\\
        \hline
        \textbf{ViT NO}& $(d_u+d)\cdot d_{\textrm{model}}\cdot k_{\textrm{max}} + d_{\textrm{model}}\cdot d_z + 6\cdot\bigl( 6d^2_{\textrm{model}}+2d_{\textrm{model}}\bigr)$ \\
        \hline
        \textbf{FANO}& $(d_u+d)\cdot d_{\textrm{model}}+d_{\textrm{model}}\cdot d_z + 6\cdot \bigl(  3d^2_{\textrm{model}}\cdot k_{\textrm{max}}+ 3d^2_{\textrm{model}}+2d_{\textrm{model}}\bigr)$ \\
        \hline
    \end{tabular}
    \caption{Parameter complexity for the three proposed transformer neural operators, as implemented in practice, compared to the Fourier neural operator \citep{li2021fourier}. We note that for our implementations, $d_{\textrm{FNO}}$, which is the latent dimension of the two-layer lifting and projection MLP in FNO, is fixed to be $128$, while the hyperparameter $b$ in the AFNO is fixed to be $8$. Parameters arising from possible bias terms, parameters arising from skip connections in FNO and parameters arising from normalizations in AFNO are omitted for ease of presentation.}
    \label{tab:parameter_complexity1}
\end{table}

\subsection{Evaluation Cost}
\label{subsec:cost_complex}

We note that a matrix vector product $Wu$ for $W\in \R^{r\times r'}$ has cost $2rr'$. We approximate the cost of computing the FFT of a vector in $\R^r$ as $5r\log r$ \citep{hoop2022cost}. We approximate the cost of the two-dimensional real FFT, used in implementation, by $(15/2)\cdot N \log (\sqrt{N})$ where $N$ is the total number of grid points. Assuming that the grid is of size $s\times s=N$ for $s$ an even integer, 1d FFTs are first computed for each row. The real FFT requires $s/2$ column 1d FFTs, hence the $(15/2) \cdot  N \log \sqrt{N}$ complexity.

\begin{table}[htbp]
    \centering
    \begin{tabular}{|l|c|c|}
        \hline
        \cellcolor{gray!30}\textbf{Architecture} & \cellcolor{gray!30}\textbf{Evaluation Cost}  \\
        \hhline{|===|}
        \textbf{FNO} & $\begin{aligned}
            &2N(d_u+d+2d_{\textrm{model}})d_{\textrm{FNO}} +2Nd_{\textrm{FNO}}d_z \\
            &+4\bigl(15d_{\textrm{model}}N\log(\sqrt{N})+k_{\textrm{max}}\cdot(2d^2_{\textrm{model}}-d_{\textrm{model}})\\
            &+2Nd^2_{\textrm{model}}\bigr)
            \end{aligned}$\\
        \hline
        \textbf{Transformer NO}&  $\begin{aligned}
            &2N(d_u+d)d_{\textrm{model}} +2Nd_{\textrm{model}}d_z +6\bigl(8d^2_{\textrm{model}}N \\&+2N^2d_{\textrm{model}}
            +4d^2_{\textrm{model}}N\bigl)
            \end{aligned}$ \\
        \hline
        \textbf{ViT NO}& $ \begin{aligned}
            &(d_u+d+d_{\textrm{model}})\cdot 15N\log (\sqrt{N/P})/2\\
            &+Pk_{\textrm{max}}d_{\textrm{model}}\bigl(2(d_u+d)-1 \bigr)+2Nd_{\textrm{model}}d_z\\
            &+6\Bigl( 8d^2_{\textrm{model}}N+2d_{\textrm{model}}NP+4d^2_{\textrm{model}}N\Bigr)\\
            %&+15d_zN\log(\sqrt{N})+k'_{\textrm{max}}(2d^2_z-d_z)
        \end{aligned}$
        \\
        \hline
        \textbf{FANO}& $ \begin{aligned}
            &2N(d_u+d)d_{\textrm{model}} +2Nd_{\textrm{model}}d_z\\
            &+6\Bigl( 45d_{\textrm{model}}N\log(\sqrt{N/P})+3k_{\textrm{max}}P\bigl(2d^2_{\textrm{model}}-d_{\textrm{model}} \bigr)\\
            &+2Nd^2_{\textrm{model}}+2d_{\textrm{model}}NP +4Nd^2_{\textrm{model}}\Bigr)\\
            %&+15d_zN\log(\sqrt{N})+k'_{\textrm{max}}(2d^2_z-d_z)
        \end{aligned}$ \\
        \hline
        \textbf{AFNO}& $ \begin{aligned}
            &2N(d_u+d)d_{\textrm{model}} +2Nd_{\textrm{model}}d_z\\
            &+4\bigl(16d_{\textrm{model}}^2 + 15Nd_{\textrm{model}}\log(\sqrt{N}) + 6N(2d_{\textrm{model}}^2/b - d_{\textrm{model}}) \bigr)\\
            %&+15d_zN\log(\sqrt{N})+k'_{\textrm{max}}(2d^2_z-d_z)
        \end{aligned}$ \\
        \hline
    \end{tabular}
    \caption{Evaluation cost (in FLOPS) for the proposed transformer neural operators, as implemented in practice, compared to FNO \citep{li2021fourier}. As done in \citet{Kaplan2020ScalingLF} and for ease of presentation, the additional evaluation cost arising from nonlinear activations, the softmax operator and normalizations are omitted. Furthermore, we omit the cost arising from skip connections. Indeed these do not affect the scaling. We further note that for our implementations, $d_{\textrm{FNO}}$, which is the latent dimension of the two-layer lifting and projection MLP in FNO, is fixed to be $128$. In this table we present the expression for the evaluation cost of AFNO without consideration of possible patching, which we do not employ for our comparisons: this in combination with the fact that we do not employ mode truncation is the reason for the $Nd_{\textrm{model}}^2$ term. In our context, the parameter $b$ in AFNO is set to $b=8$.}
    \label{tab:evaluation_cost1}
\end{table}

\bibliography{references}

\end{document}